\newcommand{\paren}[1]{\left(#1\right)}
\newcommand{\norm}[1]{\left\|#1\right\|}
\newcommand{\R}{\mathbb{R}}
\newcommand{\Z}{\mathbb{Z}}
\def\bfA{{\mathbf{A}}}
\def\bfD{{\mathbf{D}}}
\def\bfH{{\mathbf{H}}}
\def\bfI{{\mathbf{I}}}
\def\bfL{{\mathbf{L}}}
\def\bfM{{\mathbf{M}}}
\def\bfQ{{\mathbf{Q}}}
\def\bfU{{\mathbf{U}}}
\def\bfV{{\mathbf{V}}}
\def\bfa{{\mathbf{a}}}
\def\bfb{{\mathbf{b}}}
\def\bfu{{\mathbf{u}}}
\def\bfv{{\mathbf{v}}}
\def\bfx{{\mathbf{x}}}
\def\bfz{{\mathbf{z}}}
\newcommand{\pca}[1]{\texttt{PCA}\paren{#1}}
\newcommand{\newreptheorem}[2]{%
\newenvironment{rep#1}[1]{%
 \def\rep@title{#2 \ref{##1}}%
 \begin{rep@theorem}}%
 {\end{rep@theorem}}}
\theoremstyle{plain}
\newtheorem{theorem}{Theorem}[section]
\newtheorem{lemma}[theorem]{Lemma}
\newtheorem{corollary}[theorem]{Corollary}
\theoremstyle{definition}
\newtheorem{assumption}[theorem]{Assumption}
\theoremstyle{remark}
\icmltitlerunning{On the Error-Propagation of Inexact Deflation for PCA}
\begin{document}

\twocolumn[
\icmltitle{On the Error-Propagation of Inexact Hotelling's Deflation \\ for Principal Component Analysis}



\icmlsetsymbol{equal}{*}

\begin{icmlauthorlist}
\icmlauthor{Fangshuo Liao}{ricecs}
\icmlauthor{Junhyung Lyle Kim}{ricecs}
\icmlauthor{Cruz Barnum}{uiuc}
\icmlauthor{Anastasios Kyrillidis}{ricecs}
\end{icmlauthorlist}

\icmlaffiliation{ricecs}{Department of Computer Science, Rice University, Houston, U.S.A.}
\icmlaffiliation{uiuc}{Department of Computer Science, UIUC}

\icmlcorrespondingauthor{Fangshuo Liao}{Fangshuo.Liao@rice.edu}

\icmlkeywords{Principal Component Analysis; Inexact Deflation; Power Iteration.}

\vskip 0.3in
]


\printAffiliationsAndNotice{\icmlEqualContribution} 

\begin{abstract}
Principal Component Analysis (PCA) aims to find subspaces spanned by the so-called \textit{principal components} that best represent the variance in the dataset. 
The deflation method is a popular meta-algorithm that sequentially finds individual principal components, starting from the most important ones and working towards the less important ones.
However, as deflation proceeds, numerical errors from the imprecise estimation of principal components propagate due to its sequential nature. This paper mathematically characterizes the error propagation of the inexact Hotelling's deflation method.
We consider two scenarios: $i)$ when the sub-routine for finding the leading eigenvector is abstract and can represent various algorithms; 
and $ii)$ when power iteration is used as the sub-routine. 
In the latter case, the additional directional information from power iteration allows us to obtain a tighter error bound than the sub-routine agnostic case.
For both scenarios, we explicitly characterize how the errors progress and affect subsequent principal component estimations.
\end{abstract}

\section{Introduction}
Principal Component Analysis (PCA) \citep{pearson1901liii, hotelling1933analysis} is a fundamental tool for data analysis with applications that range from statistics to machine learning and can be used for dimensionality reduction, classification, and clustering, to name a few \citep{majumdar2009image,wang2013sparse, d2007direct,jiang2011anomaly,zou2006sparse}.
Since its proposition \citet{pearson1901liii}, many algorithms for PCA have been devised, even recently \citep{allenzhu2017efficient}, to efficiently find approximations of the first principal component, which is a vector that captures the most variance in the data. Mathematically, single-component PCA can be described as:
\begin{equation}
    \vspace{-0.2cm}
    \bfu_1^* = \texttt{PCA}(\boldsymbol{\Sigma}) = 
    \underset{\substack{ \mathbf{v} \in  \mathbb{R}^d: \|\mathbf{v}\|_2 = 1 \\} }{\arg\max}
    \mathbf{v}^{\top} \boldsymbol{\Sigma} \mathbf{v},
    \label{eq:principal-component}
    \vspace{-0.1cm}
\end{equation}
where $\bm{\Sigma} = \tfrac{1}{n} \cdot \mathbf{Y}\mathbf{Y}^{\top} \in \mathbb{R}^{d \times d}$ is the empirical covariance matrix given a set of $n$ centered $d$-dimensional data-points~${\mathbf{Y} \in \mathbb{R}^{d \times n}}$. 
\begin{figure}
    \centering
    \includegraphics[width=0.75\linewidth]{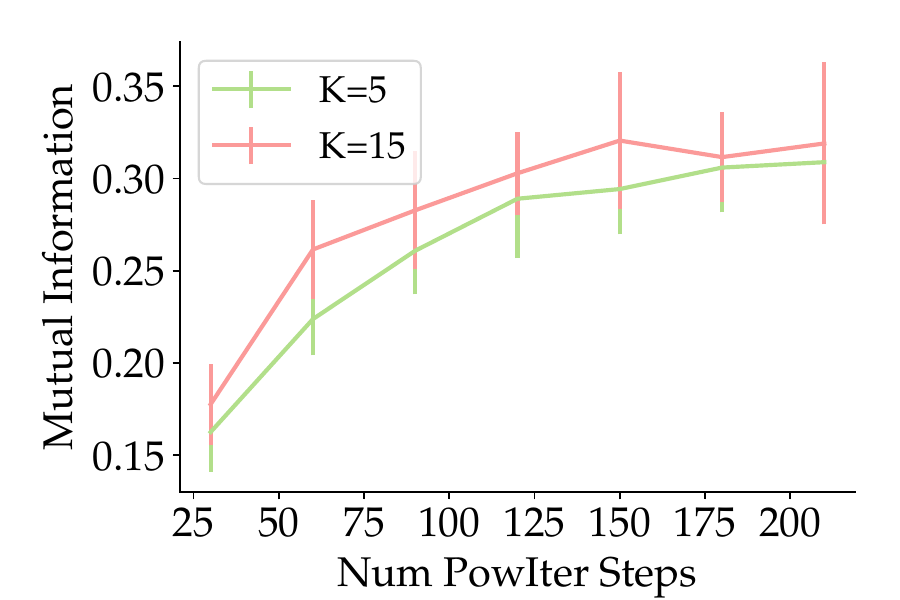}
    \caption{\textit{Spectral clustering of MNIST dataset using inexact deflation method in Algorithm~\ref{alg:main-alg}}. As the number of power iteration steps increases ($x$-axis), clustering performance, measured by the mutual information metric, also improves. A similar pattern is observed for recovering different numbers of eigenvectors. 
    }
    \label{fig:dfl_spec_clst}
    \vspace{-0.5cm}
\end{figure}

It is sometimes helpful to know not only the first but also the top-$K$ principal components, especially in the high-dimensional regime. 
This is becoming more relevant in the modern machine learning era, where the number of features reaches as large as billions \citep{fedus2022switch, touvron2023llama}. 
Mathematically, the multi-component PCA ($K \geq 1$) can be formulated as the following constrained optimization problem: 
\begin{equation}
    \vspace{-0.2cm}
    \bfU^* = \underset{\bfV\in\{\bfQ_{:,:K}:\;\bfQ\in\text{SO}(d)\}}{\arg\max}
    \left\langle \boldsymbol{\Sigma} \mathbf{V}, \mathbf{V} \right \rangle,
    \label{eq:multi-principal-component}
    \vspace{-0.1cm}
\end{equation}
where $\text{SO}(d)$ denotes the group of rotations about a fixed point in $d$-dimensional Euclidean space. 
Since $\bfU^*$ contains the first $K$ columns of an orthogonal matrix, each principal component $\bfU_{:,k}$ must be a unit vector, and the pairwise principal components, say $\bfU^*_{:,k_1}, \bfU^*_{:,k_2}$ for $k_1\neq k_2$, are orthogonal. 
A popular method for solving \eqref{eq:multi-principal-component} is the \textit{deflation method} \citep{hotelling1933analysis} \textit{via sequentially solving} \eqref{eq:principal-component}, which is the focus of our work. 

Deflation proceeds as follows. 
After approximating the top component $\mathbf{u}_1^*$, the matrix $\bm{\Sigma}$ is further processed to exist on the subspace that is orthogonal to the subspace which is spanned by the first component. This process is repeated by applying single-component PCA in \eqref{eq:principal-component} on the deflated matrix, which leads to an approximation of the second component $\mathbf{u}_2^*$, and so on, as below:
\begin{equation}
    \label{eq:deflation}
    \begin{gathered}
    \bm{\Sigma}_1 = \bm{\Sigma};\quad \bfv_k = \pca{\bm{\Sigma}_k, t};\\
    \bm{\Sigma}_{k+1} = \bm{\Sigma}_k - \bfv_k\bfv_k^\top\bm{\Sigma}_k\bfv_k\bfv_k^\top,
    \end{gathered}
\end{equation}
where $\pca{\bm{\Sigma}_k, \cdot}$ returns a normalized approximation of the top eigenvector of the deflated matrix $\bm{\Sigma}_k$. 
One estimates the subsequent principal components by running the same single-component PCA algorithm repetitively.

In this work, we focus on the more realistic scenario where the sub-routine, $\pca{\bm{\Sigma}_k, \cdot}$, incurs numerical errors due to limited computation budget and finite precision: even solving  \eqref{eq:principal-component} for a single component, our estimate is only an approximation to the true principal component. 
The overall procedure with inexact $\pca{\bm{\Sigma}_k, \cdot}$ sub-routine is described in Algorithm~\ref{alg:main-alg}. 
Line 4 performs the sub-routine algorithm for $t$ iterations, denoted by $\texttt{PCA}(\bm{\Sigma}_k,t)$, which approximately computes the top eigenvector of $\bm{\Sigma}_k$. 
In line 5, $\bm{\Sigma}_k$ is deflated using $\bfv_k$; hence, the numerical error from approximately solving $\bfv_k$ in line 4 affects the quality of the deflated matrix $\bm{\Sigma}_{k+1}$ in the next iteration. 

\begin{algorithm}[tb]
   \caption{Deflation with inexact \texttt{PCA}}
   \label{alg:main-alg}
\begin{algorithmic}[1]
        \REQUIRE $\bm{\Sigma}\in\R^{d\times d}$; \# of eigenvectors $K$; sub-routine for top eigenvector $\texttt{PCA}(\cdot,\cdot)$; \# of iterations $t$.
        \ENSURE Approximate eigenvectors $\bfV = \left\{\bfv_k\right\}_{k=1}^K$.\\
        \hrulefill\\
        \STATE $\bfV \leftarrow \emptyset$
        \STATE $\bm{\Sigma}_1 \leftarrow \bm{\Sigma}$
        \FOR{$k =1,\dots, K$}
            \STATE $\bfv_k \leftarrow \texttt{PCA}\paren{\bm{\Sigma}_k, t}$
            \STATE $\bm{\Sigma}_{k+1} \leftarrow \bm{\Sigma}_k - \bfv_k\bfv_k^\top\bm{\Sigma}_k\bfv_k\bfv_k^\top$
            \STATE $\bfV \leftarrow \bfV \cup\left\{\bfv_k\right\}$
        \ENDFOR
        \STATE\textbf{return} $\bfV$
\end{algorithmic} 
\end{algorithm}

To motivate why studying the accumulation of errors is critical, we apply spectral clustering \cite{pothen1990partitioning} on the MNIST dataset and measure its performance using the mutual information metric\footnote{Higher mutual information indicates more accurate clustering. For more details about the experiment, please see Appendix~\ref{sec:exp_details}.}. 
Spectral clustering involves computing the top-$K$ eigenvectors from a similarity matrix and uses the corresponding entries in the $K$ eigenvectors as features for the following clustering procedure. 
In our experiments, eigenvectors are computed using the deflation in Algorithm~\ref{alg:main-alg} with different precision levels of the $\texttt{PCA}$ sub-routine, indicated by different power iteration steps. 
We can observe from Figure~\ref{fig:dfl_spec_clst} that, as the number of power iteration steps increases, the mutual information increases, showing a higher clustering accuracy. 
This shows that the errors incurred from inexactly solving each top eigenvector in the deflation process indeed have a non-negligible impact on the downstream machine learning tasks.

This interdependent accumulation of errors coming from approximately solving the sub-routine --which results in the less accurate deflation matrix, and further affects the quality of the approximation of the top eigenvector in the next iteration-- is precisely what we characterize in this work.
To the best of our knowledge, \textit{this is the first work that analyzes the inexact setting, although this procedure is vastly used in practice.} 
Our contributions can be summarized as follows: \vspace{-0.2cm}
\begin{itemize}[leftmargin=*]
    \item In Theorem~\ref{thm:main_theorem_1}, we characterize how the errors from approximately solving the sub-routine (\texttt{PCA}) propagate into the overall error of the deflation procedure. Informally, the error between the approximate principal component and the actual one needs to be exponentially small to control the accuracy for the top-$K$ eigenvectors for an increasing $K$, where the base of the exponential growth depends on the eigengap of the target matrix. \vspace{-0.15cm}
    \item In Theorem~\ref{thm:power-iteration}, we consider the case where $\texttt{PCA}$ is fixed to the power iteration \citep{muntz1913solution}. By leveraging the directional information of the numerical error vector produced by each power iteration, we improve the error bound in Theorem~\ref{thm:main_theorem_1} (which is agnostic to the $\texttt{PCA}$ sub-routine) to exponential growth with constant base. \vspace{-0.15cm}
    \item Along with the proof, we also present empirical results to justify the core ideas in our theoretical analysis. \vspace{-0.15cm}
\end{itemize}

\textbf{Related work.}
Most works focus on the convergence proof of numerically solving the top eigenvector of matrices \citep{1950AnIM,jain2016streaming,xu2018accelerated}. 
However, these results only concern the case of the first eigenvector, and their result does not readily apply to numerical algorithms, such as deflation, that solve for multiple top eigenvectors.

Iterative methods for multi-component PCA include \textit{Schur complement deflation} \citep{zhang2006schur}, \textit{orthogonalized deflation} \citep{sriperumbudur2007sparse}, and \textit{projection methods} \citep{saad1988projection}. 
Among them, Hotelling's deflation, our focus, was reported to have the highest accuracy in practice \citep{danisman2014comparison}. 
For the comparison of these methods, we refer the readers to \citet[Section~2]{mackey2008deflation}, which extends some of the above deflation methods to the sparse PCA setting \citep{zou2006sparse, johnstone2009sparse}, which is out of scope of our work. Importantly, all the works mentioned above assume the $\texttt{PCA}$ step in \eqref{eq:deflation} is solved exactly, i.e., $t=\infty$. Works that analyze the resulting error when solving multiple top eigenvectors include \citet{allenzhu2017efficient,gemp2021eigengame}. However, the theoretical analysis in \citet{allenzhu2017efficient} and \citet{gemp2021eigengame} only applies to specific algorithms such as EigenGame and Oja++. 
At the same time, our work focuses on deflation, a widely used algorithmic framework for solving multiple eigenvectors.

Subspace methods such as orthogonal iterations \citep{golub2013matrix} are also widely used for discovering multiple eigenvectors. However, the deflation serves a different purpose than subspace methods. In particular, the sequential discovery of the eigenvectors in the deflation methods applies to scenarios where users have a dynamic necessity of the eigenvectors and require flexibility, especially in modern ML settings \citep{Fawzi2022,coulaud2013deflation}. For instance, compared with orthogonal iteration, which outputs all $k$ eigenvectors at the same time, the deflation method will be more suitable for scenarios where the top principal component is known but more eigenvectors need to be computed depending on the need, and scenarios where eigenvectors are output in a consecutive way to facilitate downstream tasks.

In terms of the analysis technique, our work utilizes Weyl's inequality \citep{weyl1912asymptotische} and Davis-Kahan $\sin\Theta$ Theorem \citep{davis1970rotation} to characterize the differences between eigenvalues of two matrices. In our fine-grained analysis in Section~\ref{sec:main_result_pi}, we also utilize the Neumann expansion introduced by \citet{eldridge2017unperturbed} and characterize the subspace that the top eigenvector of a perturbed matrix lies in with a similar technique to \citet{chen2020asymmetry}. However, the problem they study is entirely orthogonal to ours.

\section{Problem Setup and Notations}
\textbf{Notation.} For a vector $\bfa$, we use $\norm{\bfa}_2$ to denote its $\ell_2$-norm. For a matrix $\bfA$, $\norm{\bfA}_2$ to denote its spectral norm and $\norm{\bfA}_F$ its Frobenius norm. $\lambda_j\paren{\bfA}$ denotes its $j$-th eigenvalue.

\textbf{Problem setup.}
Given a symmetric matrix $\bm{\Sigma}\in\R^{d\times d}$ with eigenvectors $\bfu_1^*,\dots\bfu_d^*$ and eigenvalues $\lambda_1^*,\dots,\lambda_d^*$, sorted in descending order (c.f., Assumption~\ref{assu:main-assumption}), we want to estimate the top $K \leq d$ eigenvectors using the deflation method. 
Formally, let $\texttt{PCA}$ be a sub-routine that returns an approximation of the top eigenvector of a given matrix. 
The deflation method repeatedly computes approximation $\bfv_k$ of the principal eigenvector of the current deflation matrix $\bm{\Sigma}_k$. 
It forms a new deflation matrix $\bm{\Sigma}_{k+1}$ by subtracting the component of $\bfv_k$ from $\bm{\Sigma}_k$. 
This process is made formal in Algorithm~\ref{alg:main-alg} and is mathematically defined in (\ref{eq:deflation}). 
If the sub-routine $\texttt{PCA}$ is solved precisely, then the output satisfies $\bfv_k = \bfu_k^*$. 
In this ideal scenario, the sequence of deflation matrices has the following form:
\begin{equation}
    \label{eq:true-deflation-mats}
     \bm{\Sigma}^*_{k+1} = \bm{\Sigma}^*_k - \bfu_k^*\bfu_k^{*\top}\bm{\Sigma}^*_k\bfu_k^*\bfu_k^{*\top};\quad\bm{\Sigma}^*_1 = \bm{\Sigma}.
\end{equation}
We term the scenario in (\ref{eq:true-deflation-mats}) as \textit{ideal deflation} and the matrices $\left\{\bm{\Sigma}_k^*\right\}_{k=1}^d$ as \textit{``ground-truth'' deflation matrices}. 
The ideal deflation has the nice property that $\bfu_k^*$ is the top eigenvector of $\bm{\Sigma}_k^*$. 
Therefore, solving for the top-$K$ eigenvectors can be exactly reduced to repeatedly solving for the top-$1$ eigenvector of the ``ground-truth'' deflation matrices. 

Yet, solving for the top eigenvector exactly is challenging, which makes the ideal deflation almost impossible in practice. 
Thus, we are interested in a more general case, where $\texttt{PCA}$ computes the top-eigenvector \textit{inexactly}, and results in numerical error $\bm{\delta}_k$, as defined below:
\begin{equation} 
    \label{eq:def-delta-v}
    \begin{gathered}
        \bm{\delta}_k := \pca{\bm{\Sigma}_k, t} - \bfu_{k} = \bfv_k - \bfu_k;~~ \norm{\bm{\delta}_k}_2 > 0.
    \end{gathered}
\end{equation}
Here, $\bfu_k$ is the top eigenvector of $\bm{\Sigma}_k$; recall that $\bm{\Sigma}_k$ in (\ref{eq:deflation}) is constructed recursively using $\bfv_k$. 
When $\bfv_k$ is solved inexactly, we cannot guarantee that $\bm{\Sigma}_k = \bm{\Sigma}_k^*$. 
Consequently, it is almost always the case that the top eigenvector of $\bm{\Sigma}_k$ does not equal to $\bfu_k^*$, namely $\norm{\bfu_k - \bfu_k^*}_2 > 0$. 
However, since $\texttt{PCA}$ only knows $\bm{\Sigma}_k$, its output $\bfv_k$ converges to $\bfu_k$ instead of $\bfu_k^*$ when $t$ is large. 
This difference between the top eigenvector of the matrices $\bm{\Sigma}_k$ and $\bm{\Sigma}_k^*$ further complicates the analysis of the deflation process in (\ref{eq:deflation}), since $\bfv_k$ returned by $\texttt{PCA}\paren{\bm{\Sigma}_k,t}$ approximates $\bfu_k$ instead of $\bfu_k^*$, which further builds up the difference between $\bfv_k$ and $\bfu_k^*$ and thus the subsequent $\norm{\bm{\Sigma}_{k+1} - \bm{\Sigma}_{k+1}^*}_F$. 

Our work aims to characterize this complicated error propagation through the deflation steps. 
To build up our analysis, we make the following assumption on the matrix $\bm{\Sigma}$.
\begin{assumption} \label{assu:main-assumption}
    The matrix $\bm{\Sigma}\in\R^{d\times d}$ is a real symmetric matrix with eigenvalues and eigenvectors $\{\lambda_i^*,\bfu_i^*\}_{i=1}^d$, satisfying $1 = \lambda_1^* > \lambda_2^* > \dots > \lambda_d^* > 0$.
\end{assumption}
Assumption~\ref{assu:main-assumption} states the real symmetric matrix $\bm{\Sigma}$ has strictly decaying eigenvalues. 
Observing such phenomenon is quite common in practice; see \citet[Figure 1]{papailiopoulos2014provable}, \citet[Figure 1]{papyan2020traces}, and \citet[Figure 1]{goujaud2022super} for instance. 
Further, $\lambda_1^*=1$ is not strictly required but assumed without loss of generality. 

Assumption~\ref{assu:main-assumption} guarantees that $\bm{\Sigma}$ is positive definite. 
By construction, all $\bm{\Sigma}_k^*$'s are guaranteed to be positive semi-definite. 
To enforce a similar condition on $\bm{\Sigma}_k$'s, we need to characterize the difference between the eigenvalues of $\bm{\Sigma}_k^*$ and $\bm{\Sigma}_k$: for $\lambda_k$ being the top eigenvalue of $\bm{\Sigma}_k$, we need to control $\big|\lambda_k^*-\lambda_k\big|$. 
This is possible by  Weyl's inequality:
\begin{lemma}[Weyl's Inequality  \citep{weyl1912asymptotische}]
    Let $\bfM,\bfM^*\in\R^{d\times d}$ be real symmetric matrices. Let $\sigma_j,\sigma_j^*$ be the $j$-th eigenvalue of $\bfM$ and $\bfM^*$, respectively. Then:
    \[
        \left|\sigma_j^* - \sigma_j\right|\leq \norm{\bfM^* - \bfM}_2.
    \]
\end{lemma}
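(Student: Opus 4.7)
The plan is to prove Weyl's inequality via the Courant--Fischer min--max characterization of eigenvalues for symmetric matrices. Write $\bfE \defeq \bfM - \bfM^*$, which is again a real symmetric $d\times d$ matrix, and note that $\|\bfE\|_2 = \|\bfM - \bfM^*\|_2$ equals $\max_{\|v\|_2=1}|v^\top \bfE v|$ by the variational characterization of the spectral norm for symmetric matrices.

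First I would recall that, for any real symmetric $\bfA\in\R^{d\times d}$ with eigenvalues $\sigma_1\geq \sigma_2\geq \cdots\geq \sigma_d$, one has
\[
\sigma_j(\bfA) \;=\; \max_{\substack{S\subseteq\R^d\\\dim(S)=j}}\;\min_{\substack{v\in S\\\|v\|_2=1}}\; v^\top \bfA v.
\]
Applying this to $\bfM = \bfM^* + \bfE$, for any $j$-dimensional subspace $S$ and any unit vector $v\in S$ we have
\[
v^\top \bfM v \;=\; v^\top \bfM^* v + v^\top \bfE v \;\leq\; v^\top \bfM^* v + \|\bfE\|_2,
\]
and symmetrically $v^\top \bfM v \geq v^\top \bfM^* v - \|\bfE\|_2$. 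Thus the inner $\min$ over unit vectors $v\in S$ shifts by at most $\|\bfE\|_2$ in either direction, and taking the outer $\max$ over $j$-dimensional subspaces $S$ preserves this bound, yielding
\[
\sigma_j(\bfM^*) - \|\bfE\|_2 \;\leq\; \sigma_j(\bfM) \;\leq\; \sigma_j(\bfM^*) + \|\bfE\|_2,
\]
which is exactly $|\sigma_j^* - \sigma_j|\leq \|\bfM^* - \bfM\|_2$.

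There is no substantive obstacle: the only mild care needed is to check that the min--max quantity is genuinely monotone under a uniform additive perturbation of the Rayleigh quotient, so that both the inner $\min$ and the outer $\max$ shift by at most $\|\bfE\|_2$. Once one invokes Courant--Fischer, the rest is bookkeeping. An alternative route, which I would mention only if the reader prefers it, is to reduce to the case $j=1$ via interlacing or to argue directly from the fact that $\bfM$ and $\bfM^*$ share a common orthonormal basis only approximately; but the min--max argument above is the cleanest and gives the result in a couple of lines.
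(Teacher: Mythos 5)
Your proof is correct. The paper itself supplies no proof of this lemma; it simply states Weyl's inequality as a classical fact and cites \citet{weyl1912asymptotische}, so there is no paper argument to compare against. Your derivation via the Courant--Fischer min--max characterization is the standard textbook route: writing $\bfE = \bfM - \bfM^*$, using that $\|\bfE\|_2 = \max_{\|v\|_2=1}|v^\top\bfE v|$ for symmetric $\bfE$, and observing that each Rayleigh quotient $v^\top\bfM v$ differs from $v^\top\bfM^* v$ by at most $\|\bfE\|_2$ uniformly in $v$, so both the inner $\min$ and the outer $\max$ in Courant--Fischer shift by at most that amount. That uniform-perturbation step is exactly the right justification for the claimed monotonicity, and no gap remains. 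The result is a clean, self-contained proof of a statement the paper treats as a black box, which is a reasonable thing to include in an appendix if one wants the presentation to be self-contained.
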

A direct consequence of Weyl's inequality to our scenario is that $\big|\lambda_k^*-\lambda_k\big| \leq \norm{\bm{\Sigma}_k - \bm{\Sigma}_k^*}_2$. 

Another tool that we will utilize in the following sections of our paper is the Davis-Kahan $\sin\Theta$ Theorem:
\begin{lemma}[$\sin\Theta$ Theorem \cite{davis1970rotation}]
    \label{lem:davis-kahan}
    Let $\bfM^*\in\R^{d\times d}$ and let $\bfM = \bfM^* + \bfH$. Let $\bfa_1^*$ and $\bfa_1$ be the top eigenvectors of $\bfM^*$ and $\bfM$, respectively. Then we have:
    \[
        \sin\angle\left\{\bfa_1^*,\bfa_1\right\} \leq \frac{\norm{\bfH}_2}{\min_{j\neq k}\left|\sigma_k^* - \sigma_j\right|}.
    \]
\end{lemma}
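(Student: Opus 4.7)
The plan is to expand the unperturbed top eigenvector $\bfa_1^*$ in the eigenbasis of the perturbed matrix $\bfM$ and use both eigen-equations to extract the expansion coefficients in terms of the perturbation $\bfH$. Since $\bfM = \bfM^* + \bfH$ inherits symmetry from $\bfM^*$ (which is the setting of interest in this paper), let $\{\bfa_i\}_{i=1}^d$ be an orthonormal eigenbasis of $\bfM$ with corresponding eigenvalues $\{\sigma_i\}_{i=1}^d$. Writing $\bfa_1^* = \sum_{i=1}^d d_i \bfa_i$ with $d_i = \inner{\bfa_i}{\bfa_1^*}$, the Pythagorean identity combined with $\norm{\bfa_1^*}_2 = 1$ gives
\[
\sin^2 \angle\left\{\bfa_1^*, \bfa_1\right\} \;=\; 1 - d_1^2 \;=\; \sum_{j \neq 1} d_j^2,
\]
so the problem reduces to bounding the off-diagonal expansion coefficients.

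To bound each $d_j$, I would apply $\bfM = \bfM^* + \bfH$ to $\bfa_1^*$, yielding $\bfM \bfa_1^* = \sigma_1^* \bfa_1^* + \bfH \bfa_1^*$. Substituting the expansion on both sides and taking the inner product with $\bfa_j$ for $j \neq 1$, orthonormality of the eigenbasis isolates
\[
(\sigma_j - \sigma_1^*)\, d_j \;=\; \inner{\bfa_j}{\bfH\, \bfa_1^*}.
\]
Assuming $\sigma_1^* \neq \sigma_j$ for all $j \neq 1$ (otherwise the stated bound is trivially infinite), I solve for $d_j$, plug back into the sum over $j \neq 1$, and apply Bessel's inequality followed by the operator-norm bound $\norm{\bfH \bfa_1^*}_2 \leq \norm{\bfH}_2$:
\[
\sin^2 \angle\left\{\bfa_1^*,\bfa_1\right\} \;=\; \sum_{j\neq 1} \frac{\inner{\bfa_j}{\bfH \bfa_1^*}^2}{(\sigma_1^* - \sigma_j)^2} \;\leq\; \frac{\norm{\bfH \bfa_1^*}_2^2}{\min_{j \neq 1}(\sigma_1^* - \sigma_j)^2} \;\leq\; \frac{\norm{\bfH}_2^2}{\min_{j \neq 1}(\sigma_1^* - \sigma_j)^2}.
\]
Taking square roots delivers the bound of the Lemma in the top-eigenvector case $k=1$.

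The main pitfall here is notational rather than technical: the $\sigma_j$'s appearing in the denominator are eigenvalues of the perturbed matrix $\bfM$, not of $\bfM^*$. If a purely unperturbed gap $\sigma_1^* - \sigma_j^*$ is desired, Weyl's inequality (stated just above the Lemma) converts one to the other at the cost of an additive $\norm{\bfH}_2$ in the denominator, and one must check that this does not make the denominator non-positive. The only genuine difficulty I foresee, were the statement extended to a top-$K$ block rather than a single vector, would be replacing the scalar coefficient argument by a Sylvester-equation or CS-decomposition treatment of invariant subspaces; in the rank-one form actually stated in the excerpt, the elementary calculation above suffices.
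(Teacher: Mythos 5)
Your proof is correct. The paper itself gives no proof of this lemma --- it is imported by citation from \citet{davis1970rotation} (and restated verbatim as Theorem~\ref{theo:davis-kahan} in the appendix, again without proof) --- so there is no in-paper argument to compare against. Your elementary derivation is nonetheless exactly the right one for the rank-one case: expand $\bfa_1^*$ in the eigenbasis of the \emph{perturbed} operator $\bfM$, use symmetry to evaluate $\langle \bfa_j, \bfM\bfa_1^*\rangle = \sigma_j d_j$ on one side and $\bfM^*\bfa_1^* = \sigma_1^*\bfa_1^*$ on the other, and read off $(\sigma_j - \sigma_1^*) d_j = \langle \bfa_j, \bfH\bfa_1^*\rangle$, then finish with Bessel and the operator-norm bound. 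Two points are worth flagging as genuine strengths of your write-up. First, you correctly resolve the deliberately mixed notation in the lemma: the denominator is $\min_{j\neq 1}\left|\sigma_1^* - \sigma_j\right|$, an \emph{unperturbed} target eigenvalue against the \emph{perturbed} spectrum --- precisely the quantity your derivation produces, and precisely the form the paper later exploits in the proof of Lemma~\ref{lem:top_eig_diff}, where it converts this mixed gap to an unperturbed gap via Weyl's inequality. Second, you correctly observe that the bound is vacuously true if any $\sigma_j$ equals $\sigma_1^*$, so no additional hypotheses are needed. The only thing I would tighten is the phrase ``Pythagorean identity combined with $\norm{\bfa_1^*}_2 = 1$'': it would be cleaner to say $\cos\angle\{\bfa_1^*,\bfa_1\} = |d_1|$ for unit vectors, hence $\sin^2\angle = 1 - d_1^2 = \sum_{j\neq 1} d_j^2$; the argument as written is fine but slightly compressed.
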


\section{Sub-routine Agnostic Error Propagation}
\label{sec:main_result_general}
In this section, we aim to provide a characterization of the error propagation in the deflation methods in (\ref{eq:deflation}) that is agnostic to the detail of the sub-routine $\texttt{PCA}$. 
In particular, the only information known about $\texttt{PCA}$ is the magnitude of the error of $\texttt{PCA}$, namely $\norm{\bm{\delta}_k}_2 = \norm{\bfv_k -\bfu_k}_2 = \norm{\pca{\bm{\Sigma}_k, t} - \bfu_{k}}_2$. 
\begin{theorem}
    \label{thm:main_theorem_1}
    Consider the scenario of looking for the top-$K$ eigenvectors of ~$\bm{\Sigma} \in\R^{d\times d}$, and 
    suppose $\bm{\Sigma}$ satisfies Assumption~\ref{assu:main-assumption}. 
    Let $\mathcal{T}_j = \lambda_j^* - \lambda_{j+1}^*$, for $j\in[d-1]$, and $\mathcal{T}_d = \lambda_d^*$. 
    Also, let  $\mathcal{T}_{K,\min}=\min_{k\in[K]}\mathcal{T}_k$, and let $\bm{\delta}_k$ and $\bfv_k$ be defined in \eqref{eq:def-delta-v}.  
    If $\norm{\bm{\delta}_k}_2$'s are small enough such that
    \begin{equation}
        \label{eq:thm1_requirement}  
        \sum_{k'=1}^{k-1}\lambda_{k'}^*\norm{\bm{\delta}_{k'}}_2\prod_{j=k'+1}^{k-1}\paren{3 + \frac{2\lambda_j^*}{\mathcal{T}_j}} \leq \frac{1}{20}\mathcal{T}_{K,\min}.
    \end{equation}
    then the output of Algorithm~\ref{alg:main-alg} satisfies for all $k\in[K]$:
    \begin{align}
        \label{eq:thm1_conclusion}
        \norm{\bfv_k - \bfu_k^*}_2 &\leq \norm{\bfu_k - \bfu_k^*}_2 + \norm{\bm{\delta}_k}_2 \nonumber \\ &\leq 5\sum_{k'=1}^{k}\frac{\lambda_{k'}^*}{\lambda_k^*}\norm{\bm{\delta}_{k'}}_2\prod_{j=k'+1}^{k}\paren{3 + \frac{2\lambda_j^*}{\mathcal{T}_j}}.        
    \end{align}
    \vspace{-0.5cm}
\end{theorem}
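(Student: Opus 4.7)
The first inequality in \eqref{eq:thm1_conclusion} follows directly from the triangle inequality applied to $\bfv_k - \bfu_k^* = \bm{\delta}_k + (\bfu_k - \bfu_k^*)$, so the substantive work lies in bounding $\|\bfu_k - \bfu_k^*\|_2$ and tracking how the deflation compounds these errors across iterations. I plan to proceed by induction on $k$, carrying along the auxiliary quantity $E_k \defeq \|\bm{\Sigma}_k - \bm{\Sigma}_k^*\|_2$. The base case $k=1$ is immediate, since $\bm{\Sigma}_1 = \bm{\Sigma}_1^* = \bm{\Sigma}$ forces $\bfu_1 = \bfu_1^*$ and $E_1 = 0$, so the bound reduces to $\|\bm{\delta}_1\|_2$.

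The first sub-task at index $k$ is to convert $E_k$ into a bound on $\|\bfu_k - \bfu_k^*\|_2$. Since $\bm{\Sigma}_k^* = \sum_{i\geq k}\lambda_i^*\bfu_i^*\bfu_i^{*\top}$ has spectrum $\{\lambda_k^*,\lambda_{k+1}^*,\ldots,\lambda_d^*,0,\ldots,0\}$, the minimum gap between its top eigenvalue $\lambda_k^*$ and every other eigenvalue equals $\mathcal{T}_k$, and Weyl's inequality guarantees that the corresponding gap for the perturbed $\bm{\Sigma}_k$ is at least $\mathcal{T}_k - E_k$. The Davis--Kahan $\sin\Theta$ theorem (Lemma~\ref{lem:davis-kahan}) then gives $\sin\angle(\bfu_k,\bfu_k^*) \leq E_k/(\mathcal{T}_k - E_k)$, which together with $\|\bfu_k - \bfu_k^*\|_2 \leq \sqrt{2}\sin\angle(\bfu_k,\bfu_k^*)$ (after the standard sign choice for $\bfu_k$) yields a $O(E_k/\mathcal{T}_k)$ estimate whenever $E_k$ stays, say, below $\mathcal{T}_k/2$. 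The role of hypothesis \eqref{eq:thm1_requirement} is precisely to keep $E_k$ in this safe regime throughout the induction.

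The central technical step is then to derive a one-step recursion for $E_k$. Starting from
\begin{equation*}
\bm{\Sigma}_k - \bm{\Sigma}_k^* = (\bm{\Sigma}_{k-1} - \bm{\Sigma}_{k-1}^*) - \bigl[\bfv_{k-1}\bfv_{k-1}^\top\bm{\Sigma}_{k-1}\bfv_{k-1}\bfv_{k-1}^\top - \bfu_{k-1}^*\bfu_{k-1}^{*\top}\bm{\Sigma}_{k-1}^*\bfu_{k-1}^*\bfu_{k-1}^{*\top}\bigr],
\end{equation*}
I would insert the intermediate term $\bfu_{k-1}^*\bfu_{k-1}^{*\top}\bm{\Sigma}_{k-1}\bfu_{k-1}^*\bfu_{k-1}^{*\top}$ to split the bracketed difference into a \emph{matrix-perturbation} piece $\bfu_{k-1}^*\bfu_{k-1}^{*\top}(\bm{\Sigma}_{k-1}-\bm{\Sigma}_{k-1}^*)\bfu_{k-1}^*\bfu_{k-1}^{*\top}$ with spectral norm at most $E_{k-1}$, and a \emph{projector-perturbation} piece of the form $\bfv\bfv^\top M\bfv\bfv^\top - \bfu\bfu^\top M\bfu\bfu^\top$ with $M = \bm{\Sigma}_{k-1}$. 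Bounding the latter using the elementary identity $\|AMA - BMB\|_2 \leq 2\|M\|_2\|A-B\|_2$ for symmetric $A,B$, the tight inequality $\|\bfv\bfv^\top - \bfu\bfu^\top\|_2 \leq \|\bfv - \bfu\|_2$ for unit vectors, the Weyl estimate $\|\bm{\Sigma}_{k-1}\|_2 \leq \lambda_{k-1}^* + E_{k-1}$, and $\|\bfv_{k-1} - \bfu_{k-1}^*\|_2 \leq \|\bm{\delta}_{k-1}\|_2 + \|\bfu_{k-1} - \bfu_{k-1}^*\|_2$ (with the last term controlled by Davis--Kahan at index $k-1$), one arrives at a recursion of the shape
\begin{equation*}
E_k \leq \Bigl(3 + \tfrac{2\lambda_{k-1}^*}{\mathcal{T}_{k-1}}\Bigr)E_{k-1} + C\,\lambda_{k-1}^*\,\|\bm{\delta}_{k-1}\|_2,
\end{equation*}
in which the factor $3 + 2\lambda^*/\mathcal{T}$ captures two propagation channels: a direct $E_{k-1}$ contribution from the matrix-perturbation piece and an $E_{k-1}/\mathcal{T}_{k-1}$ contribution from invoking Davis--Kahan inside the projector-perturbation piece.

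Unrolling this recursion from $E_1 = 0$ produces the sum-of-products bound on $E_k$ that matches the left-hand side of \eqref{eq:thm1_requirement}, and substituting it back into $\|\bfv_k - \bfu_k^*\|_2 \leq \|\bm{\delta}_k\|_2 + O(E_k/\mathcal{T}_k)$ yields \eqref{eq:thm1_conclusion}; the $1/\lambda_k^*$ prefactor appears by writing $1/\mathcal{T}_k = (\lambda_k^*/\mathcal{T}_k)\cdot(1/\lambda_k^*)$ and absorbing $\lambda_k^*/\mathcal{T}_k$ into one extra multiplier of the product $\prod_{j=k'+1}^{k}(3 + 2\lambda_j^*/\mathcal{T}_j)$. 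I expect the main obstacle to be not any individual estimate---each is a few lines---but the constant bookkeeping needed to close the induction, in particular verifying that \eqref{eq:thm1_requirement} is strong enough to keep $E_k$ strictly below the eigengap at every intermediate step so that every invocation of Davis--Kahan is legitimate and the linearization of $1/(\mathcal{T}_k - E_k)$ into $O(1/\mathcal{T}_k)$ remains valid.
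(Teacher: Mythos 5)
Your proposal is correct in spirit and would deliver the same structural recursion, but it routes the argument differently from the paper. The paper's Lemma~\ref{lem:matrix_diff_propagate} expands $\bfv_k = \bfu_k + \bm{\delta}_k$ and exploits the exact eigen-identity $\bfu_k\bfu_k^\top\bm{\Sigma}_k\bfu_k\bfu_k^\top = \lambda_k\bfu_k\bfu_k^\top$, so that the main term of the one-step update becomes the scalar-times-projector comparison $\lambda_k\bfu_k\bfu_k^\top$ versus $\lambda_k^*\bfu_k^*\bfu_k^{*\top}$ (handled by splitting off $|\lambda_k-\lambda_k^*|\le\Delta_k$ via Weyl and $\|\bfu_k\bfu_k^\top-\bfu_k^*\bfu_k^{*\top}\|_F\le 2\theta_k$), while all $\bm{\delta}_k$-dependence is collected into two explicit remainder matrices $\bm{\mathcal{E}}_{1,k},\bm{\mathcal{E}}_{2,k}$ whose Frobenius norms are bounded by $(\lambda_k^*+\Delta_k)\cdot\mathrm{poly}(\|\bm{\delta}_k\|_2)$. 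You instead skip the expansion of $\bfv_k$ and treat $\bfv\bfv^\top\bm{\Sigma}_k\bfv\bfv^\top-\bfu^*\bfu^{*\top}\bm{\Sigma}_k\bfu^*\bfu^{*\top}$ directly through the Lipschitz estimate $\|AMA-BMB\|\le 2\|M\|\,\|A-B\|$ combined with $\|\bfv\bfv^\top-\bfu\bfu^\top\|\le\|\bfv-\bfu\|_2$, and separate out the matrix-perturbation piece $\bfu^*\bfu^{*\top}(\bm{\Sigma}_k-\bm{\Sigma}_k^*)\bfu^*\bfu^{*\top}$. Both decompositions land you at a recursion of the form $\Delta_{k+1}\le (3 + O(\lambda_k^*/\mathcal{T}_k))\Delta_k + O(\lambda_k^*)\|\bm{\delta}_k\|_2$, but your Lipschitz route pays an extra factor roughly two on the Davis--Kahan channel (the $2\|M\|$ hits the $\theta_{k-1}\lesssim 2E_{k-1}/\mathcal{T}_{k-1}$ term directly, where the paper's eigen-identity avoids the double-sided conjugation), so you should expect to prove \eqref{eq:thm1_conclusion} only with somewhat larger numerical constants unless you also exploit the eigen-identity or tighten the projector-Lipschitz bound.

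Two bookkeeping points worth flagging. First, you work with $E_k=\|\bm{\Sigma}_k-\bm{\Sigma}_k^*\|_2$, while the paper carries $\Delta_k=\|\bm{\Sigma}_k-\bm{\Sigma}_k^*\|_F$; Davis--Kahan works for either norm, but when you control the update you sum several rank-one and rank-two contributions whose Frobenius and spectral norms differ by $\sqrt{2}$-type factors, so be consistent throughout or the constants will drift. Second, the paper does not close the argument with a bare induction: it defines $\hat K$ as the first index where $\|\bm{\Sigma}_{\hat K}-\bm{\Sigma}_{\hat K}^*\|$ exceeds $\tfrac14\mathcal{T}_{K,\min}$, proves the recursion and its unrolling up to $\min\{\hat K,K+1\}$, and then uses hypothesis \eqref{eq:thm1_requirement} to derive a contradiction if $\hat K\le K$. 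That is functionally the same as your induction-with-an-invariant, but you will need to state the invariant $E_k\le\tfrac14\mathcal{T}_{K,\min}$ (or similar) explicitly and verify that \eqref{eq:thm1_requirement} propagates it, exactly as you anticipate in your final paragraph.
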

\textbf{Remark 1.} Theorem~\ref{thm:main_theorem_1} characterizes how the errors from approximately solving the sub-routine (\texttt{PCA}) propagate into the overall error of the deflation procedure. 
The condition in (\ref{eq:thm1_requirement}) is to make sure that the difference between the ``ground-truth'' deflated matrices in \eqref{eq:true-deflation-mats} and the empirical ones in \eqref{eq:deflation} is controlled and is sufficiently small, i.e., $\norm{\bm{\Sigma}_k - \bm{\Sigma}^*_k}_F \leq \tfrac{1}{4}\mathcal{T}_{k,\min}$ for all $k\in[K]$. As a simplification, this condition can be guaranteed as long as
\[
    \norm{\bm{\delta}_k}_2 \leq \tfrac{\mathcal{T}_{K,\min}}{20K}\prod_{j=k+1}^{K-1}\paren{3 + \tfrac{2\lambda_j^*}{\mathcal{T}_j}}^{-1}.\vspace{-0.2cm}
\]
\textbf{Remark 2.} Notice that the error upper bound in (\ref{eq:thm1_conclusion}) takes the form of a summation over components that depend on the error of the sub-routine $\left\{\norm{\bm{\delta}_{k'}}_2\right\}_{k'=1}^k$. In particular, not only do the number of summands grow as $k$ grows, but each summand also has a multiplicative factor of $\prod_{j=k'+1}^{k}\paren{3 + \tfrac{2\lambda_j^*}{\mathcal{T}_j}}$ which grows as $k$ becomes larger. When the eigen spectrum of $\bm{\Sigma}^*$ decays slowly, this multiplicative factor can grow near factorially. For instance, in the case of a power-law decay spectrum where $\lambda_j = \tfrac{1}{j}$, the multiplicative factor becomes $\prod_{j=k'+1}^{k}\paren{3 + 2j}$, since $\mathcal{T}_j = \tfrac{1}{j(j+1)}$. In this case, the errors $\left\{\norm{\bm{\delta}_{k'}}_2\right\}_{k'=1}^k$ need to be nearly factorially small as we attempt to solve for more eigenvectors. To be more specific, based on Theorem~\ref{thm:main_theorem_1}, to guarantee that $\norm{\bfv_k - \bfu_k^*}_2\leq \varepsilon$, it is \textit{necessary} to have:
\[
    \norm{\bm{\delta}_{k'}}\leq \frac{\varepsilon\lambda_k^*}{5\lambda_{k'}^*}\prod_{j=k'+1}^{k}\paren{3 + \frac{2\lambda_j^*}{\mathcal{T}_j}}^{-1}.
\]
The corollary below states a \textit{sufficient} condition to guarantee that $\norm{\bfv_k - \bfu_k^*}_2\leq \varepsilon$ for all $k\in[K]$.
\begin{corollary}
    \label{cor:main_cor_delta1}
    Consider the scenario of solving the top-$K$ eigenvectors of ~$\bm{\Sigma} \in\R^{d\times d}$, and 
    suppose $\bm{\Sigma}$ satisfies Assumption~\ref{assu:main-assumption}. 
    Let $\mathcal{T}_j = \lambda_j^* - \lambda_{j+1}^*$ for $j\in[d-1]$, and $\mathcal{T}_d = \lambda_d^*$.
    Also let  $\mathcal{T}_{K,\min}=\min_{k\in[K]}\mathcal{T}_k$.
    Finally, let $\bm{\delta}_i$ and $\bfv_i$ be defined as in \eqref{eq:def-delta-v}.  
    If for all $k\in[K]$ it holds that:
    \begin{equation}
        \label{eq:cor1_requirement}
        \norm{\bm{\delta}_k}_2\leq \tfrac{\min\left\{\varepsilon\lambda_K^*, \mathcal{T}_{K,\min}\right\}}{20K}\prod_{j=k+1}^{K}\paren{3 + \frac{2\lambda_j^*}{\mathcal{T}_j}}^{-1},
    \end{equation}
    then the output of Algorithm~\ref{alg:main-alg} satisfies $\norm{\bfv_k - \bfu_k^*}_2\leq \varepsilon$ for all $k\in[K]$.
\end{corollary}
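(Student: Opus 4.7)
The plan is to derive Corollary~\ref{cor:main_cor_delta1} as a direct consequence of Theorem~\ref{thm:main_theorem_1}. The hypothesis~\eqref{eq:cor1_requirement} is a per-index bound on each $\norm{\bm{\delta}_k}_2$, whereas Theorem~\ref{thm:main_theorem_1} requires the aggregate smallness condition~\eqref{eq:thm1_requirement} and returns the aggregate error bound~\eqref{eq:thm1_conclusion}. The proof thus reduces to two bookkeeping steps: first, show that \eqref{eq:cor1_requirement} implies \eqref{eq:thm1_requirement}; second, substitute \eqref{eq:cor1_requirement} into \eqref{eq:thm1_conclusion} and verify the result is at most $\varepsilon$.

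For the first step, I would plug \eqref{eq:cor1_requirement} into the $k'$-th summand on the left-hand side of \eqref{eq:thm1_requirement}. The inverse product $\prod_{j=k'+1}^{K}\paren{\cdot}^{-1}$ supplied by the bound on $\norm{\bm{\delta}_{k'}}_2$ cancels most of the direct product $\prod_{j=k'+1}^{k-1}\paren{\cdot}$ appearing in \eqref{eq:thm1_requirement}, leaving only the residual tail $\prod_{j=k}^{K}\paren{\cdot}^{-1}$. Using $\lambda_{k'}^*\leq \lambda_1^* = 1$ from Assumption~\ref{assu:main-assumption} and the trivial estimate $\paren{3 + 2\lambda_j^*/\mathcal{T}_j}^{-1}\leq 1/3 \leq 1$, every summand is at most $\mathcal{T}_{K,\min}/(20K)$. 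Summing the $k-1 \leq K-1 < K$ terms yields the bound $\mathcal{T}_{K,\min}/20$, confirming \eqref{eq:thm1_requirement}.

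For the second step, I would apply \eqref{eq:thm1_conclusion} and perform the analogous telescoping:
\[
\frac{\lambda_{k'}^*}{\lambda_k^*}\norm{\bm{\delta}_{k'}}_2 \prod_{j=k'+1}^{k}\paren{3 + \frac{2\lambda_j^*}{\mathcal{T}_j}} \leq \frac{\lambda_{k'}^*}{\lambda_k^*}\cdot\frac{\varepsilon\lambda_K^*}{20K}\prod_{j=k+1}^{K}\paren{3+\frac{2\lambda_j^*}{\mathcal{T}_j}}^{-1} \leq \frac{\varepsilon}{20K},
\]
where the last inequality uses $\lambda_K^*\leq \lambda_k^*$ (monotonicity), $\lambda_{k'}^*\leq 1$, and the remaining product being $\leq 1$. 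Multiplying by the leading factor $5$ in \eqref{eq:thm1_conclusion} and summing the $k \leq K$ terms gives $\norm{\bfv_k - \bfu_k^*}_2 \leq 5K\cdot \varepsilon/(20K) = \varepsilon/4 \leq \varepsilon$.

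There is no substantive obstacle here; the only care needed is in the handling of boundary cases where certain products become empty and evaluate to $1$ (e.g.\ when $k'=k$ in the conclusion, or when $k=1$ so that the sum in \eqref{eq:thm1_requirement} is vacuous). The argument is essentially algebraic bookkeeping exploiting that the product range $[k'+1,K]$ in the hypothesis strictly contains the ranges $[k'+1,k-1]$ and $[k'+1,k]$ appearing in Theorem~\ref{thm:main_theorem_1}, so the cancellation always leaves a residual tail whose factors are individually $\leq 1$.
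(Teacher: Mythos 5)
Your proposal is correct and matches the intended derivation: the paper states Corollary~\ref{cor:main_cor_delta1} without a separate proof, treating it as an immediate consequence of Theorem~\ref{thm:main_theorem_1}, and your route (substitute the per-index bound into both the hypothesis \eqref{eq:thm1_requirement} and the conclusion \eqref{eq:thm1_conclusion}, cancel the products over the common range $[k'+1,\,k-1]$ resp.\ $[k'+1,\,k]$ against the hypothesized inverse product over $[k'+1,\,K]$, and bound the residual tail factors by $1$ together with $\lambda_{k'}^*\le 1$ and $\lambda_K^*\le\lambda_k^*$) is precisely that derivation. The arithmetic checks out: each summand of the hypothesis is at most $\mathcal{T}_{K,\min}/(20K)$ and there are fewer than $K$ of them; each summand of the conclusion is at most $\varepsilon/(20K)$ and there are at most $K$ of them, giving $5\cdot\varepsilon/20=\varepsilon/4\le\varepsilon$.
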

\textbf{Remark 3.}
Corollary~\ref{cor:main_cor_delta1} characterizes how accurately the sub-routine (\texttt{PCA}) has to be solved to achieve $\norm{\bfv_k - \bfu_k^*}_2\leq \varepsilon$, $\forall k\in[K]$, i.e., the desired accuracy of the overall deflation procedure. 

Below, we provide lower bounds on the number of sub-routine iterations to satisfy \eqref{eq:cor1_requirement}, assuming a linearly-converging sub-routine with convergence rate $\alpha_k$ exists. In Section~\ref{sec:main_result_pi}, we improve this rate by specifically setting the sub-routine to be the \textit{power iteration} \citep{muntz1913solution}.
\begin{corollary}
    \label{cor:main_cor_t}
    Under the same assumptions as Corollary~\ref{cor:main_cor_delta1}, we further assume a sub-routine exists for computing the top eigenvalue with linear convergence rate $\alpha_k \in (0, 1)$. 
    If $\norm{\bm{\delta}_k}_2 \leq c_0 \cdot\alpha_k^t$ for some constant $c_0 > 0$ for all $k\in[K]$, where $t$ is the number of steps in the sub-routine satisfying:
    \begin{equation}
        \label{eq:main_cor_t}
        t \geq \Omega\paren{\frac{\log \frac{c_0K}{\min\left\{\varepsilon\lambda_k^*, \mathcal{T}_{K,\min}\right\}}+ \sum_{j=k+1}^K\log\paren{\frac{\lambda_j^*}{\mathcal{T}_j} + 1}}{\log\alpha_k^{-1}}},
    \end{equation}
    then the output of Algorithm~\ref{alg:main-alg} satisfies $\norm{\bfv_k - \bfu_k^*}_2\leq \varepsilon$ for all $k\in[K]$.
\end{corollary}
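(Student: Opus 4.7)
The plan is to reduce Corollary~\ref{cor:main_cor_t} directly to Corollary~\ref{cor:main_cor_delta1} by plugging the linear-convergence hypothesis $\norm{\bm{\delta}_k}_2 \leq c_0 \alpha_k^t$ into the sufficient condition \eqref{eq:cor1_requirement} and solving for $t$. More precisely, I would first rewrite \eqref{eq:cor1_requirement} as the requirement that
\[
c_0 \alpha_k^t \;\leq\; \frac{\min\left\{\varepsilon\lambda_K^*, \mathcal{T}_{K,\min}\right\}}{20K}\prod_{j=k+1}^{K}\paren{3 + \frac{2\lambda_j^*}{\mathcal{T}_j}}^{-1},
\]
which, since $\alpha_k\in(0,1)$ and so $\log\alpha_k^{-1} > 0$, is equivalent upon taking logarithms and rearranging to
\[
t \;\geq\; \frac{\log\frac{20 K c_0}{\min\left\{\varepsilon\lambda_K^*,\mathcal{T}_{K,\min}\right\}} \;+\; \sum_{j=k+1}^{K}\log\paren{3 + \tfrac{2\lambda_j^*}{\mathcal{T}_j}}}{\log \alpha_k^{-1}}.
\]

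Next, I would absorb numerical constants into the $\Omega(\cdot)$ and simplify each factor $\log\bigl(3 + 2\lambda_j^*/\mathcal{T}_j\bigr)$ to $\Theta\bigl(\log(\lambda_j^*/\mathcal{T}_j + 1)\bigr)$, which uses only the elementary inequality $\log(3+2x) = \Theta(\log(x+1))$ for $x\geq 0$. This step produces exactly the right-hand side of \eqref{eq:main_cor_t}. Hence any $t$ satisfying \eqref{eq:main_cor_t} (with a suitable implicit constant) yields $\norm{\bm{\delta}_k}_2$ small enough to invoke Corollary~\ref{cor:main_cor_delta1}, and thus $\norm{\bfv_k - \bfu_k^*}_2 \leq \varepsilon$ for all $k \in [K]$.

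There is essentially no hard step here: the argument is a routine ``invert the bound'' computation and does not rely on any further spectral-perturbation machinery beyond what is already encapsulated in Corollary~\ref{cor:main_cor_delta1}. The only minor care points are $(i)$ ensuring that $\log\alpha_k^{-1}$ is strictly positive (guaranteed by $\alpha_k\in(0,1)$), $(ii)$ verifying that the $\min\{\varepsilon\lambda_K^*,\mathcal{T}_{K,\min}\}$ factor (which comes from the simultaneous requirements on error and on controlling $\norm{\bm{\Sigma}_k - \bm{\Sigma}_k^*}_F$) is handled correctly under the $\log$, and $(iii)$ confirming that the uniform lower bound on $t$ is obtained by applying the bound for each $k\in[K]$ and noting that the displayed bound \eqref{eq:main_cor_t} already depends on $k$ on its right-hand side, so one invokes it componentwise.
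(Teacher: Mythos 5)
Your proof is correct and matches the paper's own approach exactly: the paper provides no standalone proof of Corollary~\ref{cor:main_cor_t}, and its Remark~4 states that the corollary is obtained precisely by substituting $\norm{\bm{\delta}_k}_2\leq c_0\alpha_k^t$ into the sufficient condition \eqref{eq:cor1_requirement} of Corollary~\ref{cor:main_cor_delta1} and solving for $t$, which is what you do. One minor observation: your inversion correctly produces $\lambda_K^*$ inside the $\min$ (since that is what appears in \eqref{eq:cor1_requirement}), whereas the displayed bound \eqref{eq:main_cor_t} writes $\lambda_k^*$; since $\lambda_k^*\geq\lambda_K^*$ for $k\leq K$ your version is the strictly safe one, and the paper's choice looks like a benign subscript slip absorbed by the $\Omega(\cdot)$.
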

\textbf{Remark 4.}
Corollary~\ref{cor:main_cor_t} is constructed by further assuming that $\norm{\bm{\delta}_k}_2\leq c_0\cdot\alpha_k^t$ based on Corollary~\ref{cor:main_cor_delta1} and derive the lower bound on $t$. The linear convergence property of solving the top eigenvector is shared among multiple existing algorithms \citep{desa2017accelerated, GoluVanl96}. In particular, when $\texttt{PCA}$ is chosen to be the power iteration, we can show that in our setting $\norm{\bm{\delta}_k}_2\leq c_0\paren{\frac{\lambda_{2}\paren{\bm{\Sigma}_k}}{\lambda_k}}^t$. In the case of a power-law decay spectrum, when we treat $c_0$ as constants, the bound in (\ref{eq:main_cor_t}) can be simplified to
\[
    t\geq \Omega\paren{\log \frac{K}{\varepsilon} + \sum_{j=k+1}^K\log\paren{j+1}}.\vspace{-0.2cm}
\]
Applying a slightly loose bound over the summation, we can see that based on the analysis of Theorem~\ref{thm:main_theorem_1}, one needs $t \geq \Omega\paren{\log \frac{K}{\varepsilon} + K\log K}$ to achieve $\varepsilon$-accuracy. Section~\ref{sec:main_result_pi} compares this rate with the lower bound we obtain when $\texttt{PCA}$ is fixed to power iteration.

\subsection{Proof Overview}
\label{sec:proof_overview_thm1}
The proof involves bounding $\norm{\bfu_{k}- \bfu_k^*}_2$ and $\norm{\bm{\Sigma}_k - \bm{\Sigma}^*_k}_F$. 
The former quantifies the difference between the true $k$-th eigenvector of $\bm{\Sigma}$ and the leading eigenvector of the $k$-th deflation step $\bm{\Sigma}_k$. 
The latter quantifies the difference between the ``ground-truth'' deflation matrices defined in \eqref{eq:true-deflation-mats} and the ones in practice. Recall that $\bfu_k^*$ is the top eigenvector of $\bm{\Sigma}_k^*$ (c.f., Lemma~\ref{lem:true_deflate_eigen_decomp}). Let $\texttt{nev}_1\paren{\bm{\Sigma}_k}$ denote a normalized top eigenvector of $\bm{\Sigma}_k$. Since both $\texttt{nev}_1\paren{\bm{\Sigma}_k}$ and $-\texttt{nev}_1\paren{\bm{\Sigma}_k}$ are unit-norm eigenvectors of $\bm{\Sigma}_k$, we will choose $\bfu_k$ to be the one such that $\bfu_k^\top\bfu_k^*\geq 0$:
\begin{equation}
    \label{eq:uk_defin}
    \begin{aligned}
        \bfu_k & := \underset{s\in\{\pm 1\}}{\arg\min}\norm{s\cdot \texttt{nev}_1\paren{\bm{\Sigma}_k} - \bfu_k^*}_2\cdot \texttt{nev}_1\paren{\bm{\Sigma}_k}.
    \end{aligned}
\end{equation}
The difference between $\bfv_k$, the $k$-th approximate eigenvector returned by Algorithm~\ref{alg:main-alg}, and $\bfu_k^*$, the $k$-th ground-truth eigenvector, consists of two components: $i)$ $\norm{\bfu_k - \bfu_k^*}_2$, the difference between the top eigenvectors of the empirical deflated matrix $\bm{\Sigma}_k$ and the ``ground-truth'' deflated matrix $\bm{\Sigma}_k^*$; and $ii)$ $\norm{\bfv_k - \bfu_k}_2$, the numerical error from not solving the sub-routine $\texttt{PCA}(\cdot)$ exactly. 
We can decompose the difference $\norm{\bfv_k - \bfu_k^*}_2$ in the following sense:
\begin{equation}
    \label{eq:eig_err_decomp}
    \norm{\bfv_k - \bfu_k^*}_2 \leq \underbrace{\norm{\bfv_k - \bfu_k}_2}_{:=\norm{\bm{\delta}_k}_2} + \norm{\bfu_k - \bfu_k^*}_2.
\end{equation}
where the last equality follows from the definition in (\ref{eq:def-delta-v}). The norm of $\norm{\bm{\delta}_k}_2$ depends largely on the sub-routine $\texttt{PCA}$ and can be controlled as long as $t$, the number of sub-routine iterations, is large enough. Therefore, our analysis will mainly focus on upper-bounding $\norm{\bfu_k - \bfu_k^*}_2$. 
Intuitively, $\norm{\bfu_k - \bfu_k^*}_2$ depends on the difference between $\bm{\Sigma}_k$ and $\bm{\Sigma}_k^*$. As a building block of our proof, we will first provide a characterization of $\norm{\bm{\Sigma}_k - \bm{\Sigma}_k^*}_F$.

\begin{lemma}
    \label{lem:matrix_diff_propagate}
    Suppose that $\norm{\bm{\delta}_k}_2\leq \frac{1}{6}$ for all $k\in[K-1]$, then we have that for all $k\in[K-1]$,
    \begin{equation}
        \label{eq:matrix_diff_propagate}
        \begin{aligned}
        \norm{\bm{\Sigma}_{k+1} - \bm{\Sigma}_{k+1}^*}_F & \leq 3\norm{\bm{\Sigma}_k - \bm{\Sigma}_k^*}_F + 5\lambda_k^*\norm{\bm{\delta}_k}_2\\
        & \quad\quad+ 2\lambda_k^*\norm{\bfu_k - \bfu_k^*}_2.
        \end{aligned}
    \end{equation}
\end{lemma}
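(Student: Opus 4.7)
The plan is to subtract the two recursions \eqref{eq:deflation} and \eqref{eq:true-deflation-mats}, apply the triangle inequality in Frobenius norm, and then control the resulting second-order difference by inserting the intermediate matrix $\bfu_k\bfu_k^\top\bm{\Sigma}_k\bfu_k\bfu_k^\top$. Concretely, I would start from
\[
\bm{\Sigma}_{k+1} - \bm{\Sigma}_{k+1}^* = (\bm{\Sigma}_k - \bm{\Sigma}_k^*) - \paren{\bfv_k\bfv_k^\top \bm{\Sigma}_k \bfv_k\bfv_k^\top - \bfu_k^*\bfu_k^{*\top}\bm{\Sigma}_k^*\bfu_k^*\bfu_k^{*\top}},
\]
which already contributes one copy of $\norm{\bm{\Sigma}_k - \bm{\Sigma}_k^*}_F$ for free. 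The remaining work is to split the parenthesised second difference via $\pm\,\bfu_k\bfu_k^\top\bm{\Sigma}_k\bfu_k\bfu_k^\top$ into a piece that captures the sub-routine error $\bm{\delta}_k$ and a piece that captures the eigenpair mismatch between $\bm{\Sigma}_k$ and $\bm{\Sigma}_k^*$.

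For the $\bm{\delta}_k$-piece, the rank-one identities $\bfv_k\bfv_k^\top\bm{\Sigma}_k\bfv_k\bfv_k^\top = \gamma_k\bfv_k\bfv_k^\top$ with $\gamma_k := \bfv_k^\top\bm{\Sigma}_k\bfv_k$, and $\bfu_k\bfu_k^\top\bm{\Sigma}_k\bfu_k\bfu_k^\top = \lambda_k \bfu_k\bfu_k^\top$, collapse the piece to $\gamma_k\bfv_k\bfv_k^\top - \lambda_k\bfu_k\bfu_k^\top$. I would further add and subtract $\lambda_k\bfv_k\bfv_k^\top$ and expand $\bfv_k = \bfu_k + \bm{\delta}_k$ in both $\gamma_k - \lambda_k = 2\bfu_k^\top\bm{\Sigma}_k\bm{\delta}_k + \bm{\delta}_k^\top\bm{\Sigma}_k\bm{\delta}_k$ and the rank-one identity $\bfv_k\bfv_k^\top - \bfu_k\bfu_k^\top = \bfu_k\bm{\delta}_k^\top + \bm{\delta}_k\bfu_k^\top + \bm{\delta}_k\bm{\delta}_k^\top$. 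The hypothesis $\norm{\bm{\delta}_k}_2 \leq 1/6$ absorbs each quadratic-in-$\bm{\delta}_k$ remainder into a linear-in-$\bm{\delta}_k$ constant, giving an $O(\lambda_k\norm{\bm{\delta}_k}_2)$ bound. For the eigenpair-mismatch piece $\lambda_k\bfu_k\bfu_k^\top - \lambda_k^*\bfu_k^*\bfu_k^{*\top}$, I would split as $(\lambda_k - \lambda_k^*)\bfu_k\bfu_k^\top + \lambda_k^*(\bfu_k\bfu_k^\top - \bfu_k^*\bfu_k^{*\top})$ and invoke Weyl's inequality with $\bfM = \bm{\Sigma}_k$, $\bfM^* = \bm{\Sigma}_k^*$ to obtain $|\lambda_k - \lambda_k^*| \leq \norm{\bm{\Sigma}_k - \bm{\Sigma}_k^*}_F$, while the same rank-one algebra yields $\norm{\bfu_k\bfu_k^\top - \bfu_k^*\bfu_k^{*\top}}_F \leq 2\norm{\bfu_k - \bfu_k^*}_2$. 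Finally, to express the $\bm{\delta}_k$-piece bound in terms of $\lambda_k^*$, as the lemma demands, rather than $\lambda_k$, I would reapply Weyl via $\lambda_k \leq \lambda_k^* + \norm{\bm{\Sigma}_k - \bm{\Sigma}_k^*}_F$; the extra cross term $\norm{\bm{\Sigma}_k - \bm{\Sigma}_k^*}_F\cdot\norm{\bm{\delta}_k}_2$ is again controlled by $\norm{\bm{\delta}_k}_2 \leq 1/6$ and folds into the coefficient of $\norm{\bm{\Sigma}_k - \bm{\Sigma}_k^*}_F$.

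The main obstacle is not conceptual but combinatorial: three separate contributions must sum to a coefficient bounded by $3$ in front of $\norm{\bm{\Sigma}_k - \bm{\Sigma}_k^*}_F$—the direct summand from the initial triangle inequality, the Weyl bound on $|\lambda_k - \lambda_k^*|$ in the eigenpair-mismatch piece, and the $\lambda_k \to \lambda_k^*$ conversion inside the $\bm{\delta}_k$-piece—and, similarly, the several linear-in-$\bm{\delta}_k$ fragments generated inside the $\bm{\delta}_k$-piece must sum to a coefficient bounded by $5\lambda_k^*$. The role of the assumption $\norm{\bm{\delta}_k}_2 \leq 1/6$ is precisely to guarantee that every quadratic-in-$\bm{\delta}_k$ remainder degrades by at most a factor of $1/6$ and can therefore be reabsorbed into these tight numerical constants.
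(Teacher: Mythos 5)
Your proposal is correct and follows essentially the same route as the paper: the intermediate matrix $\bfu_k\bfu_k^\top\bm{\Sigma}_k\bfu_k\bfu_k^\top = \lambda_k\bfu_k\bfu_k^\top$ that you insert is exactly the anchor behind the paper's grouping $\bm{\Sigma}_{k+1} = \tilde{\bm{\Sigma}}_{k+1} - \bm{\mathcal{E}}_{1,k} - \bm{\mathcal{E}}_{2,k}$, your $(\gamma_k-\lambda_k)\bfv_k\bfv_k^\top$ and $\lambda_k(\bfv_k\bfv_k^\top - \bfu_k\bfu_k^\top)$ are precisely $\bm{\mathcal{E}}_{2,k}$ and $\bm{\mathcal{E}}_{1,k}$, and the eigenpair-mismatch split plus Weyl is handled identically. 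The only caveat you share with the paper is arithmetic tightness: $(2+\tfrac16)\bigl(1+(1+\tfrac16)^2\bigr) = \tfrac{13}{6}\cdot\tfrac{85}{36} = \tfrac{1105}{216}\approx 5.12$, slightly above $5$, so either the hypothesis must be mildly tightened or the constant loosened; but this does not affect the overall structure.
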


\noindent Lemma~\ref{lem:matrix_diff_propagate} upper-bounds $\norm{\bm{\Sigma}_{k+1} - \bm{\Sigma}_{k+1}^*}_F$ in terms of $\norm{\bm{\Sigma}_k - \bm{\Sigma}_k^*}_F, \norm{\bfu_k - \bfu_k^*}_2$, and $\norm{\bm{\delta}_k}_2$. To establish an recursive characterization of $\norm{\bm{\Sigma}_k -\bm{\Sigma}_k^*}_F$, we need to obtain an upper bound for $\norm{\bfu_k - \bfu_k^*}_2$. Indeed, this can be easily obtained by applying the Davis-Kahan $\sin\Theta$ theorem \cite{davis1970rotation, eldridge2017unperturbed}.
\begin{lemma}
    \label{lem:top_eig_diff}
    Let $\mathcal{T}_{k} := \min_{j\neq k}\left|\lambda_k^* - \lambda_j^*\right|$. If $\norm{\bm{\Sigma}_k - \bm{\Sigma}_k^*}_F\leq \frac{1}{4}\mathcal{T}_k$, then we shall have that
    \begin{equation}
        \label{eq:top_eig_diff}
        \norm{\bfu_k - \bfu_k^*}_2 \leq \frac{2}{\mathcal{T}_k}\norm{\bm{\Sigma}_k - \bm{\Sigma}_k^*}_F.
    \end{equation}
\end{lemma}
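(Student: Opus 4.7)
The plan is to apply the Davis--Kahan $\sin\Theta$ theorem (Lemma~\ref{lem:davis-kahan}) to the pair $(\bm{\Sigma}_k^*, \bm{\Sigma}_k)$ with perturbation $\bfH := \bm{\Sigma}_k - \bm{\Sigma}_k^*$. The top eigenvectors of the two matrices are $\bfu_k^*$ and $\bfu_k$ respectively (the latter with the sign convention fixed in~\eqref{eq:uk_defin}), so the theorem will immediately yield a bound on $\sin\angle\{\bfu_k, \bfu_k^*\}$, which we then convert to a bound on $\norm{\bfu_k - \bfu_k^*}_2$.

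First, I would invoke the structure of the ideal-deflation matrix $\bm{\Sigma}_k^*$ (Lemma~\ref{lem:true_deflate_eigen_decomp}): its nonzero eigenvalues are $\lambda_k^* > \lambda_{k+1}^* > \cdots > \lambda_d^* > 0$ with top eigenvector $\bfu_k^*$, while the directions $\bfu_1^*,\dots,\bfu_{k-1}^*$ contribute zero eigenvalues. Consequently, every eigenvalue of $\bm{\Sigma}_k^*$ other than $\lambda_k^*$ is at most $\lambda_{k+1}^*$, and the gap from $\lambda_k^*$ to the rest of the spectrum of $\bm{\Sigma}_k^*$ is $\lambda_k^* - \lambda_{k+1}^* \geq \mathcal{T}_k$ by the lemma's definition of $\mathcal{T}_k$.

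Next I would control the \emph{perturbed} spectrum. By Weyl's inequality and the hypothesis $\norm{\bm{\Sigma}_k - \bm{\Sigma}_k^*}_F \leq \tfrac{1}{4}\mathcal{T}_k \geq \norm{\bm{\Sigma}_k - \bm{\Sigma}_k^*}_2$, each eigenvalue of $\bm{\Sigma}_k$ differs from the matching eigenvalue of $\bm{\Sigma}_k^*$ by at most $\mathcal{T}_k/4$. Hence every non-top eigenvalue $\lambda_j(\bm{\Sigma}_k)$ is at most $\lambda_{k+1}^* + \mathcal{T}_k/4$, producing an effective gap
\[
\min_{j\neq 1}\bigl|\lambda_k^* - \lambda_j(\bm{\Sigma}_k)\bigr| \;\geq\; (\lambda_k^* - \lambda_{k+1}^*) - \tfrac{1}{4}\mathcal{T}_k \;\geq\; \tfrac{3}{4}\mathcal{T}_k.
\]
Plugging this into Lemma~\ref{lem:davis-kahan} gives $\sin\angle\{\bfu_k,\bfu_k^*\} \leq \tfrac{4}{3\mathcal{T}_k}\norm{\bfH}_2 \leq \tfrac{4}{3\mathcal{T}_k}\norm{\bm{\Sigma}_k - \bm{\Sigma}_k^*}_F$.

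Finally, I would convert the angle to a Euclidean-norm bound. Because the sign choice in~\eqref{eq:uk_defin} forces $\bfu_k^\top \bfu_k^* \geq 0$, the angle $\theta$ lies in $[0,\pi/2]$, so the identity $\norm{\bfu_k - \bfu_k^*}_2 = 2\sin(\theta/2)$ together with $2\sin(\theta/2) \leq \sqrt{2}\sin\theta$ on this range yields
\[
\norm{\bfu_k - \bfu_k^*}_2 \;\leq\; \tfrac{4\sqrt{2}}{3\mathcal{T}_k}\norm{\bm{\Sigma}_k - \bm{\Sigma}_k^*}_F \;\leq\; \tfrac{2}{\mathcal{T}_k}\norm{\bm{\Sigma}_k - \bm{\Sigma}_k^*}_F,
\]
which is the desired inequality~\eqref{eq:top_eig_diff}. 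The only real obstacle is the bookkeeping around the effective eigengap: one must be careful that the Weyl-induced drift of $\mathcal{T}_k/4$ is genuinely absorbed into a constant-fraction reduction of $\mathcal{T}_k$, so that the Davis--Kahan prefactor remains $O(1/\mathcal{T}_k)$ rather than degenerating. Once this verification is made, the rest is standard trigonometric manipulation and the use of $\norm{\cdot}_2 \leq \norm{\cdot}_F$.
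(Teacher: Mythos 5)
Your proof is correct and follows essentially the same route as the paper: apply Davis--Kahan to the pair $(\bm{\Sigma}_k^*, \bm{\Sigma}_k)$, use Weyl's inequality to show the perturbed eigengap is still at least $\tfrac{3}{4}\mathcal{T}_k$, and then convert $\sin\angle\{\bfu_k,\bfu_k^*\}$ into $\norm{\bfu_k-\bfu_k^*}_2$ with a $\sqrt{2}$ factor. The only cosmetic difference is the final conversion step --- you invoke the half-angle identity $\norm{\bfu_k - \bfu_k^*}_2 = 2\sin(\theta/2)$ together with $2\sin(\theta/2)\le\sqrt{2}\sin\theta$, whereas the paper writes $\norm{\bfu_k - \bfu_k^*}_2^2 = 2-2\bfu_k^\top\bfu_k^* \le 2-2(\bfu_k^\top\bfu_k^*)^2 = 2\sin^2\theta$; both rely on the sign convention $\bfu_k^\top\bfu_k^*\ge 0$ and yield the same constant.
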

Plugging (\ref{eq:top_eig_diff}) into (\ref{eq:matrix_diff_propagate}) gives a recurrence that depends purely on $\norm{\bm{\delta}_k}_2$ and the spectrum of $\bm{\Sigma}$:
\begin{align*}
    \norm{\bm{\Sigma}_{k+1} - \bm{\Sigma}_{k+1}^*}_F & \leq \paren{3 + \frac{2}{\mathcal{T}_k}}\norm{\bm{\Sigma}_k - \bm{\Sigma}_k^*}_F\\
    & \quad\quad + 5\lambda_k^*\norm{\bm{\delta}_k}_2.
\end{align*}
Unrolling this recurrence gives a closed-form upper bound for $\norm{\bm{\Sigma}_k - \bm{\Sigma}_k^*}_F$. Combining this upper bound with (\ref{eq:top_eig_diff}) and plugging the result into (\ref{eq:eig_err_decomp}) gives us an upper bound for $\norm{\bfv_k - \bfu_k^*}_2$.

\vspace{-0.2cm}
\section{Error Propagation When Using Power Iteration}
\label{sec:main_result_pi}
In this section, we turn our focus to a specific sub-routine algorithm ($\texttt{PCA}$ in line 5 of Algorithm~\ref{alg:main-alg}) for finding the top eigenvectors: the power iteration method \citep{muntz1913solution}.
Given a symmetric matrix $\bfM\in\R^{d\times d}$ with rank $r$ and an initialization vector $\bfx_0\in\R^d$, the power iteration executes:
\begin{equation}
    \label{eq:power_iteration}
    \vspace{-0.3cm}
    \hat{\bfx}_{t+1} = \bfM\bfx_t;\quad \bfx_{t+1} = \frac{\hat{\bfx}_{t+1}}{\norm{\hat{\bfx}_{t+1}}_2}.
\end{equation}
Assuming that $\texttt{PCA}$ is the power iteration not only gives us the convergence rate of $\norm{\bm{\delta}_k}_2$ in terms of the number of steps in each power iteration, but also provides the directional information $\bm{\delta}_k$. In particular, one could see that $\left|\bm{\delta}_k^\top\bfu_{k+j-1}\right| \leq c_0\paren{\tfrac{\lambda_j\paren{\Sigma}_k}{\lambda_k^*}}^t$ for some constant $c_0$\footnote{For the detail of proving this statement please see Lemma~\ref{lem:pw_err_eig_align}}. Namely, the component of $\bm{\delta}_k$ along $\bfu_j$ has an exponentially smaller magnitude if $j$ is larger, depending on the eigen spectrum of $\bm{\Sigma}^*$. By leveraging this additional information, our theorem below shows that we can improve the bound on $\norm{\bfv_k - \bfu_k^*}_2$, when $\texttt{PCA}$ is specified to the power iteration, compared to the sub-routine agnostic result in Theorem~\ref{thm:main_theorem_1}.
\begin{theorem} 
    \label{thm:power-iteration}
    Consider the problem of looking for the top-$K$ eigenvectors of $\bm{\Sigma} \in\R^{d\times d}$ under Assumption~\ref{assu:main-assumption}. Let $\mathcal{T}_i = \lambda_i^* - \lambda_{i+1}^*$ for $i\in[d-1]$ and $\mathcal{T}_d = \lambda_d^*$, and let $\mathcal{T}_{k,\min} = \min_{i\in[k]}\mathcal{T}_i$. Assume that there exists a constant $c_0  \in (0,\infty)$ such that the initialized vector $\bfx_{0,k}$ in the $k$-th power iteration procedure satisfies $\big|\bfx_{0,k}^\top\bfu_{k}\big| > c_0^{-1}$ for all $k\in[K]$.\footnote{This is a standard assumption to guarantee the convergence of the power iteration, as in \citet{desa2017accelerated, GoluVanl96}} Let $\mathcal{G} =  \max_{k\in[K]}\paren{1 + \frac{c_0\lambda_k^*\lambda_{k+1}^*}{\lambda_k^*-\lambda_{k+1}^*}}$. If we use $t$ steps of power iteration to solve for the top eigenvectors, where $t$ satisfies:
    \begin{equation}
        \label{eq:thm_pw_t_req_0}
        t \geq \begin{cases}
            \log2\mathcal{G}_k\paren{\log \frac{\lambda_{k'+1}^* + 7\lambda_{k'}^*}{\lambda_{k'}^* + 7\lambda_{k'+1}^*}}^{-1}, & \forall k\leq K\\
            \frac{1}{\log \lambda_{k}^* - \log \lambda_{k+1}^*}, &  \forall k\leq d
        \end{cases}
    \end{equation}
    Moreover, if $t$ is also large enough to guarantee that
    \begin{equation}
        \label{eq:thm_pw_t_req}
        \sum_{k'=1}^{K-1}8^{K-k'}\frac{\lambda_{k'}^*}{\mathcal{T}_{k'}}\paren{\frac{7\lambda_{k+1}^* + \lambda_k^*}{7\lambda_k^* + \lambda_{k+1}^*}}^t \leq \frac{\mathcal{T}_{K,\min}}{140c_0},
    \end{equation}
    Then we can guarantee that for all $k\in[K]$
    \begin{equation}
        \label{eq:thm_pw}
        \begin{aligned}
        & \norm{\bfv_k - \bfu_k^*}_2 \\
        &\quad\quad\leq3\sum_{k'=1}^{k}8^{k-k'}\frac{\lambda_{k'}^*}{\lambda_k^*}\paren{5\norm{\bm{\delta}_{k'}}_2 + \frac{7c_0}{\mathcal{T}_{k'}}\paren{\frac{\lambda_{k'+1}^*}{\lambda_{k'}^*}}^t}.
        \end{aligned}
    \end{equation}
\end{theorem}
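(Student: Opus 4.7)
My plan is to mirror the recursion-based analysis of Theorem~\ref{thm:main_theorem_1} --- namely, track $\|\bm{\Sigma}_k - \bm{\Sigma}_k^*\|_F$ and $\|\bfu_k - \bfu_k^*\|_2$ jointly through the deflation steps --- but replace the eigengap-dependent estimates from Lemma~\ref{lem:top_eig_diff} with sharper ones that exploit the directional structure of the power-iteration error $\bm{\delta}_k$. The target is a recurrence of the form $\|\bm{\Sigma}_{k+1}-\bm{\Sigma}_{k+1}^*\|_F \leq 8 \|\bm{\Sigma}_k-\bm{\Sigma}_k^*\|_F + (\text{sub-routine contribution})$, with a \emph{universal constant} factor $8$ in place of the potentially large $3 + 2\lambda_j^*/\mathcal{T}_j$. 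Unrolling such a recurrence immediately produces the $8^{k-k'}$ geometric factor in \eqref{eq:thm_pw}; the residual $(\lambda_{k'+1}^*/\lambda_{k'}^*)^t$ terms must then arise from the leakage of $\bm{\delta}_k$ along the lower-order eigenvectors of $\bm{\Sigma}_k$.

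\textbf{Step 1: directional decomposition of $\bm{\delta}_k$.} First I would record the standard power-iteration lemma (referenced in the paper as Lemma~\ref{lem:pw_err_eig_align}): if $\bfv_k$ is the output of $t$ steps on $\bm{\Sigma}_k$ from a starting point $\bfx_{0,k}$ with $|\bfx_{0,k}^\top\bfu_k|>c_0^{-1}$, then for every normalized eigenvector $\bfu^{(k)}_j$ of $\bm{\Sigma}_k$ with eigenvalue $\lambda_j(\bm{\Sigma}_k)$, one has $|\bm{\delta}_k^\top\bfu^{(k)}_j|\leq c_0\bigl(\lambda_j(\bm{\Sigma}_k)/\lambda_1(\bm{\Sigma}_k)\bigr)^t$. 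The condition \eqref{eq:thm_pw_t_req_0} is precisely what makes these ratios small enough to drive both the convergence of $\bfv_k$ to $\bfu_k$ and the validity of a Neumann expansion below.

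\textbf{Step 2: sharpened eigenvector perturbation.} Instead of applying Davis--Kahan as a black box, I would use the Neumann expansion technique of \citet{eldridge2017unperturbed} and the asymmetric analysis of \citet{chen2020asymmetry} to write $\bfu_k - \bfu_k^* = \sum_{j\neq k}\alpha_{k,j}\bfu_j^*$, where the coefficients $\alpha_{k,j}$ are expressed as series in the perturbation $\bm{\Sigma}_k - \bm{\Sigma}_k^*$. The key gain is that after deflation, the dominant contribution to $\bm{\Sigma}_k - \bm{\Sigma}_k^*$ comes from $\bfv_{k-1}\bfv_{k-1}^\top\bm{\Sigma}_{k-1}\bfv_{k-1}\bfv_{k-1}^\top - \bfu_{k-1}^*\bfu_{k-1}^{*\top}\bm{\Sigma}_{k-1}^*\bfu_{k-1}^*\bfu_{k-1}^{*\top}$; the inner products of this rank-one-like object with $\bfu_j^*$ can be bounded using Step~1 so that the eigengap $\mathcal{T}_k$ in the denominator of \eqref{eq:top_eig_diff} is cancelled by a matching factor in the numerator, leaving only a universal constant. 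Condition \eqref{eq:thm_pw_t_req} guarantees that $\|\bm{\Sigma}_k - \bm{\Sigma}_k^*\|_F\leq \tfrac{1}{4}\mathcal{T}_{k,\min}$ so that these expansions converge and the hypothesis of Lemma~\ref{lem:top_eig_diff} remains available as a safety net.

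\textbf{Step 3: refined deflation-matrix recurrence and unrolling.} With the improved eigenvector bound, I would re-derive Lemma~\ref{lem:matrix_diff_propagate} to obtain
\[
\|\bm{\Sigma}_{k+1}-\bm{\Sigma}_{k+1}^*\|_F \leq 8\|\bm{\Sigma}_k-\bm{\Sigma}_k^*\|_F + 15\lambda_k^*\|\bm{\delta}_k\|_2 + \tfrac{21\,c_0\lambda_k^*}{\mathcal{T}_k}\bigl(\lambda_{k+1}^*/\lambda_k^*\bigr)^t,
\]
where the last term captures the directional leakage from Step~1. Unrolling yields $\|\bm{\Sigma}_k-\bm{\Sigma}_k^*\|_F \leq \sum_{k'<k} 8^{k-k'}\bigl(15\lambda_{k'}^*\|\bm{\delta}_{k'}\|_2 + \tfrac{21 c_0 \lambda_{k'}^*}{\mathcal{T}_{k'}}(\lambda_{k'+1}^*/\lambda_{k'}^*)^t\bigr)$, and plugging into the decomposition \eqref{eq:eig_err_decomp} (with Lemma~\ref{lem:top_eig_diff} used to pass from $\|\bm{\Sigma}_k-\bm{\Sigma}_k^*\|_F$ to $\|\bfu_k-\bfu_k^*\|_2$ at a cost of a $1/\lambda_k^*$ factor rather than $1/\mathcal{T}_k$) produces the advertised bound \eqref{eq:thm_pw}. \textbf{The hardest part} will be Step~2: the eigenbasis $\{\bfu^{(k)}_j\}$ on which the power-iteration error is controlled is \emph{not} the ground-truth basis $\{\bfu_j^*\}$ appearing in the Neumann expansion, and the discrepancy between the two bases is itself a propagated error. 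Carefully disentangling this self-referential coupling --- and showing that the eigengap factor $1/\mathcal{T}_k$ is cancelled rather than multiplied --- is where the bulk of the technical work lies, and is what permits the transition from the eigengap-dependent blow-up of Theorem~\ref{thm:main_theorem_1} to the constant-base exponential growth of Theorem~\ref{thm:power-iteration}.
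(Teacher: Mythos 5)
Your overall strategy is the same as the paper's: replace the eigengap-dependent Davis--Kahan estimate in Lemma~\ref{lem:top_eig_diff} with a sharper bound on $\norm{\bfu_k - \bfu_k^*}_2$ derived from power-iteration directional structure plus a Neumann expansion, plug that into Lemma~\ref{lem:matrix_diff_propagate} (which the paper re-uses unchanged, whereas you speak of re-deriving it --- immaterial) to get a recurrence with a universal constant $8$ replacing $3 + 2\lambda_k^*/\mathcal{T}_k$, then unroll and argue by contradiction that $\norm{\bm{\Sigma}_k - \bm{\Sigma}_k^*}_F$ stays small. So far, so good.

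The genuine gap is in your Step~1 and you have already flagged it yourself: you base the directional bound on Lemma~\ref{lem:pw_err_eig_align}, which controls $\big|\bm{\delta}_k^\top\bfu_j^{(k)}\big|$ in the eigenbasis of the \emph{empirical} matrix $\bm{\Sigma}_k$, but Step~2's Neumann expansion lives entirely in the ground-truth basis $\{\bfu_j^*\}$. You call converting between the two "the hardest part" but you never supply the conversion, and without it the argument does not close. The paper resolves this with two tools you omit: Lemma~\ref{lem:pw_mat_perturb}, a perturbed directional bound that controls $\big|\bfv_k^\top\bfu_j^*\big|$ \emph{directly} in the ground-truth basis at the cost of an additive term $\frac{\sigma_j^*}{\sigma_1-\sigma_j^*}\norm{\bfH\bfa_j^*}_2$; and Lemma~\ref{lem:matrix_diff_align}, which bounds $\norm{(\bm{\Sigma}_k-\bm{\Sigma}_k^*)\bfu_j^*}_2$ by $\sum_{k'<k}\lambda_{k'}\big|\bfv_{k'}^\top\bfu_j^*\big|$. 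Feeding one into the other creates, for each fixed $j$, a separate scalar recursion in $k$ over the quantities $\big|\bfv_k^\top\bfu_j^*\big|$; its closed-form solution (equation~(\ref{eq:mat_diff_align_closed_form}) via Lemma~\ref{lem:lem:sum_acc_seq_growth_exact}) is what ultimately feeds the Neumann-expansion step (Lemma~\ref{lem:inner_ub_to_inner_lb}, yielding Lemma~\ref{lem:pw_top_eig_diff}). That nested recursion is the technical core that is missing from your write-up. A smaller inaccuracy: you describe the $1/\mathcal{T}_k$ factor from \eqref{eq:top_eig_diff} as being ``cancelled by a matching factor in the numerator.'' Nothing cancels. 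The Neumann-based Lemma~\ref{lem:inner_ub_to_inner_lb} produces a $1/\lambda_k^{*2}$ (not $1/\mathcal{T}_k^2$) scaling for the term multiplying $\norm{\bm{\Sigma}_k-\bm{\Sigma}_k^*}_F^2$ outright; the eigengap dependence is quarantined into a separate additive term proportional to $\sum_{j>k}(\lambda_j^*/\lambda_k^*)^{2t}$, which is driven down by the assumptions \eqref{eq:thm_pw_t_req_0}--\eqref{eq:thm_pw_t_req} on $t$. Falling back to Lemma~\ref{lem:top_eig_diff} as a ``safety net,'' as you suggest, would reintroduce the very $1/\mathcal{T}_k$ scaling the whole construction exists to avoid.
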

\begin{figure}
    \centering
  \includegraphics[width=\linewidth]{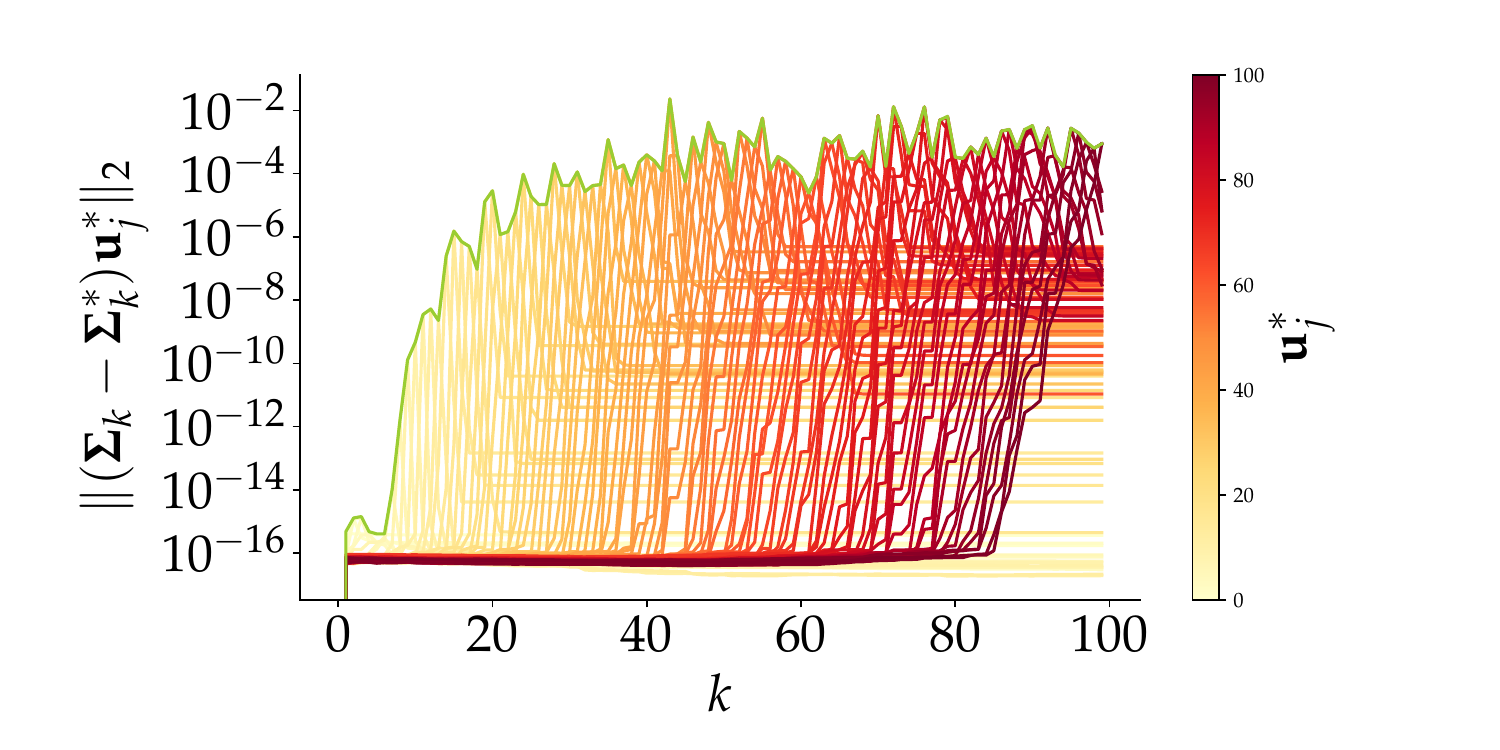}
  \vspace{-0.5cm}
  \caption{\textit{Dynamics of $\norm{\paren{\bm{\Sigma}_k - \bm{\Sigma}_k^*}\bfu_j^*}_2$ with respect to the change of $k$}. Each $\bfu_j^*$ is represented by a different color, with light color for small $j$ and dark color for large $j$. Experiments done for $\bm{\Sigma}\in\R^{100\times 100},\lambda_k^* = \frac{1}{k},\left\{\bfu_k^*\right\}_{k=1}^d$ being randomly generated orthogonal basis, and $t = 200$. The orthogonal basis $\left\{\bfu_k^*\right\}_{k=1}^d$ is generated by randomly sampling a matrix with I.I.D. Gaussian entries, and computing its left singular vectors.}
  \label{fig:mat_diff_align}
\end{figure}
\textbf{Remark 5.}
Similar to Theorem~\ref{thm:main_theorem_1}, the requirement of $t$ in (\ref{eq:thm_pw_t_req}) is to guarantee that $\norm{\bm{\Sigma}_k- \bm{\Sigma}_k^*}_F \leq \frac{1}{8}\mathcal{T}_{K,\min}$. In particular, (\ref{eq:thm_pw_t_req}) combined with (\ref{eq:thm_pw_t_req_0}) can be satisfied by the following simplified condition on $t$
\begin{equation}
    t \geq \begin{cases}
        \paren{\frac{\max\left\{\log 2\mathcal{G},3(K-k) + \log \tfrac{140c_0K}{\mathcal{T}_{K,\min}^2}\right\}}{\log\paren{7\lambda_{k}^* + \lambda_{k+1}^*}-\log\paren{7\lambda_{k+1}^* + \lambda_{k}^*}}}, & \forall k\leq K\\
        \frac{1}{\log \lambda_{k}^* - \log \lambda_{k+1}^*}, &  \forall k\leq d
    \end{cases}.
\end{equation}
The upper bound on $\norm{\bfv_k - \bfu_k^*}_2$ in (\ref{eq:thm_pw}) also takes a similar form to (\ref{eq:thm1_conclusion}). Recall that when $\texttt{PCA}$ is chosen to be the power iteration, we have $\norm{\bm{\delta}_k}_2 \leq c_0\paren{\tfrac{\lambda_{2}\paren{\bm{\Sigma}_k}}{\lambda_k}}^t\approx c_0\paren{\frac{\lambda_{k+1}^*}{\lambda_k^*}}^t$. Therefore, one can treat $\paren{5\norm{\bm{\delta}_{k'}}_2 + \tfrac{7c_0}{\mathcal{T}_{k'}}\paren{\tfrac{\lambda_{k'+1}^*}{\lambda_{k'}^*}}^t}$ in (\ref{eq:thm_pw}) as roughly $\tfrac{C}{\mathcal{T}_{k'}}\norm{\bm{\delta}_{k'}}_2$ for some constant $C$. Thus, the $\prod_{j=k+1}^K\paren{3 + \tfrac{2\lambda_j^*}{\mathcal{T}_j}}$ scaling of each summand in (\ref{eq:thm1_conclusion}) that depends on $\mathcal{T}_j$ is improved to the exponential scaling of $8^{K-k}$ in (\ref{eq:thm_pw}). This improvement is further made clear in the corollary below.
\begin{corollary} \label{cor:power-iteration2}
    Under the same assumptions as in Theorem~\ref{thm:power-iteration}, if we use $t$ steps of power iteration to solve for the top eigenvectors such that:
    \begin{equation}
        \label{eq:pw_t_req}
        t \geq \begin{cases}
            \Omega\paren{\frac{\max\left\{\log\mathcal{G},K-k + \log \frac{c_0K}{\epsilon\mathcal{T}_{K,\min}}\right\}}{\log\paren{7\lambda_{k}^* + \lambda_{k+1}^*} - \log\paren{7\lambda_{k+1}^* + \lambda_{k}^*}}}, &  \forall k\leq K, \text{ and}\\
            \Omega\paren{\frac{1}{\log \lambda_{k}^* - \log \lambda_{k+1}^*}}, &  \forall k\leq d,
        \end{cases}
    \end{equation}
    then we can guarantee $\norm{\bfv_k - \bfu_k}_2\leq \varepsilon$ for all $k\in[K]$.
\end{corollary}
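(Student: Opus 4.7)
The plan is to derive Corollary~\ref{cor:power-iteration2} as a direct consequence of Theorem~\ref{thm:power-iteration}: pick $t$ large enough that the bound in \eqref{eq:thm_pw} is at most $\varepsilon$ and that the preconditions \eqref{eq:thm_pw_t_req_0} and \eqref{eq:thm_pw_t_req} hold. Since the theorem already packages the end-to-end error bound, the corollary is essentially a ``solve for $t$'' exercise.

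First I would convert the abstract error $\norm{\bm{\delta}_{k'}}_2$ into an explicit power-iteration rate. When the sub-routine is $t$ steps of power iteration applied to $\bm{\Sigma}_{k'}$ starting from $\bfx_{0,k'}$ with $|\bfx_{0,k'}^\top\bfu_{k'}|>c_0^{-1}$, the standard analysis gives $\norm{\bm{\delta}_{k'}}_2 \leq c_0(\lambda_2(\bm{\Sigma}_{k'})/\lambda_1(\bm{\Sigma}_{k'}))^t$. Under the deflation-error bound $\norm{\bm{\Sigma}_{k'}-\bm{\Sigma}_{k'}^*}_F \leq \mathcal{T}_{K,\min}/8$ supplied by \eqref{eq:thm_pw_t_req}, Weyl's inequality sandwiches $\lambda_1(\bm{\Sigma}_{k'})\geq(7\lambda_{k'}^*+\lambda_{k'+1}^*)/8$ and $\lambda_2(\bm{\Sigma}_{k'})\leq(7\lambda_{k'+1}^*+\lambda_{k'}^*)/8$, whence $\norm{\bm{\delta}_{k'}}_2 \leq c_0\rho_{k'}^t$ with $\rho_{k'}\defeq(7\lambda_{k'+1}^*+\lambda_{k'}^*)/(7\lambda_{k'}^*+\lambda_{k'+1}^*)$. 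Because $\lambda_{k'+1}^*/\lambda_{k'}^* \leq \rho_{k'}$ holds trivially, the second summand inside \eqref{eq:thm_pw} is likewise at most $(7c_0/\mathcal{T}_{k'})\rho_{k'}^t$, so \eqref{eq:thm_pw} collapses to
\[
    \norm{\bfv_k-\bfu_k^*}_2 \leq C\sum_{k'=1}^{k}8^{k-k'}\frac{\lambda_{k'}^*}{\lambda_k^*}\frac{c_0}{\mathcal{T}_{k'}}\rho_{k'}^{t}
\]
for some absolute constant $C$.

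Second, I would force each of the (at most $K$) summands to be $\leq \varepsilon/K$. Taking logarithms in $8^{k-k'}\rho_{k'}^t \leq \varepsilon\mathcal{T}_{k'}\lambda_k^*/(CKc_0\lambda_{k'}^*)$, using $\mathcal{T}_{k'}\geq\mathcal{T}_{K,\min}$, and absorbing the $\lambda_{k'}^*/\lambda_k^*$ ratio together with the remaining $O(1)$ factors into $\Omega(\cdot)$, yields
\[
    t \geq \Omega\!\paren{\frac{(K-k')\log 8 + \log\paren{c_0 K/(\varepsilon\mathcal{T}_{K,\min})}}{\log(1/\rho_{k'})}}
\]
for every $k'\in[K]$ (taking the worst target $k=K$). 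Since $\log(1/\rho_{k'})=\log(7\lambda_{k'}^*+\lambda_{k'+1}^*)-\log(7\lambda_{k'+1}^*+\lambda_{k'}^*)$, this is precisely the first branch of \eqref{eq:pw_t_req} after relabeling $k'$ as $k$. The $\log\mathcal{G}$ entry inside the max of \eqref{eq:pw_t_req} accommodates the first clause of precondition \eqref{eq:thm_pw_t_req_0}, while the second branch of \eqref{eq:pw_t_req} reproduces the second clause of \eqref{eq:thm_pw_t_req_0}. Precondition \eqref{eq:thm_pw_t_req} is the same inequality as above with $\varepsilon$ replaced by the weaker threshold $\mathcal{T}_{K,\min}/(140c_0)$, and is therefore automatically subsumed whenever $\varepsilon \lesssim \mathcal{T}_{K,\min}/c_0$ (otherwise the conclusion is vacuous).

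The main obstacle is the bookkeeping of logarithmic constants: verifying that the single simplified threshold in \eqref{eq:pw_t_req} simultaneously dominates the error sum, the deflation-accuracy precondition \eqref{eq:thm_pw_t_req}, and both clauses of \eqref{eq:thm_pw_t_req_0}, and ruling out an apparent circularity---namely that the bound $\norm{\bm{\delta}_{k'}}_2 \leq c_0\rho_{k'}^t$ presupposes prior deflated matrices to already be accurate. This last point is resolved by observing that Theorem~\ref{thm:power-iteration} already discharges the needed joint control on $\norm{\bm{\Sigma}_{k'}-\bm{\Sigma}_{k'}^*}_F$ under its hypotheses, so the corollary's proof can quote it directly rather than re-establish the induction.
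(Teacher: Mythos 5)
Your proposal matches the paper's own derivation (sketched in Remark~6): convert the abstract $\norm{\bm{\delta}_{k'}}_2$ to the power-iteration rate $c_0\paren{\lambda_2(\bm{\Sigma}_{k'})/\lambda_1(\bm{\Sigma}_{k'})}^t$, sandwich $\lambda_1,\lambda_2$ via Weyl's inequality under the $\frac{1}{8}\mathcal{T}_{K,\min}$ deflation bound to get the ratio $\rho_{k'}=\frac{7\lambda_{k'+1}^*+\lambda_{k'}^*}{7\lambda_{k'}^*+\lambda_{k'+1}^*}$, then solve for $t$ so that each summand in \eqref{eq:thm_pw} is at most $\varepsilon/K$ while simultaneously satisfying \eqref{eq:thm_pw_t_req_0} and \eqref{eq:thm_pw_t_req}. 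The bookkeeping you flag (absorbing the $\log 2\mathcal{G}_k$ term into $\log\mathcal{G}$, and noting that \eqref{eq:thm_pw_t_req} is subsumed because the $\Omega(\cdot)$ threshold with the $\epsilon\mathcal{T}_{K,\min}$ denominator dominates the $\mathcal{T}_{K,\min}^2$ denominator of the theorem's precondition up to constants) is exactly the right sanity check, and your resolution of the apparent circularity by deferring the joint induction to Theorem~\ref{thm:power-iteration} is how the paper intends the corollary to be read. This is the same approach as the paper, carried out correctly.
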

\textbf{Remark 6.}
Corollary~\ref{cor:power-iteration2} is obtained by setting $\norm{\bm{\delta}_k}_2 \leq c_0\paren{\tfrac{\lambda_{2}\paren{\bm{\Sigma}_k}}{\lambda_k}}^t$ in power iteration and notice that $\tfrac{\lambda_{2}\paren{\bm{\Sigma}_k}}{\lambda_k} \leq \tfrac{7\lambda_{k+1}^* + \lambda_k^*}{7\lambda_k^* + \lambda_{k+1}^*}$ when $\norm{\bm{\Sigma}_k - \bm{\Sigma}_k^*}_2\leq \tfrac{1}{8}\mathcal{T}_k$. When we treat $c_0$ and $\left\{\lambda_k^*\right\}_{k=1}^d$ as constants, $\mathcal{G}$ also becomes a constant. In this case, the lower bound on $t$ in (\ref{eq:pw_t_req}) becomes $t \geq \Omega\paren{\log\frac{K}{\epsilon} + K}$. Compared with the $t\geq \Omega\paren{\log\frac{K}{\epsilon} + K\log K}$ requirement we derived in the discussion of Corollary~\ref{cor:main_cor_t}.
Thus, our lower bound in Corollary~\ref{cor:power-iteration2} saves a factor of $\log K$.

\medskip
\begin{figure*}
    \centering
    \includegraphics[width=\linewidth]{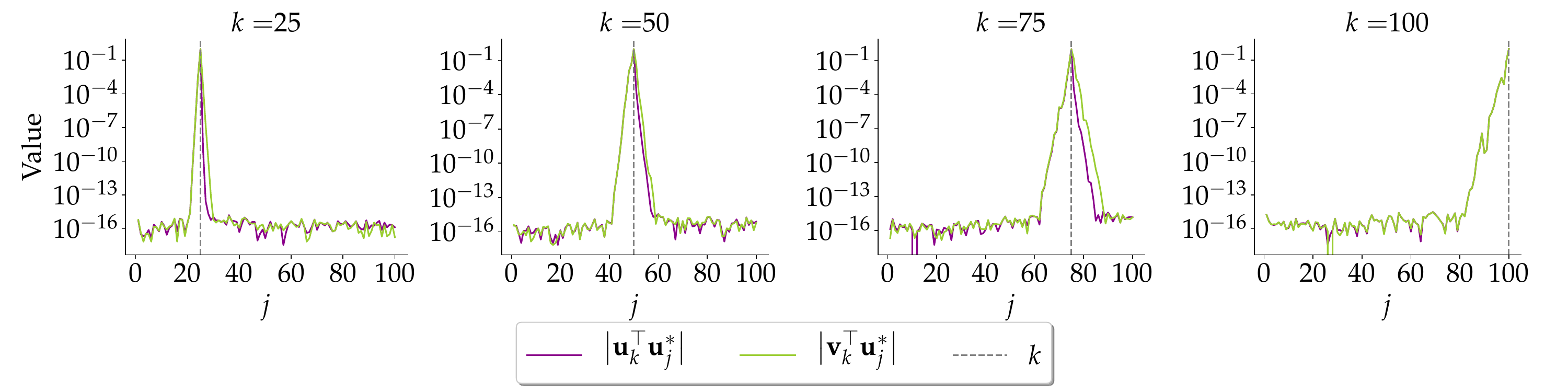}
    \caption{The comparison among the dynamics of $\bfu_k^\top\bfu_j^*$ and $\bfv_k^\top\bfu_j^*$ with respect to the change of $k$ for $j\in\{25, 50, 75, 100\}$. Experiments are performed for $\bm{\Sigma}\in\R^{100\times 100},\lambda_k^* = \frac{1}{k},\left\{\bfu_k^*\right\}_{k=1}^d$ being randomly generated orthogonal basis, with $t = 200$. The results show that both $\left|\bfu_k^\top\bfu_j^*\right|$ and $\left|\bfv_k^\top\bfu_j^*\right|$ are small only when $k$ is near $j$. The orthogonal basis $\left\{\bfu_k^*\right\}_{k=1}^d$ is generated by randomly sampling a matrix with I.I.D. Gaussian entries, and computing its left singular vectors.}
    \label{fig:u_k_start_v_k_star}
    \vspace{-0.3cm}
\end{figure*}
\noindent\textbf{Remark 7.} In the discussion about Theorem~\ref{thm:power-iteration} and Corollary~\ref{cor:power-iteration2} above, we deliberately ignored the effect of $c_0$ and $\mathcal{G}$. For $\mathcal{G}$, we observe that $\mathcal{G}$ can also be written as $\mathcal{G} = \max_{k\in[K]}\paren{1 + c_0\paren{\frac{1}{\lambda_{k+1}^*} - \frac{1}{\lambda_k^*}}^{-1}}$. If the eigenvalues of $\bm{\Sigma}$ follow a power law decay as $\lambda_k^* = \frac{1}{k^\gamma}$ for some $\gamma \geq 1$, then we have $\frac{1}{\lambda_{k+1}^*} - \frac{1}{\lambda_k^*}\geq 1$ and thus $\mathcal{G} \leq c_0 + 1$. Next, we focus on $c_0$. If in each power iteration sub-routine, the initialized vector $\bfx_{0,k}$ is generated randomly, then $\big|\bfx_{0,k}^\top\bfu_k\big|$ is independent of $\bfu_k$, and $\mathbb{P}\paren{\big|\bfx_{0,k}^\top\bfu_k\big| > 0} = 1$. This idea can be made concrete in the following lemma
\begin{lemma}
    \label{lem:init_property}
    Let $\bfx_{0,k}$ be sampled uniformly from a unit sphere in $\R^d$. Then with probability at least $1 - \frac{2K}{3d}$ we have that $\big|\bfx_{0,k}^\top\bfu_k\big|\geq \frac{1}{\sqrt{1 + 2d^3}}$ for all $k\in[K]$.
\end{lemma}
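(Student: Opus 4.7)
\textbf{Proof plan for Lemma~\ref{lem:init_property}.} The plan is to reduce the problem to an anti-concentration statement about a single coordinate of a uniform random vector on $S^{d-1}$, and then apply a union bound over $k \in [K]$. The key observation is that $\bfu_k$ is a deterministic function of $\bm{\Sigma}_k$, which in turn depends only on $\bfx_{0,1},\dots,\bfx_{0,k-1}$ (through $\bfv_1,\dots,\bfv_{k-1}$), and is therefore independent of $\bfx_{0,k}$. Conditioning on the sigma-algebra generated by $\bfx_{0,1},\dots,\bfx_{0,k-1}$ and invoking the rotational invariance of the uniform distribution on the sphere, the inner product $X_k := \bfx_{0,k}^\top \bfu_k$ has exactly the distribution of the first coordinate of a uniform unit vector in $\R^d$.

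The distribution of such a coordinate (for $d \geq 3$) admits the density $f(x) = \tfrac{\Gamma(d/2)}{\sqrt{\pi}\,\Gamma((d-1)/2)} (1-x^2)^{(d-3)/2}$ on $[-1,1]$. Since the exponent $(d-3)/2 \geq 0$ and $1-x^2 \leq 1$, we have $f(x) \leq f(0)$ pointwise, and Gautschi's inequality yields $\tfrac{\Gamma(d/2)}{\Gamma((d-1)/2)} \leq \sqrt{(d-1)/2}$. Setting $\epsilon := 1/\sqrt{1+2d^3}$, I would bound
\begin{equation*}
\mathbb{P}\bigl(|X_k| \leq \epsilon\bigr) \leq 2\epsilon \cdot f(0) \leq \frac{1}{\sqrt{\pi}} \sqrt{\frac{2(d-1)}{1+2d^3}} \leq \frac{1}{d\sqrt{\pi}},
\end{equation*}
where the last inequality follows from $2(d-1)/(1+2d^3) \leq 1/d^2$.

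The final step is a union bound over $k=1,\dots,K$, giving
\begin{equation*}
\mathbb{P}\bigl(\exists k\in[K]:\, |X_k| < \epsilon\bigr) \leq \frac{K}{d\sqrt{\pi}} \leq \frac{2K}{3d},
\end{equation*}
where the last inequality uses $\sqrt{\pi} > 3/2$. Taking complements yields the claim.

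The main conceptual subtlety, rather than a calculational obstacle, is justifying that rotational invariance applies at step $k$ even though $\bfu_k$ itself is a random vector: one must argue carefully via conditioning that $\bfx_{0,k}$ is drawn independently of $\bfu_k$, so that one can rotate coordinates to place $\bfu_k$ along $\bfe_1$ without changing the law of $\bfx_{0,k}$. The remaining work is the elementary density bound and a clean application of Gautschi's inequality, both of which are routine.
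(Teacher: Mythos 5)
Your proof is correct and it takes a genuinely different (and arguably cleaner) route than the paper's. Both arguments reduce, via rotational invariance and conditioning, to a one-dimensional anti-concentration statement for a single coordinate $b_1$ of a vector uniform on $S^{d-1}$. From that point the two diverge: the paper integrates the indicator $\indy{\sum_{j\ge 2}(b_j/b_1)^2 \le 2d^3}$ over the unit ball using a change-of-measure identity quoted from Kuczynski and Wo\'zniakowski (their Remark~7.2), reduces to a one-dimensional integral via the volume of $(d-1)$-dimensional balls, and then invokes the numerical bound $V_{d-1}/V_d \le 0.412\sqrt{d}$, which the paper itself flags as valid only for $d\geq 8$. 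You instead write down the exact density $f(x) = \tfrac{\Gamma(d/2)}{\sqrt{\pi}\,\Gamma((d-1)/2)}(1-x^2)^{(d-3)/2}$ of $b_1$, bound it pointwise by $f(0)$ (valid for $d\geq 3$), and control $f(0)$ via Gautschi's inequality. Your calculation checks out: with $\epsilon = (1+2d^3)^{-1/2}$, $\mathbb{P}(|X_k|\le\epsilon)\le 2\epsilon f(0)\le \tfrac{1}{\sqrt{\pi}}\sqrt{2(d-1)/(1+2d^3)}\le 1/(d\sqrt{\pi})$, and the union bound over $K$ terms plus $\sqrt{\pi} > 3/2$ gives $2K/(3d)$; in fact even the weaker Gautschi bound $\Gamma(d/2)/\Gamma((d-1)/2)\le\sqrt{d/2}$ would suffice since $2d/(1+2d^3)\le 1/d^2$. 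The practical trade-off is that the paper's route is closer to the power-iteration literature it cites, whereas yours is self-contained, needs only elementary facts about the Gamma function, and works for slightly smaller $d$. You are also more explicit than the paper about the conditioning argument that lets you treat the random vector $\bfu_k$ (which depends on $\bfx_{0,1},\dots,\bfx_{0,k-1}$) as fixed when invoking rotational invariance of $\bfx_{0,k}$; that care is warranted and worth keeping.
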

The proof of Lemma~\ref{lem:init_property} follows the idea of \citep{wang2020nearly}, and is deferred to Appendix~\ref{sec:proof_init_property}. Next, we will first explore the property of power iteration crucial to our proof of Theorem~\ref{thm:power-iteration} in Section~\ref{sec:power_iteration}, and then provide an overview of the proof in Section~\ref{sec:proof_overview_pw}.

\subsection{What Does Power Iteration Tell Us?}
\label{sec:power_iteration}
Recall our previous observation that the error vector returned by power iteration has different magnitudes along different eigenvector directions of the matrix of interest.
In this section, we perform the same analysis, assuming that the matrix passed into the power iteration sub-routine is a perturbation of some ``ground-truth'' matrix. Let $\bfa_j^*$ be the $j$-th eigenvector of $\bfM^*$ and $\bfM = \bfM^* + \bfH$. Given the definition of $\bfx_t$ as the result of a $t$-step power iteration on $\bfM$, defined in (\ref{eq:power_iteration}), we will show that $\bfx_t^\top\bfa_j^*$'s take different forms depending on the index of the eigenvector $j$.
\begin{lemma}
    \label{lem:pw_mat_perturb}
    Let $\bfx_t$ be the result of running power iteration starting from $\bfx_0$ for $t$ iterations, as defined in (\ref{eq:power_iteration}). Let $\bfM = \bfM^* + \bfH$. Moreover, let $\sigma_{j}^*, \bfa_j^*$ be the $j$-th eigenvalue and eigenvector of $\bfM^*$, and let $\sigma_j$ be the $j$-th eigenvalue of $\bfM$. 
    Assume that there exists $c_0 \in (0,\infty)$ such that $\left|\bfx_0^\top\bfa_1\right|\geq c_0^{-1}$. Then, for all $j=2,\dots,r$, we have:
    \begin{equation}
        \label{eq:pw_mat_perturb_c2}
        \left|\bfx_t^\top\bfa_j^*\right| \leq c_0\paren{\paren{\frac{\sigma_j^{*}}{\sigma_1}}^t + \frac{\sigma_j^*}{\sigma_1 - \sigma_j^*}\norm{\bfH\bfa_j ^*}_2}.
    \end{equation}
\end{lemma}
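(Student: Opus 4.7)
\medskip
\noindent\textbf{Proof plan for Lemma~\ref{lem:pw_mat_perturb}.}
The guiding observation is that $\bfa_j^*$ is an exact eigenvector of $\bfM^*$ but only an approximate eigenvector of $\bfM$, since $\bfM\bfa_j^* = \sigma_j^*\bfa_j^* + \bfH\bfa_j^*$. The plan is to lift this into a closed-form expression for $\bfM^t\bfa_j^*$ and push it through the power iteration. Specifically, I would first establish by induction on $t$ the Neumann-style identity
\[
\bfM^t\bfa_j^* \;=\; \sigma_j^{*t}\,\bfa_j^* \;+\; \sum_{\ell=0}^{t-1}\sigma_j^{*\ell}\,\bfM^{t-1-\ell}\,\bfH\,\bfa_j^*,
\]
which cleanly separates the geometric growth of $\bfa_j^*$ under $\bfM^*$ from a residual driven entirely by the defect $\bfH\bfa_j^*$. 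This is exactly the Neumann expansion attributed in the paper to \citet{eldridge2017unperturbed}.

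Using that $\bfM$ is symmetric, I would then rewrite $\bfx_t^\top\bfa_j^* = \bfx_0^\top\bfM^t\bfa_j^*/\|\bfM^t\bfx_0\|_2$ and bound the numerator and denominator separately. For the denominator I would use the standard power iteration lower bound
\[
\|\bfM^t\bfx_0\|_2 \;\geq\; \big|\bfa_1^\top\bfM^t\bfx_0\big| \;=\; \sigma_1^t\,\big|\bfx_0^\top\bfa_1\big| \;\geq\; \sigma_1^t/c_0,
\]
which follows from $\bfa_1$ being a unit eigenvector of $\bfM$ with eigenvalue $\sigma_1$ together with the initialization hypothesis $|\bfx_0^\top\bfa_1| \geq c_0^{-1}$. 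For the numerator, substituting the Neumann identity, applying Cauchy--Schwarz on each summand with $\|\bfx_0\|_2 = \|\bfa_j^*\|_2 = 1$, and using $\|\bfM^{t-1-\ell}\bfx_0\|_2 \leq \sigma_1^{t-1-\ell}$ yields
\[
|\bfx_0^\top\bfM^t\bfa_j^*| \;\leq\; \sigma_j^{*t} \;+\; \|\bfH\bfa_j^*\|_2\sum_{\ell=0}^{t-1}\sigma_j^{*\ell}\sigma_1^{t-1-\ell},
\]
whose geometric sum collapses to $(\sigma_1^t - \sigma_j^{*t})/(\sigma_1-\sigma_j^*)$. Dividing through by the denominator bound delivers the first summand $c_0(\sigma_j^*/\sigma_1)^t$ in the claim together with a second summand that scales with $\|\bfH\bfa_j^*\|_2/(\sigma_1-\sigma_j^*)$.

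The main obstacle is sharpening this second term to carry the extra $\sigma_j^*$ prefactor stated in~\eqref{eq:pw_mat_perturb_c2}, since a crude Cauchy--Schwarz bound on $\bfx_s^\top\bfH\bfa_j^*$ loses precisely that factor. I would recover it by refining the treatment of each $\bfx_s^\top\bfH\bfa_j^*$ through the identity $\bfH\bfa_j^* = \bfM\bfa_j^* - \sigma_j^*\bfa_j^*$: taking an inner product with $\bfx_s$ and using $\bfM\bfx_s = \|\bfM\bfx_s\|_2\,\bfx_{s+1}$ gives
\[
\bfx_s^\top\bfH\bfa_j^* \;=\; \|\bfM\bfx_s\|_2\,\rho_{s+1} \;-\; \sigma_j^*\,\rho_s, \qquad \rho_s := \bfx_s^\top\bfa_j^*.
\]
This rewriting converts the residual into a scalar recursion in $\rho_s$ in which $\sigma_j^*$ naturally appears as a multiplicative prefactor; solving the recursion with the uniform lower bound $\|\bfM^{s+1}\bfx_0\|_2 \geq \sigma_1^{s+1}/c_0$ and collapsing the resulting geometric series should then produce the claimed coefficient $\sigma_j^*/(\sigma_1-\sigma_j^*)$.
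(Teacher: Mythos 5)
Your first three paragraphs reproduce the paper's own argument essentially verbatim: unroll the scalar recursion $\hat\bfx_t^\top\bfa_j^* = \sigma_j^*\,\hat\bfx_{t-1}^\top\bfa_j^* + \hat\bfx_{t-1}^\top\bfH\bfa_j^*$ (with $\hat\bfx_t := \bfM^t\bfx_0$), use symmetry of $\bfM$ and the initialization hypothesis to lower-bound $\|\bfM^t\bfx_0\|_2 \geq \sigma_1^t/c_0$, bound the residual terms by $\|\bfM^s\|_2\|\bfH\bfa_j^*\|_2 \leq \sigma_1^s\|\bfH\bfa_j^*\|_2$, and collapse the geometric sum. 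So far so good.

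The ``main obstacle'' you flag is real, and the paper does not actually overcome it. In the paper's proof the unrolled recursion is written as $\hat\bfx_t^\top\bfa_j^* = \sigma_j^{*t}\bfx_0^\top\bfa_j^* + \sum_{t'=0}^{t-1}\sigma_j^{*(t-t')}\hat\bfx_{t'}^\top\bfH\bfa_j^*$, but the correct exponent coming out of $c_t = \sigma_j^*c_{t-1}+d_{t-1}$ is $\sigma_j^{*(t-1-t')}$. This off-by-one is exactly what manufactures the extra $\sigma_j^*$: with the paper's exponent the sum factors as $\sigma_j^{*t}\sum_{t'}\sigma_j^{*-t'}d_{t'}$ and the geometric sum evaluates to $\frac{\sigma_j^*(\sigma_1^t-\sigma_j^{*t})}{\sigma_1-\sigma_j^*}$; with the corrected exponent it is $\sigma_j^{*t-1}\sum_{t'}\sigma_j^{*-t'}d_{t'}$ and one gets $\frac{\sigma_1^t-\sigma_j^{*t}}{\sigma_1-\sigma_j^*}$ instead. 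Normalizing by $\sigma_1^t/c_0$ then gives precisely your bound, $c_0\bigl((\sigma_j^*/\sigma_1)^t + \|\bfH\bfa_j^*\|_2/(\sigma_1-\sigma_j^*)\bigr)$, not the stated one. Your derivation is the one the argument actually supports; the lemma's claimed prefactor $\sigma_j^*/(\sigma_1-\sigma_j^*)$ is tighter than what the proof produces.

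Your proposed fix does not recover the factor, because it is circular. The identity $\bfx_s^\top\bfH\bfa_j^* = \|\bfM\bfx_s\|_2\rho_{s+1} - \sigma_j^*\rho_s$ is nothing more than the normalized form of the recursion $\hat c_{s+1} = \sigma_j^*\hat c_s + d_s$ that you began from (divide both sides by $\|\hat\bfx_s\|_2$ and use $\bfM\bfx_s = \hat\bfx_{s+1}/\|\hat\bfx_s\|_2$). Unrolling it again, with the uniform bound $\prod_{s'=s}^{t-1}\|\bfM\bfx_{s'}\|_2 = \|\bfM^t\bfx_0\|_2/\|\bfM^s\bfx_0\|_2 \geq \sigma_1^{t-s}/c_0$, produces once more $c_0\|\bfH\bfa_j^*\|_2/(\sigma_1-\sigma_j^*)$; the $\sigma_j^*$ that ``naturally appears'' is the homogeneous factor multiplying $\rho_s$, which only controls the first term $c_0(\sigma_j^*/\sigma_1)^t$, not the accumulated residual. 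Don't spend effort chasing the missing $\sigma_j^*$; you should keep your bound and note that it is what the Cauchy--Schwarz/geometric-sum argument legitimately yields.
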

Lemma~\ref{lem:pw_mat_perturb} upper bounds the alignment of the vector in the $t$th step of the power iteration with any incorrect eigenvector $\bfa_j^*$. The magnitude of $\bfx_t^\top\bfa_j^*$ depends both on $\paren{\frac{\sigma_j^{*}}{\sigma_1}}^t$ and how the perturbation matrix aligns with the $j$-th eigenvector. When $\bfM^*$ has a decaying spectrum, the first term in the upper bound of $\left|\bfx_t^\top\bfa_j^*\right|$ in (\ref{eq:pw_mat_perturb_c2}) decreases as $j$ becomes larger. Overloading Lemma~\ref{lem:pw_mat_perturb} in the notations of our deflation procedure, we have
\[
    \left|\bfv_k^\top\bfu_j^*\right|\leq c_0\paren{\paren{\frac{\lambda_j^*}{\lambda_1}}^t + \frac{\lambda_j^*}{\lambda_1 - \lambda_j^*}\norm{\paren{\bm{\Sigma}_k - \bm{\Sigma}_k^*}\bfu_j^*}_2}
\]
In short, the component of $\bfv_k$ along an incorrect director, $\left|\bfv_k^\top\bfu_j^*\right|$, is determined by both the convergence along the $j$th eigenvector, and by the difference between the two deflated matrix along the direction of $\bfu_j^*$. This calls for analysis on $\norm{\paren{\bm{\Sigma}_k - \bm{\Sigma}_k^*}\bfu_j^*}_2$. Indeed, since the matrix residue $\bm{\Sigma}_k - \bm{\Sigma}_k^*$ is resulted from $\bfv_k - \bfu_k$'s, it is natural to believe that $\norm{\paren{\bm{\Sigma}_k - \bm{\Sigma}_k^*}\bfu_j^*}_2$ can also be upper-bounded by quantities depending on $j$. Our next lemma characterizes this behavior
\begin{lemma}
    \label{lem:matrix_diff_align}
    Let $\bfv_k$ be the output of the $k$-th power iteration. For all $k\in[d]$ and $j\geq k$, we have:
    \begin{equation}
        \label{eq:mat_diff_align_bound}
        \norm{\paren{\bm{\Sigma}^*_k-\bm{\Sigma}_k}\bfu_j^*}_2 \leq \sum_{k'=1}^{k-1}\lambda_{k'}\left|\bfv_{k'}^\top\bfu_j^*\right|.
    \end{equation}
\end{lemma}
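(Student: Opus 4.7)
The plan is to turn both deflation recursions into explicit closed-form expressions and then exploit the orthonormality of the ground-truth eigenbasis $\{\bfu_i^*\}$ to annihilate the ideal-side contribution when projected against $\bfu_j^*$.

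First I would unroll the two recurrences. On the ideal side, since $\bfu_{k'}^*$ is by construction the leading eigenvector of $\bm{\Sigma}_{k'}^*$ with eigenvalue $\lambda_{k'}^*$ (as recorded earlier in the section), the update (\ref{eq:true-deflation-mats}) collapses to $\bm{\Sigma}_{k'+1}^* = \bm{\Sigma}_{k'}^* - \lambda_{k'}^*\bfu_{k'}^*\bfu_{k'}^{*\top}$, which telescopes to $\bm{\Sigma}_k^* = \bm{\Sigma} - \sum_{k'=1}^{k-1}\lambda_{k'}^*\bfu_{k'}^*\bfu_{k'}^{*\top}$. On the empirical side, denoting the Rayleigh quotient $r_{k'} := \bfv_{k'}^\top\bm{\Sigma}_{k'}\bfv_{k'}$, the update in (\ref{eq:deflation}) telescopes analogously to $\bm{\Sigma}_k = \bm{\Sigma} - \sum_{k'=1}^{k-1} r_{k'}\,\bfv_{k'}\bfv_{k'}^\top$.

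Next I would subtract the two closed forms and apply the resulting operator to $\bfu_j^*$. The key cancellation is that for every $k' \in [k-1]$ we have $k' < k \leq j$, so $k' \neq j$, and orthonormality of the true eigenbasis gives $\bfu_{k'}^{*\top}\bfu_j^* = 0$. Hence the entire ideal-deflation sum annihilates, leaving
\[
(\bm{\Sigma}_k^* - \bm{\Sigma}_k)\bfu_j^* = \sum_{k'=1}^{k-1} r_{k'}\,\bfv_{k'}\paren{\bfv_{k'}^\top\bfu_j^*}.
\]

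Finally I would apply the triangle inequality, use $\norm{\bfv_{k'}}_2 = 1$ (since the PCA sub-routine returns normalized vectors), and upper bound the Rayleigh quotient by the top eigenvalue, $r_{k'} \leq \lambda_{k'}$, to obtain the claimed inequality. The argument is essentially bookkeeping with no real obstacle; the only delicate point is recognizing that $\lambda_{k'}$ in the statement denotes the top eigenvalue of the empirical deflated matrix $\bm{\Sigma}_{k'}$ rather than the scalar that actually appears in the telescoping identity, which is why the final step requires the Rayleigh-quotient bound $\bfv_{k'}^\top\bm{\Sigma}_{k'}\bfv_{k'} \leq \lambda_{k'}$ rather than a straight equality.
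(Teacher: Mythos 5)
Your proposal is correct and follows essentially the same route as the paper: both telescope the two deflation recursions, apply the resulting operator difference to $\bfu_j^*$ and use orthogonality of the true eigenbasis (with $k' < k \leq j$) to annihilate the ideal side, then finish with the triangle inequality, $\norm{\bfv_{k'}}_2 = 1$, and the Rayleigh-quotient bound $\bfv_{k'}^\top\bm{\Sigma}_{k'}\bfv_{k'}\leq\lambda_{k'}$. The only cosmetic difference is that the paper computes $\bm{\Sigma}_k^*\bfu_j^* = \lambda_j^*\bfu_j^*$ directly from the eigendecomposition of $\bm{\Sigma}_k^*$ and cancels it against the $\bm{\Sigma}\bfu_j^*$ term in the empirical unrolling, whereas you subtract the two closed forms first and then project — algebraically the same cancellation.
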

\begin{figure*}
    \centering
    \includegraphics[width=\linewidth]{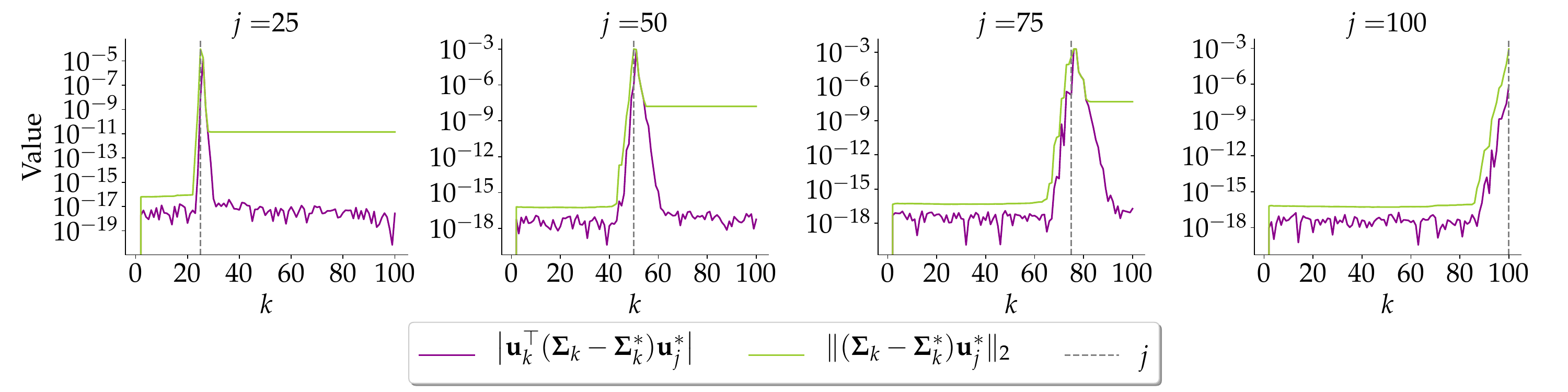}
    \caption{The comparison of between the dynamics of $\left|\bfu_k^\top\paren{\bm{\Sigma}_k - \bm{\Sigma}_k^*}\bfu_k^*\right|$ and $\norm{\paren{\bm{\Sigma}_k - \bm{\Sigma}_k^*}\bfu_j^*}_2$ with respect to the change of $k$ for $j\in\{25, 50, 75, 100\}$. Experiments are performed for $\bm{\Sigma}\in\R^{100\times 100},\lambda_k^* = \frac{1}{k},\left\{\bfu_k^*\right\}_{k=1}^d$ as randomly generated orthogonal basis, with $t = 200$. The results show that $\norm{\paren{\bm{\Sigma}_k - \bm{\Sigma}_k^*}\bfu_j^*}_2$ is a good approximation of $\left|\bfu_k^\top\paren{\bm{\Sigma}_k - \bm{\Sigma}_k^*}\bfu_k^*\right|$ when the latter is large. The orthogonal basis $\left\{\bfu_k^*\right\}_{k=1}^d$ is generated by randomly sampling a matrix with I.I.D. Gaussian entries, and computing its left singular vectors.}
    \label{fig:inner_norm_compare}
    \vspace{-0.5cm}
\end{figure*}
The proof of Lemma~\ref{lem:matrix_diff_align} is deferred to Appendix~\ref{sec:proof_matrix_diff_align}. In particular, (\ref{eq:mat_diff_align_bound}) in Lemma~\ref{lem:matrix_diff_align} upper-bounds $\big\|\paren{\bm{\Sigma}^*_k-\bm{\Sigma}_k}\bfu_j^*\big\|_2$ using a linear combination of $\big\{\big|\bfv_{k'}^\top\bfu_j^*\big|\big\}_{k'=1}^{k-1}$. Indeed, recall that $\bfv_k$ is the output of the $k$-th power iteration procedure. This means that we can invoke Lemma~\ref{lem:pw_mat_perturb} with $\bfM = \bm{\Sigma}_k, \bfM^* = \bm{\Sigma}_k^*$, and correspondingly $\bfH = \bm{\Sigma}^*_k-\bm{\Sigma}_k$. 
Plugging the bound of $\big\|\paren{\bm{\Sigma}^*_k-\bm{\Sigma}_k}\bfu_j^*\big\|_2$ from (\ref{eq:mat_diff_align_bound}) into then gives the recursively defined $\big|\bfv_k^\top\bfu_j^*\big|$:
\begin{equation}
    \label{eq:vk_uj_growth2}
    \left|\bfv_k^\top\bfu_j^*\right| \leq c_0\paren{\paren{\frac{\lambda_j^*}{\lambda_k}}^t + \frac{\lambda_j^*}{\lambda_k - \lambda_j^*}\sum_{k'=1}^{k-1}\lambda_{k'}\left|\bfv_{k'}^\top\bfu_j^*\right|}.
\end{equation}
(\ref{eq:vk_uj_growth2}) shows the independence of each $\big\{\big|\bfv_k^\top\bfu_j^*\big|\big\}_{k=1}^{d}$ for different $j$, which allows each $\big\{\big|\bfv_k^\top\bfu_j^*\big|\big\}_{k=1}^{d}$ to grow at different speeds. To see whether this characterization of independent growth is crucial, observe that in Figure~\ref{fig:u_k_start_v_k_star}, each $\left|\bfv_k^\top\bfu_j^*\right|$ only goes through a fast growth when $j$ is near $k$. This behavior results in the phenomena that at each step $k$, there are only a few $j$'s where $\big\|\paren{\bm{\Sigma}^*_k-\bm{\Sigma}_k}\bfu_j^*\big\|_2$ is large. We can see this as we unroll this recursive form and obtain
\begin{equation}
    \label{eq:mat_diff_align_closed_form}
    \begin{aligned}
    \norm{\paren{\bm{\Sigma}^*_k-\bm{\Sigma}_k}\bfu_j^*}_2 & \leq \sum_{k'=1}^{k-1}\lambda_{k'}\left|\bfv_{k'}^\top\bfu_j^*\right| \\
    & \leq c_0\sum_{k'=1}^{k-1}\mathcal{G}_k^{k-k'-1}\paren{\frac{\lambda_j^*}{\lambda_k}}^{t-1}
    \end{aligned}
\end{equation}
where $\mathcal{G}_k = 1 + \frac{c_0\lambda_k^*\lambda_{k+1}^*}{\lambda_k^*-\lambda_{k+1}^*}$. (\ref{eq:mat_diff_align_closed_form}) upper bounds $\norm{\paren{\bm{\Sigma}^*_k-\bm{\Sigma}_k}\bfu_j^*}_2$ with a summation of the linear decay terms resulted from the power iteration. For the detailed computation of unrolling the recursive definition, we refer the reader to Appendix~\ref{sec:proof_power_iteration} and Appendix~\ref{sec:auxiliary_lem} (see Lemma~\ref{lem:lem:sum_acc_seq_growth_exact}). Noticeably, (\ref{eq:mat_diff_align_closed_form}) provides a closed-form upper bound of $\big\|\paren{\bm{\Sigma}^*_k-\bm{\Sigma}_k}\bfu_j^*\big\|_2$, which further demonstrates the independent growth of $\big\|\paren{\bm{\Sigma}^*_k-\bm{\Sigma}_k}\bfu_j^*\big\|_2$ as $k$ increases for different $j$. This behavior corresponds to what we observed in experiments in Figure~\ref{fig:mat_diff_align}, where each $\big\|\paren{\bm{\Sigma}^*_k-\bm{\Sigma}_k}\bfu_j^*\big\|_2$ grows sequentially.
\subsection{Proof Overview of Theorem~\ref{thm:power-iteration}}
\label{sec:proof_overview_pw}
In this section, we will demonstrate how the properties of power iteration, in particular (\ref{eq:mat_diff_align_closed_form}) is used to construct a proof for Theorem~\ref{thm:power-iteration}. We start with the following lemma.
\begin{lemma}
    \label{lem:eigen_inner_equal}
    Let $\bfM^*\in\R^{d\times d}$ be a rank-$r$ matrix and let $\bfM = \bfM^* + \bfH$ for some $\bfH\in\R^{d\times d}$. Let $\sigma_i^*, \bfa_i^*$ be the $i$-th eigenvalue and eigenvector of $\bfM^*$, and $\sigma_i, \bfa_i$ be the $i$-th eigenvalue and eigenvector of $\bfM$. Then, for all $i,j\in[r]$ with $\sigma_i\neq \sigma_j^*$, we have:
    \vspace{-0.3cm}
    \[
        \bfa_i^\top\bfa_j^* = \frac{\bfa_i^\top\bfH\bfa_j^*}{\sigma_i - \sigma_j^*}.
    \]
    \vspace{-0.7cm}
\end{lemma}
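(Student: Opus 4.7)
The plan is to derive the identity by expressing the quadratic form $\bfa_i^\top \bfM \bfa_j^*$ in two different ways, exploiting the eigenvector equations on both sides.

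First, I would use the eigen-equation for $\bfM$ on the left: since $\bfM \bfa_i = \sigma_i \bfa_i$ and $\bfM$ is symmetric (as in the deflation setting where all matrices involved are real symmetric), we have $\bfa_i^\top \bfM = \sigma_i \bfa_i^\top$, so
\[
\bfa_i^\top \bfM \bfa_j^* = \sigma_i \, \bfa_i^\top \bfa_j^*.
\]
Next I would use the decomposition $\bfM = \bfM^* + \bfH$ together with $\bfM^* \bfa_j^* = \sigma_j^* \bfa_j^*$ on the right to obtain
\[
\bfa_i^\top \bfM \bfa_j^* = \bfa_i^\top \bfM^* \bfa_j^* + \bfa_i^\top \bfH \bfa_j^* = \sigma_j^* \, \bfa_i^\top \bfa_j^* + \bfa_i^\top \bfH \bfa_j^*.
\]

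Equating the two expressions and rearranging gives $(\sigma_i - \sigma_j^*)\, \bfa_i^\top \bfa_j^* = \bfa_i^\top \bfH \bfa_j^*$, and dividing by $\sigma_i - \sigma_j^*$ (which is nonzero by assumption) yields the claim. There is no real obstacle here; the only subtlety worth flagging is the use of symmetry of $\bfM$ to pass $\sigma_i$ to the left side of the quadratic form, which is consistent with the paper's standing setting (the deflation matrices $\bm{\Sigma}_k$ are symmetric by construction). If one wanted to state the lemma purely in terms of non-symmetric $\bfM$, one would instead invoke the left eigenvector of $\bfM$ associated with $\sigma_i$; but in the intended application this distinction is immaterial.
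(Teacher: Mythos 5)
Your proof is correct and mirrors the paper's argument exactly: both compute $\bfa_i^\top \bfM \bfa_j^*$ two ways via the eigen-equations for $\bfM$ and $\bfM^*$, then equate and rearrange. Your explicit remark about symmetry (needed to write $\bfa_i^\top \bfM = \sigma_i \bfa_i^\top$) is a valid and worthwhile clarification that the paper leaves implicit.
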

Unfortunately, we cannot directly characterize $\bfu_k^\top\bfu_k^*$ using Lemma~\ref{lem:eigen_inner_equal} by plugging in $i = j = 1$ and $\bfM = \bm{\Sigma}_k,\bfM^* = \bm{\Sigma}_k^*$, since, as far as we know, it is impossible to show that $\sigma_i = \lambda_k \neq \lambda_k^* = \sigma_j^*$. However, Lemma~\ref{lem:eigen_inner_equal} will be useful to upper-bound $\paren{\bfu_k^\top\bfu_j^*}^2$ when $j > k$, since in this case, we can guarantee that $\lambda_k \geq \lambda_k^* - \norm{\bm{\Sigma}_k - \bm{\Sigma}_k^*}_2 > \lambda_j^*$ as long as $\norm{\bm{\Sigma}_k - \bm{\Sigma}_k^*}_2 < \mathcal{T}_{k}$, where $\mathcal{T}_k$ is the $k$-th eigengap of $\bm{\Sigma}$. Under such conditions, we have:
\vspace{-0.1cm}
\begin{equation}
    \label{eq:uk_uj_inner_abs}
    \left|\bfu_k^\top\bfu_j^*\right| \leq \frac{\norm{\paren{\bm{\Sigma}_k - \bm{\Sigma}_k^*}\bfu_j^*}_2}{\lambda_k - \lambda_j^*}.
    \vspace{-0.1cm}
\end{equation}
Indeed, going from Lemma~\ref{lem:eigen_inner_equal} to (\ref{eq:uk_uj_inner_abs}) loosens the bound by ignoring the directional information of $\bfu_k$. However, through experiments, we observe that (\ref{eq:uk_uj_inner_abs}) is a good enough upper bound since $\norm{\paren{\bm{\Sigma}_k - \bm{\Sigma}_k^*}\bfu_j^*}_2$ already stays at a much smaller magnitude than $\norm{\bm{\Sigma}_k - \bm{\Sigma}_k^*}_2$. This is reflected in Figure~\ref{fig:inner_norm_compare}, where we can see that the curve $\big\|\paren{\bm{\Sigma}_k - \bm{\Sigma}_k^*}\bfu_j^*\big\|_2$ enjoys the same decrease as $\big|\bfu_k^\top\paren{\bm{\Sigma}_k - \bm{\Sigma}_k^*}\bfu_j^*\big|$ when $j$ first surpasses $k$, and stops decreasing after reaching a small magnitude (lower than $10^{-5}$). (\ref{eq:uk_uj_inner_abs}) successfully applied the property of the power iteration in (\ref{eq:mat_diff_align_closed_form}) to bound each $\left|\bfu_k^\top\bfu_j^*\right|$. Our next lemma connects the upper bound of each $\left|\bfu_k^\top\bfu_j^*\right|$ to a lower bound of $\left|\bfu_k^\top\bfu_k^*\right|$.
\begin{lemma}
    \label{lem:inner_ub_to_inner_lb}
    Suppose that $\norm{\bm{\Sigma}_k - \bm{\Sigma}_k^*}_F\leq \frac{1}{8}\lambda_k^*$, then we have
    \vspace{-0.3cm}
    \begin{equation}
        \label{eq:inner_ub_to_inner_lb}
        \paren{\bfu_k^\top\bfu_k^*}^2 \geq 1 - \frac{2.4}{\lambda_k^{*2}}\norm{\bm{\Sigma}_k - \bm{\Sigma}_k^*}_F^2 - \sum_{j=k+1}^d\paren{\bfu_k^\top\bfu_j^*}^2
    \end{equation}
    \vspace{-0.7cm}
\end{lemma}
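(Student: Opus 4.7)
The plan is to start from Parseval's identity applied to the unit vector $\bfu_k$ expanded in the orthonormal eigenbasis $\{\bfu_j^*\}_{j=1}^d$ of $\bm{\Sigma}$, which gives
\[
(\bfu_k^\top\bfu_k^*)^2 \;=\; 1 - \sum_{j=1}^{k-1}(\bfu_k^\top\bfu_j^*)^2 - \sum_{j=k+1}^d(\bfu_k^\top\bfu_j^*)^2.
\]
Keeping the ``$j>k$'' sum symbolically on the right-hand side (exactly as it appears in \eqref{eq:inner_ub_to_inner_lb}), the entire problem reduces to establishing $\sum_{j=1}^{k-1}(\bfu_k^\top\bfu_j^*)^2 \leq \tfrac{2.4}{\lambda_k^{*2}}\|\bm{\Sigma}_k-\bm{\Sigma}_k^*\|_F^2$.

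The key structural fact I will exploit is that, for every $j<k$, the eigenvector $\bfu_j^*$ lies in the kernel of the ground-truth deflated matrix. Unrolling the recursion in \eqref{eq:true-deflation-mats} shows $\bm{\Sigma}_k^* = \bm{\Sigma} - \sum_{i=1}^{k-1}\lambda_i^*\bfu_i^*\bfu_i^{*\top}$, so $\bm{\Sigma}_k^*\bfu_j^*=0$ for $j<k$ and therefore $\bm{\Sigma}_k\bfu_j^* = (\bm{\Sigma}_k-\bm{\Sigma}_k^*)\bfu_j^*$. Taking inner product against $\bfu_k$, which by definition satisfies $\bm{\Sigma}_k\bfu_k = \lambda_k\bfu_k$, yields the identity
\[
\lambda_k\,(\bfu_k^\top\bfu_j^*) \;=\; \bfu_k^\top(\bm{\Sigma}_k-\bm{\Sigma}_k^*)\bfu_j^* \qquad\text{for every } j<k.
\]
Squaring both sides, summing over $j=1,\dots,k-1$, and invoking Bessel's inequality against the partial orthonormal family $\{\bfu_j^*\}_{j=1}^{k-1}$ gives
\[
\lambda_k^2\sum_{j=1}^{k-1}(\bfu_k^\top\bfu_j^*)^2 \;\leq\; \|(\bm{\Sigma}_k-\bm{\Sigma}_k^*)\bfu_k\|_2^2 \;\leq\; \|\bm{\Sigma}_k-\bm{\Sigma}_k^*\|_F^2.
\]

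It remains to convert the $1/\lambda_k^2$ prefactor into $1/\lambda_k^{*2}$, which I will do by one application of Weyl's inequality. Under the hypothesis $\|\bm{\Sigma}_k-\bm{\Sigma}_k^*\|_F \leq \tfrac{1}{8}\lambda_k^*$, Weyl gives $\lambda_k \geq \lambda_k^* - \|\bm{\Sigma}_k-\bm{\Sigma}_k^*\|_2 \geq \tfrac{7}{8}\lambda_k^*$, so $1/\lambda_k^2 \leq 64/(49\lambda_k^{*2}) < 2.4/\lambda_k^{*2}$. Plugging this in, then substituting the bound on $\sum_{j<k}(\bfu_k^\top\bfu_j^*)^2$ back into the Parseval decomposition, proves \eqref{eq:inner_ub_to_inner_lb}. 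There is no real obstacle here: the argument is tight up to the explicit constant, and the only mild subtlety is that $\bfu_k$ carries the sign convention of \eqref{eq:uk_defin}, but since the proof uses only the eigenvalue equation $\bm{\Sigma}_k\bfu_k=\lambda_k\bfu_k$ and squared inner products, the sign choice is immaterial.
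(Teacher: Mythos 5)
Your proof is correct, and it takes a genuinely different and more elementary route than the paper's.

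The paper proves Lemma~\ref{lem:inner_ub_to_inner_lb} via the Neumann expansion of \citet{eldridge2017unperturbed}: it writes $\bfu_k = \sum_j\frac{\lambda_j^*}{\lambda_k}\bfu_k^\top\bfu_j^*\bigl(\sum_{s\geq 0}\lambda_k^{-s}\bfH^s\bfu_j^*\bigr)$, truncates the inner geometric series, and bounds the truncation error to show that $\bfu_k$ is close to the column span of $\bm{\Sigma}_k^*$, which after squaring yields the constant $(32/21)^2 \approx 2.32$, rounded up to $2.4$. Your argument bypasses the Neumann machinery entirely. You instead observe that the vectors $\bfu_1^*,\dots,\bfu_{k-1}^*$ lie in the kernel of $\bm{\Sigma}_k^*$ (Lemma~\ref{lem:true_deflate_eigen_decomp}), so $\bm{\Sigma}_k\bfu_j^* = (\bm{\Sigma}_k-\bm{\Sigma}_k^*)\bfu_j^*$ for $j<k$, and then symmetry plus the eigenvalue equation $\bm{\Sigma}_k\bfu_k = \lambda_k\bfu_k$ gives the exact identity $\lambda_k(\bfu_k^\top\bfu_j^*) = \bfu_k^\top(\bm{\Sigma}_k-\bm{\Sigma}_k^*)\bfu_j^*$. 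This is really the zero-eigenvalue instance of the paper's own Lemma~\ref{lem:eigen_inner_equal}, applied to the kernel eigenvectors rather than the nonzero-spectrum ones. Squaring, summing, and applying Bessel against the orthonormal family $\{\bfu_j^*\}_{j<k}$ then gives $\sum_{j<k}(\bfu_k^\top\bfu_j^*)^2 \leq \lambda_k^{-2}\|(\bm{\Sigma}_k-\bm{\Sigma}_k^*)\bfu_k\|_2^2 \leq \lambda_k^{-2}\|\bm{\Sigma}_k-\bm{\Sigma}_k^*\|_F^2$, and Weyl under the hypothesis yields $\lambda_k\geq\tfrac{7}{8}\lambda_k^*$, hence the constant $64/49\approx 1.31 < 2.4$.

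Your approach is both shorter and produces a tighter constant. What the paper's Neumann argument buys is generality: Lemma~\ref{lem:neumann_subspace_approx} holds for an arbitrary symmetric perturbation $\bfM=\bfM^*+\bfH$ with no structural relationship assumed between the kernel of $\bfM^*$ and $\bfH$, whereas your identity specifically exploits that $\bm{\Sigma}_k^*$ annihilates $\bfu_1^*,\dots,\bfu_{k-1}^*$, which is a property of Hotelling's deflation. In the present paper, though, the Neumann lemma is used only here, so your elementary argument would be a strict simplification. One minor point worth spelling out if you write this up in full: the identity $\bfu_k^\top\bm{\Sigma}_k = \lambda_k\bfu_k^\top$ uses the symmetry of $\bm{\Sigma}_k$, which follows by induction from the deflation recursion \eqref{eq:deflation} but is worth a sentence since $\bm{\Sigma}_k$ is not the original $\bm{\Sigma}$.
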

Our proof of Lemma~\ref{lem:inner_ub_to_inner_lb} is based on Neumann expansion \cite{eldridge2017unperturbed,chen2020asymmetry}, a powerful technique for analyses that involve the product between perturbation matrix and eigenvectors. Indeed, the summation of the last term of (\ref{eq:inner_ub_to_inner_lb}) may seem to introduce an additional factor of $d$. However, since the sequence $\left\{\left|\bfu_k^\top\bfu_j^*\right|\right\}_{j=k+1}^d$ decays fast, as observed in Figure~\ref{fig:u_k_start_v_k_star}, the summation will eventually be independent of $d$. Combining (\ref{eq:mat_diff_align_closed_form}), (\ref{eq:uk_uj_inner_abs}), and (\ref{eq:inner_ub_to_inner_lb}) gives the following lemma that upper bounds $\norm{\bfu_k - \bfu_k^*}_2$.
\begin{lemma}
    \label{lem:pw_top_eig_diff}
    Assume that there exists a constant $c_0  \in (0,\infty)$ such that the initialized vector $\bfx_{0,k'}$ in the $k'$-th power iteration procedure satisfies $\big|\bfx_{0,k'}^\top\bfu_{k'}\big| > c_0^{-1}$ for all $k'\in[k]$. Let $\mathcal{T}_k = \lambda_k^* - \lambda_{k+1}^*, \mathcal{T}_{k,\min} = \min_{k'\in[k]}\mathcal{T}_{k'}$ and $\mathcal{G}_k =  1 + \frac{c_0\lambda_k^*\lambda_{k+1}^*}{\lambda_k^*-\lambda_{k+1}^*}$. Suppose $\norm{\bm{\Sigma}_{k'} - \bm{\Sigma}_{k'}^*}_F\leq \frac{1}{8}\mathcal{T}_{k,\min}$, and $t \geq \log2\mathcal{G}_k\paren{\log \frac{\lambda_{k'+1}^* + 7\lambda_{k'}^*}{\lambda_{k'}^* + 7\lambda_{k'+1}^*}}^{-1}$
    for all $k'\in[k-1]$. Then, we have that:
    \vspace{-0.3cm}
    \[
        \norm{\bfu_k - \bfu_k^*}_2^2\leq \frac{4.8}{\lambda_k^*}\norm{\bm{\Sigma}_k-\bm{\Sigma}_k^*}_2^2 + \frac{11c_0^2}{2\mathcal{T}_k^2}\sum_{j=k+1}^d\paren{\frac{\lambda_j^*}{\lambda_{k}^*}}^{2t}.
    \]
    \vspace{-0.7cm}
\end{lemma}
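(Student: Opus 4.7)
The plan is to combine the three ingredients already assembled in the proof overview: the elementary identity linking $\|\bfu_k-\bfu_k^*\|_2^2$ with $1-(\bfu_k^\top\bfu_k^*)^2$, the alignment lower bound of Lemma~\ref{lem:inner_ub_to_inner_lb}, and the per-direction matrix residue bound in (\ref{eq:mat_diff_align_closed_form}). Because the sign of $\bfu_k$ is chosen in (\ref{eq:uk_defin}) so that $\bfu_k^\top\bfu_k^*\in[0,1]$, the elementary inequality $1-x\leq 1-x^2$ gives
\[
\|\bfu_k-\bfu_k^*\|_2^2 \;=\; 2-2\bfu_k^\top\bfu_k^* \;\leq\; 2\paren{1-(\bfu_k^\top\bfu_k^*)^2}.
\]
The hypothesis $\|\bm{\Sigma}_k-\bm{\Sigma}_k^*\|_F\leq \tfrac{1}{8}\mathcal{T}_{k,\min}$ implies $\|\bm{\Sigma}_k-\bm{\Sigma}_k^*\|_F\leq \tfrac{1}{8}\lambda_k^*$ (since $\mathcal{T}_{k,\min}\leq \mathcal{T}_k\leq \lambda_k^*$), so Lemma~\ref{lem:inner_ub_to_inner_lb} applies and produces
\[
\|\bfu_k-\bfu_k^*\|_2^2 \;\leq\; \frac{4.8}{\lambda_k^{*2}}\|\bm{\Sigma}_k-\bm{\Sigma}_k^*\|_F^2 \;+\; 2\sum_{j=k+1}^d (\bfu_k^\top\bfu_j^*)^2,
\]
which already supplies the first (``diagonal'') term of the claimed bound and leaves us to control the off-diagonal alignments.

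For each $j>k$ I would invoke (\ref{eq:uk_uj_inner_abs}) in the form $(\bfu_k^\top\bfu_j^*)^2\leq \|(\bm{\Sigma}_k-\bm{\Sigma}_k^*)\bfu_j^*\|_2^2/(\lambda_k-\lambda_j^*)^2$. Weyl's inequality combined with the hypothesis yields $\lambda_k\geq \lambda_k^*-\tfrac{1}{8}\mathcal{T}_{k,\min}\geq \lambda_k^*-\tfrac{1}{8}\mathcal{T}_k$, so for $j>k$ we obtain $\lambda_k-\lambda_j^*\geq \tfrac{7}{8}\mathcal{T}_k$, which gives $(\bfu_k^\top\bfu_j^*)^2\leq \tfrac{64}{49\,\mathcal{T}_k^2}\|(\bm{\Sigma}_k-\bm{\Sigma}_k^*)\bfu_j^*\|_2^2$. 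The closed-form estimate (\ref{eq:mat_diff_align_closed_form}) then supplies $\|(\bm{\Sigma}_k-\bm{\Sigma}_k^*)\bfu_j^*\|_2\leq c_0\sum_{k'=1}^{k-1}\mathcal{G}_k^{k-k'-1}(\lambda_j^*/\lambda_k)^{t-1}$, and the same Weyl estimate $\lambda_k\geq \tfrac{7}{8}\lambda_k^*$ converts $\lambda_j^*/\lambda_k$ into $\lambda_j^*/\lambda_k^*$ at the cost of an $(8/7)^{2(t-1)}$ constant.

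The crucial step is collapsing the geometric sum $\sum_{k'=1}^{k-1}\mathcal{G}_k^{k-k'-1}$, which would otherwise grow like $\mathcal{G}_k^{k-1}$. The hypothesis $t\geq \log(2\mathcal{G}_k)/\log\tfrac{\lambda_{k'+1}^*+7\lambda_{k'}^*}{\lambda_{k'}^*+7\lambda_{k'+1}^*}$ is tuned precisely so that one copy of the contraction factor $(\lambda_j^*/\lambda_k^*)^{t}$ per deflation step dominates one factor of $\mathcal{G}_k$; the telescoped sum then collapses to a dimension-free constant multiple of a single $(\lambda_j^*/\lambda_k^*)^{t}$. Squaring this bound and summing over $j>k$ yields the claimed $\tfrac{11 c_0^2}{2\mathcal{T}_k^2}\sum_{j=k+1}^d(\lambda_j^*/\lambda_k^*)^{2t}$ second term.

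The main obstacle I anticipate is the constant bookkeeping in this last collapse: one must verify that the prescribed threshold on $t$ supplies just enough contraction to absorb both the $\mathcal{G}_k^{k-k'-1}$ growth and the $(8/7)^{2(t-1)}$ slack from substituting $\lambda_k$ by $\lambda_k^*$, and then arrange the $\tfrac{64}{49}$ from Weyl, the constant $c_0^2$ from (\ref{eq:mat_diff_align_closed_form}), and the factor of $2$ from the initial $1-(\cdot)^2$ step so as to match the stated prefactor $\tfrac{11}{2}$. Every other step is a direct application of a previously established lemma; the difficulty is concentrated in this geometric-telescoping argument.
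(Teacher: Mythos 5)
Your skeleton matches the paper's own proof: both reduce $\norm{\bfu_k-\bfu_k^*}_2^2$ to $1-(\bfu_k^\top\bfu_k^*)^2$ using the sign convention in (\ref{eq:uk_defin}), invoke Lemma~\ref{lem:inner_ub_to_inner_lb} to split off the diagonal term, bound each $|\bfu_k^\top\bfu_j^*|$ via (\ref{eq:uk_uj_inner_abs}) and the residue estimate (\ref{eq:mat_diff_align_closed_form}), and collapse the geometric sum over $k'$ using the threshold on $t$ (Lemma~\ref{lem:sum_geo_pow} in the paper). Up to the constant bookkeeping you flag, that is exactly the argument in Appendix~B.6.

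There is, however, a genuine gap in the step you identify as the crux. You propose to use the Weyl estimate $\lambda_k\geq\tfrac{7}{8}\lambda_k^*$ to convert $(\lambda_j^*/\lambda_k)^{t-1}$ into $(\lambda_j^*/\lambda_k^*)^{t-1}$ ``at the cost of an $(8/7)^{2(t-1)}$ constant,'' and then hope the threshold on $t$ absorbs this. That cannot work: the hypothesis $t\geq\log(2\mathcal{G}_k)\bigl(\log\tfrac{\lambda_{k'+1}^*+7\lambda_{k'}^*}{\lambda_{k'}^*+7\lambda_{k'+1}^*}\bigr)^{-1}$ is a \emph{fixed} lower bound, not a growing one, while $(8/7)^{2(t-1)}(\lambda_j^*/\lambda_k^*)^{2t}=\bigl(\tfrac{8}{7}\cdot\tfrac{\lambda_j^*}{\lambda_k^*}\bigr)^{2t}\cdot\text{const}$ diverges as $t\to\infty$ unless $\lambda_j^*<\tfrac{7}{8}\lambda_k^*$, which Assumption~\ref{assu:main-assumption} does not grant (the eigengap can be arbitrarily small). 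The paper avoids this entirely because the geometric collapse does not produce $\lambda_k$ in the denominator of the base. Unrolling the recursion gives terms with base $\lambda_j^*/\lambda_{k'}$ for $k'\leq k-1$, and Lemma~\ref{lem:sum_geo_pow} (applied with $p_{k'}=\lambda_{k'}^{-1}$) collapses the sum to a constant multiple of the \emph{last} term, which has denominator $\lambda_{k-1}$, not $\lambda_k$. Crucially, $\lambda_{k-1}\geq\lambda_{k-1}^*-\tfrac{1}{8}\mathcal{T}_{k-1}\geq\lambda_k^*$, so the conversion to $(\lambda_j^*/\lambda_k^*)^t$ is a \emph{free} upper bound, with no $t$-dependent slack whatsoever. (The display (\ref{eq:mat_diff_align_closed_form}) in the main text is a shorthand with a typo in the base; the appendix derivation keeps the $k'$-dependent denominators, which is what makes the argument close.) Once you follow that route, the constants assemble as you intended; the $(8/7)^{2(t-1)}$ obstacle you worried about never appears.
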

Lemma~\ref{lem:pw_top_eig_diff} shows a similar result to Lemma~\ref{lem:top_eig_diff}. In particular, the first term in the upper bound of Lemma~\ref{lem:pw_top_eig_diff} also scales with $\norm{\bm{\Sigma}_k - \bm{\Sigma}_k^*}_2$, and the second term involves a summation over $\paren{\frac{\lambda_j^*}{\lambda_k^*}}^{2t}$ for $j > k$, which can be made small as long as $t$ is large enough. The upper bound in Lemma~\ref{lem:pw_top_eig_diff} shows a significant improvement in that the scaling factor in front of $\norm{\bm{\Sigma}_k - \bm{\Sigma}_k^*}_2$ depends only on $\lambda_k^*$ compared with the $O\paren{\frac{\sqrt{d-k}}{\mathcal{T}_k}}$ scaling in Lemma~\ref{lem:top_eig_diff}. This allows us to improve over the nearly factorial bound in Theorem~\ref{thm:main_theorem_1}.  After obtaining Lemma~\ref{lem:pw_top_eig_diff}, the rest of the proof for Theorem~\ref{thm:power-iteration} becomes similar to what we have discussed in Section~\ref{sec:proof_overview_thm1}. We can plug the upper bound in Lemma~\ref{lem:pw_top_eig_diff} into Lemma~\ref{lem:matrix_diff_propagate} and unroll the recurrence to obtain a closed-form upper bound for $\norm{\bm{\Sigma}_k - \bm{\Sigma}_k^*}_F$. After that, we can finish the proof by plugging the upper bound for $\norm{\bm{\Sigma}_k - \bm{\Sigma}_k^*}_F$ back into Lemma~\ref{lem:pw_top_eig_diff} and utilize (\ref{eq:eig_err_decomp}) to arrive at the desired upper bound for $\norm{\bfv_k - \bfu_k^*}_2$. The detailed proof of Theorem~\ref{thm:power-iteration} is deferred to Appendix~\ref{sec:proof_power_iteration}.

\section{Conclusion and Future Work}
Numerical algorithms for solving the top eigenvector of a given matrix often produce non-negligible errors, which will propagate and accumulate through multiple deflation steps. In this paper, we mathematically analyze this error propagation for a scenario agnostic to the sub-routine for solving the top eigenvectors and the case where this sub-routine is fixed to power iteration. In particular, for the sub-routine agnostic case, our analysis gives an exponential growth of the errors as one aims to solve more eigenvectors, where the base of the exponential growth depends on the eigengap. This upper bound on the errors is improved to exponential growth with a constant base by utilizing the directional information of the error vector produced by power iteration. Our result implies a lower bound on the number of power iteration steps required to solve for multiple leading eigenvectors using deflation up to a certain accuracy.

This paper focuses on the case where $\bm{\Sigma}$ is positive definite and has distinct eigenvalues. However, the analysis can also be extended to accommodate repeated and zero eigenvalues. When eigenvalues are repeated, the corresponding eigenvector and the method for measuring the error of the approximate eigenvector need to be redefined. Instead of considering a single eigenvector for each eigenvalue, we can generalize the approach by considering the subspace spanned by the eigenvectors corresponding to each repeated eigenvalue. The core ideas of our analysis can be applied to account for the error propagation between subspaces, but one must account for additional error propagation within subspaces. To extend the analysis to positive semi-definite matrices, $\bm{\Sigma}$ is of rank $r$ and we are interested in finding top-$K$ eigenvectors with $K \leq r$. In this case, our Theorem~\ref{thm:main_theorem_1} still holds, as it does not depend on eigenvalues beyond the $K$th. Theorem~\ref{thm:power-iteration} may require slight modification, but we hypothesize that the case where $\bm{\Sigma}$ is low rank will relax the condition on $t$ in (\ref{eq:thm_pw_t_req}) to $k\leq r$ instead of $k\leq d$.

While this paper focuses on the sub-routine agnostic scenario and the power iteration, future work can consider extending the analysis to the case where other algorithms, such as Lanczos method \citep{GoluVanl96} and power iteration with momentum \citep{desa2017accelerated}, are used to approximate the top eigenvectors in the deflation process.
\section*{Acknowledgements}
This work is supported by NSF CMMI no. 2037545, NSF CAREER award no. 2145629, Welch Foundation Grant \#A22-0307, a AWS Research Grant, a Microsoft Research Award, and a Rice Ken Kennedy Institute Seed Funding Award.

\section*{Impact Statement}
This paper aims to deepen the theoretical understanding of principal component analysis, a long-standing method in machine learning. The theoretical results presented herein may change the computation power used for principal component analysis. While the direct impact of this work may be limited to computational efficiency gains, the broader implications extend beyond this, potentially catalyzing advancements in various domains where principal component analysis finds application. However, no specific societal consequences can be directly attributed to the work itself.

\bibliography{example_paper}
\bibliographystyle{icml2024}

\newpage
\appendix
\onecolumn
\section{Missing Proofs from Section~\ref{sec:main_result_general}}
\label{sec:proof_main_result_general}

\subsection{Proof of Lemma~\ref{lem:matrix_diff_propagate}}
\label{sec:proof_matrix_diff_propagate}
Recall that $\bm{\Sigma}_{k+1} = \bm{\Sigma}_k - \bfv_k\bfv_k^\top\bm{\Sigma}_k\bfv_k\bfv_k^\top$. Since $\bfv_k = \bfu_k + \bm{\delta}_k$, we can write $\bm{\Sigma}_{k+1}$ as
\begin{align*}
    \bm{\Sigma}_{k+1} & = \bm{\Sigma}_k - \paren{\bfu_{k} + \bm{\delta}_k}\paren{\bfu_{k} + \bm{\delta}_k}^\top\bm{\Sigma}_k\paren{\bfu_{k} + \bm{\delta}_k}\paren{\bfu_{k} + \bm{\delta}_k}^\top\\
    & = \bm{\Sigma}_k - \paren{\bfu_{k} + \bm{\delta}_k}\paren{\bfu_{k}^\top\bm{\Sigma}_k\bfu_{k} + \bm{\delta}_k^\top\bm{\Sigma}_k\bfu_{k} + \bfu_{k}^\top\bm{\Sigma}_k\bm{\delta}_k + \bm{\delta}_k^\top\bm{\Sigma}_k\bm{\delta}_k}\paren{\bfu_{k} + \bm{\delta}_k}^\top\\
    & = \bm{\Sigma}_k - \paren{\lambda_k + 2\lambda_k\bfu_{k}^\top\bm{\delta}_k + \bm{\delta}_k^\top\bm{\Sigma}_k\bm{\delta}_k}\paren{\bfu_{k} + \bm{\delta}_k}\paren{\bfu_{k} + \bm{\delta}_k}^\top\\
    & = \underbrace{\bm{\Sigma}_k - \lambda_k\bfu_{k}\bfu_{k}^\top}_{\tilde{\bm{\Sigma}}_{k+1}} - \underbrace{\lambda_k\paren{\bm{\delta}_k\bfu_{k}^\top + \bfu_{k}\bm{\delta}_k^\top + \bm{\delta}_k\bm{\delta}_k^\top}}_{\bm{\mathcal{E}}_{1,k}}\\
    &\quad\quad\quad - \underbrace{\paren{2\lambda_k\bfu_{k}^\top\bm{\delta}_k + \bm{\delta}_k^\top\bm{\Sigma}_k\bm{\delta}_k}\paren{\bfu_{k} + \bm{\delta}_k}\paren{\bfu_{k} + \bm{\delta}_k}^\top}_{\bm{\mathcal{E}}_{2,k}}
\end{align*}
For the convenience of the analysis, we let $\Delta_k = \norm{\bm{\Sigma}_k - \bm{\Sigma}_k^*}_F$. In this way, we can write
\begin{equation}
    \label{eq:theo_1.1}
    \Delta_{k+1} = \norm{\tilde{\bm{\Sigma}}_{k+1} - \bm{\Sigma}^*_{k+1} - \bm{\mathcal{E}}_{1,k} - \bm{\mathcal{E}}_{2,k}}_F \leq \norm{\tilde{\bm{\Sigma}}_{k+1} - \bm{\Sigma}^*_{k+1}}_F + \norm{\bm{\mathcal{E}}_{1,k}}_F + \norm{\bm{\mathcal{E}}_{2,k}}_F
\end{equation}
Recall the definition of $\bm{\Sigma}_{k+1}^*$ from (\ref{eq:true-deflation-mats}). By Lemma~\ref{lem:true_deflate_eigen_decomp}, we have $\bm{\Sigma}^*_{k+1} = \bm{\Sigma}^*_k - \lambda_k^*\bfu_k^*\bfu_k^{*\top}$. Thus, the first term in (\ref{eq:theo_1.1}) can be bounded as
\begin{align*}
    \norm{\tilde{\bm{\Sigma}}_{k+1} - \bm{\Sigma}^*_{k+1}}_F & = \norm{\bm{\Sigma}_k - \lambda_k\bfu_{k}\bfu_{k}^\top - \bm{\Sigma}^*_k + \lambda_k^*\bfu_k^*\bfu_k^{*\top}}_F\\
    & \leq \norm{\bm{\Sigma}_k - \bm{\Sigma}^*_k}_F + \norm{\lambda_k\bfu_{k}\bfu_{k}^\top - \lambda_k^*\bfu_k^*\bfu_k^{*\top}}_F\\
    & \leq \Delta_k + \norm{\paren{\lambda_k - \lambda_k^*}\bfu_{k}\bfu_{k}^\top - \lambda_k^*\paren{\bfu_{k}^*\bfu_{k}^{* \top} -\bfu_k\bfu_k^\top}}_F\\
    & \leq 2\Delta_k + \lambda_k^*\norm{\bfu_{k}\bfu_{k}^\top -\bfu_k^*\bfu_k^{* \top}}_F
\end{align*}
We use Weyl's inequality in the last inequality to obtain that $\left|\lambda_k - \lambda_k^*\right|\leq \Delta_k$. The last term above can be further bounded as
\begin{align*}
    \norm{\bfu_{k}\bfu_{k}^\top -\bfu_k^*\bfu_k^{*\top}}_F  = \norm{\bfu_{k}\paren{\bfu_{k} - \bfu_k^*}^\top + \paren{\bfu_{k} - \bfu_k^*}\bfu_k^{*\top}}_F \leq 2\norm{\bfu_{k} - \bfu_k^*}_2 = 2\theta_k
\end{align*}
Therefore, we have
\begin{equation}
    \label{eq:theo_1.2}
    \norm{\tilde{\bm{\Sigma}}_{k+1} - \bm{\Sigma}^*_{k+1}}_F \leq 2\Delta_k + 2\lambda_k^*\theta_k
\end{equation}
Now, we focus on $\bm{\mathcal{E}}_{1,k}$. It can be bounded as
\[
    \norm{\bm{\mathcal{E}}_{1,k}}_F = \lambda_k\norm{\bm{\delta}_k\bfu_{k}^\top + \bfu_{k}\bm{\delta}_k^\top + \bm{\delta}_k\bm{\delta}_k^\top}_F \leq \lambda_k\paren{2\norm{\bm{\delta}_k}_2 + \norm{\bm{\delta}_k}_2^2}
\]
Given that $\lambda_k \leq \lambda_k^* + \left|\lambda_k - \lambda_k\right|\leq \lambda_k + \Delta_k$, we have
\begin{equation}
    \label{eq:theo_1.3}
    \norm{\bm{\mathcal{E}}_{1,k}}_F \leq \paren{\lambda_k^* + \Delta_k}\paren{2\norm{\bm{\delta}_k}_2 + \norm{\bm{\delta}_k}_2^2}
\end{equation}
Similarly, for $\bm{\mathcal{E}}_{2,k}$ we have
\begin{equation}
    \begin{aligned}
        \label{eq:theo_1.4}
        \norm{\bm{\mathcal{E}}_{2,k}}_F & \leq \left|2\lambda_k\bfu_{k}^\top\bm{\delta}_k + \bm{\delta}_k^\top\bm{\Sigma}_k\bm{\delta}_k\right|\norm{\bfu_{k} + \bm{\delta}_k}_2^2\\
        & \leq \paren{2\paren{\lambda_k^* + \Delta_k}\norm{\bm{\delta}_k}_2 + \paren{\lambda_k^* + \Delta_k}\norm{\bm{\delta}_k}_2^2}\paren{1 + \norm{\bm{\delta}_k}_2}^2\\
        & = \paren{\lambda_k^* + \Delta_k}\paren{2\norm{\bm{\delta}_k}_2 + \norm{\bm{\delta}_k}_2^2}\paren{1 + \norm{\bm{\delta}_k}_2}^2
    \end{aligned}
\end{equation}
where the first inequality follows from $\norm{\bm{\Sigma}_k}_2 \leq \lambda_k$ since, by definition, $\lambda_k$ is the largest eigenvalue of $\bm{\Sigma}_k$. Plugging (\ref{eq:theo_1.2}), (\ref{eq:theo_1.3}), and (\ref{eq:theo_1.4}) into (\ref{eq:theo_1.1}), we have
\begin{align*}
    \Delta_{k+1} &\leq2\Delta_k + 2\lambda_k^*\theta_k + \paren{\lambda_k^* + \Delta_k}\paren{2\norm{\bm{\delta}_k}_2 + \norm{\bm{\delta}_k}_2^2}\\
    & \quad\quad\quad +\paren{\lambda_k^* + \Delta_k}\paren{2\norm{\bm{\delta}_k}_2 + \norm{\bm{\delta}_k}_2^2}\paren{1 + \norm{\bm{\delta}_k}_2}^2\\
    & \leq 2\Delta_k + 2\lambda_k^*\theta_k + \paren{\lambda_k^* + \Delta_k}\paren{2\norm{\bm{\delta}_k}_2 + \norm{\bm{\delta}_k}_2^2}\paren{1 +\paren{1 + \norm{\bm{\delta}_k}_2}^2}\\
    & = 2\Delta_k + 2\lambda_k^*\theta_k + \paren{\lambda_k^* + \Delta_k}\norm{\bm{\delta}_k}_2\paren{2 + \norm{\bm{\delta}_k}_2}\paren{1 +\paren{1 + \norm{\bm{\delta}_k}_2}^2}
\end{align*}
When $\norm{\bm{\delta}_k}_2 \leq \frac{1}{6}$, we will have that
\[
    \paren{2 + \norm{\bm{\delta}_k}_2}\paren{1 +\paren{1 + \norm{\bm{\delta}_k}_2}^2} = \frac{13}{6}\cdot\paren{1 + \paren{\frac{7}{6}}^2} = \frac{1014}{216} \leq 5
\]
Thus, we have
\[
    \Delta_{k+1}\leq 2\Delta_k + 2\lambda_k^*\theta_k + 5\paren{\lambda_k^* + \Delta_k}\norm{\bm{\delta}_k}_2 \leq 3\Delta_k + 2\lambda_k^*\theta_k + 5\lambda_k^*\norm{\bm{\delta}_k}_2
\]

\subsection{Proof of Lemma~\ref{lem:top_eig_diff}}
To prove Lemma~\ref{lem:top_eig_diff}, we first recall the Davis-Kahan $\sin\Theta$ theorem \cite{davis1970rotation, eldridge2017unperturbed}.
\begin{theorem}[Davis-Kahan $\sin\Theta$ theorem]
    \label{theo:davis-kahan}
    Let $\bfM^*\in\R^{d\times d}$ and let $\bfM = \bfM^* + \bfH$. Let $\bfa_1^*$ be the top eigenvector of $\bfM^*$ and $\bfa_1$ be the top eigenvector of $\bfM$. Then we have
    \[
        \sin\angle\left\{\bfa_1^*,\bfa_1\right\} \leq \frac{\norm{\bfH}_2}{\min_{j\neq k}\left|\sigma_k^* - \sigma_j\right|}.
    \]
\end{theorem}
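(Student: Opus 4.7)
The plan is to expand $\bfa_1^*$ in the orthonormal eigenbasis of $\bfM$ and control the tail coefficients via the action of $\bfH$ on $\bfa_1^*$. Let $\{\bfa_j\}_{j=1}^d$ be an orthonormal eigenbasis of $\bfM$ with corresponding eigenvalues $\{\sigma_j\}_{j=1}^d$, and write $\bfa_1^* = \sum_{j=1}^d c_j \bfa_j$ with $c_j = \bfa_j^\top \bfa_1^*$. Because $\bfa_1^*$ is a unit vector, $\sum_{j} c_j^2 = 1$, and since $\cos\angle\{\bfa_1,\bfa_1^*\} = c_1$ we get the identity $\sin^2\angle\{\bfa_1,\bfa_1^*\} = 1 - c_1^2 = \sum_{j \neq 1} c_j^2$. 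The strategy is thus to lower bound this tail sum by a quantity involving $\norm{\bfH}_2$, which is what lets us invert the bound to upper bound the sine.

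To bring $\bfH$ into the picture I would use the eigenequation $\bfM^* \bfa_1^* = \sigma_1^* \bfa_1^*$ together with $\bfM = \bfM^* + \bfH$ to get
\[
    \bfH \bfa_1^* = \bfM \bfa_1^* - \sigma_1^* \bfa_1^* = \sum_{j=1}^d c_j \paren{\sigma_j - \sigma_1^*} \bfa_j .
\]
Squaring and using orthonormality of $\{\bfa_j\}_{j=1}^d$ then gives $\norm{\bfH \bfa_1^*}_2^2 = \sum_{j=1}^d c_j^2 \paren{\sigma_j - \sigma_1^*}^2$. The $j = 1$ contribution is nonnegative and can be dropped; for $j \neq 1$, each factor $\paren{\sigma_j - \sigma_1^*}^2$ is at least $\paren{\min_{j\neq 1}\abs{\sigma_j - \sigma_1^*}}^2$, so
\[
    \paren{\min_{j\neq 1}\abs{\sigma_j - \sigma_1^*}}^2 \sum_{j\neq 1} c_j^2 \leq \norm{\bfH\bfa_1^*}_2^2 \leq \norm{\bfH}_2^2 \cdot \norm{\bfa_1^*}_2^2 = \norm{\bfH}_2^2 .
\]
Dividing through by the squared gap, recognizing the left-hand sum as $\sin^2\angle\{\bfa_1,\bfa_1^*\}$, and taking square roots yields the claimed inequality.

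The main obstacle I expect is handling the degenerate case where $\min_{j\neq 1}\abs{\sigma_j - \sigma_1^*}$ is zero or very small, in which case the bound becomes vacuous and one also needs the implicit nondegeneracy assumption that $\bfa_1^*$ is well-defined (i.e., $\sigma_1^*$ is simple) so that the expansion above and the angle are meaningful. A secondary subtlety is the indexing in the statement: the notation $\sigma_k^*$ together with the subscript $k$ in $\min_{j \neq k}$ should be read as the top index $k = 1$ in order to match the convention that $\bfa_1$ and $\bfa_1^*$ are the leading eigenvectors. Once these conventions are fixed, the argument above is a direct computation, and no iterative or probabilistic machinery is needed.
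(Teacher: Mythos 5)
Your proof is correct and self-contained. Note, however, that the paper does not actually prove this statement: it is presented as a known result with a citation to \citet{davis1970rotation}, and is invoked (as Theorem~\ref{theo:davis-kahan}) inside the proof of Lemma~\ref{lem:top_eig_diff} without any derivation, so there is no in-paper argument to compare against. Your route --- expand $\bfa_1^*$ in the orthonormal eigenbasis $\{\bfa_j\}$ of $\bfM$, use the eigenequation to get $\bfH\bfa_1^* = \sum_j c_j(\sigma_j - \sigma_1^*)\bfa_j$, then sandwich $\norm{\bfH\bfa_1^*}_2^2$ between the gap-weighted tail sum $\bigl(\min_{j\neq 1}\abs{\sigma_j-\sigma_1^*}\bigr)^2\sum_{j\neq 1}c_j^2$ and $\norm{\bfH}_2^2$ --- is the standard elementary derivation of this rank-one $\sin\Theta$ bound and works cleanly. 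The one hypothesis you use implicitly is that $\bfM$ admits an orthonormal eigenbasis, i.e., that $\bfM$ is symmetric; this is satisfied wherever the paper applies the lemma (with $\bfM = \bm{\Sigma}_k$ and $\bfM^* = \bm{\Sigma}_k^*$, both symmetric). Your observations about the bound being vacuous when the eigengap vanishes and about reading the stray index $k$ in $\min_{j\neq k}\abs{\sigma_k^*-\sigma_j}$ as $k=1$ are both apt and flag a genuine imprecision in how the paper transcribes the theorem.
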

To start, we notice that, under the assumption that $\norm{\bm{\Sigma}_k - \bm{\Sigma}_k^*}_F \leq \frac{1}{4}\mathcal{T}_k$, by the Weyl's inequality, we have
\[
    \min_{j\neq k}\left|\sigma_k^* - \sigma_j\right| \geq \min_{j\neq k}\left|\sigma_k^* - \sigma_j^*\right| - \left|\sigma_j - \sigma_j^*\right| \geq \mathcal{T}_k - \norm{\bm{\Sigma}_k - \bm{\Sigma}_k^*}_F \geq \frac{3}{4}\mathcal{T}_k
\]
By Lemma~\ref{lem:true_deflate_eigen_decomp}, we have that $\bfu_k^*$ is the top eigenvector of $\bm{\Sigma}_k^*$ with corresponding eigenvector $\lambda_k^*$. Moreover, as defined in (\ref{eq:uk_defin}), $\bfu_k$ is the top eigenvector of $\bm{\Sigma}_k$. Therefore, a direct application of Theorem~\ref{theo:davis-kahan} gives
\begin{equation}
    \label{eq:lem3.5.1}
    \sin\angle\left\{\bfu_k,\bfu_k^*\right\} \leq \frac{4\norm{\bm{\Sigma}_k - \bm{\Sigma}_k^*}_F}{3\mathcal{T}_k}
\end{equation}
Moreover, notice that
\[
    \sin\angle\left\{\bfu_k,\bfu_k^*\right\}^2 = 1 - \cos\angle\left\{\bfu_k,\bfu_k^*\right\}^2 = 1- \paren{\bfu_k^\top\bfu_k^*}^2
\]
which gives that $\paren{\bfu_k^\top\bfu_k^*}^2 = 1 - \sin\angle\left\{\bfu_k,\bfu_k^*\right\}^2$. Therefore
\[  
    \norm{\bfu_k - \bfu_k^*}_2^2 = 2 - 2\bfu_k^\top\bfu_k^* \leq 2 - 2\paren{\bfu_k^\top\bfu_k^*}^2 = 2\sin\angle\left\{\bfu_k,\bfu_k^*\right\}^2
\]
Here the first inequality follows from the fact that $\left|\bfu_k^\top\bfu_k^*\right|\leq 1$. Applying (\ref{eq:lem3.5.1}) gives
\[
    \norm{\bfu_k - \bfu_k^*}_2 \leq \sqrt{2}\sin\angle\left\{\bfu_k,\bfu_k^*\right\} \leq \frac{4\sqrt{2}}{3\mathcal{T}_k}\norm{\bm{\Sigma}_k - \bm{\Sigma}_k^*}_F \leq \frac{2}{\mathcal{T}_k}\norm{\bm{\Sigma}_k - \bm{\Sigma}_k^*}_F
\]
\subsection{Proof of Theorem~\ref{thm:main_theorem_1}}
\label{sec:proof_main_theorem_1}
Let $\hat{K}$ be the smallest integer such that $\norm{\bm{\Sigma}_{\hat{K}} - \bm{\Sigma}_{\hat{K}}^*}_2 > \frac{1}{4}\mathcal{T}_{K,\min}$, and let $K'=\min\left\{\hat{K},K+1\right\}$. By definition, $\mathcal{T}_{K,\min} = \min_{k\in[K]}\mathcal{T}_k$. Therefore, for all $k < K'$, we shall have that $\norm{\bm{\Sigma}_k-\bm{\Sigma}_k^*}_2\leq \frac{1}{4}\mathcal{T}_k$. Therefore, for all $k\in[K']$, we can invoke Lemma~\ref{lem:top_eig_diff} to get that
\begin{equation}
    \label{eq:thm1.1}
    \norm{\bfu_k - \bfu_k^*}_2\leq \frac{2}{\mathcal{T}_k} \norm{\bm{\Sigma}_k-\bm{\Sigma}_k^*}_F
\end{equation}
By the assumption of Theorem~\ref{thm:main_theorem_1}, we have
\[
    \norm{\bm{\delta}_k}_2\leq \frac{\mathcal{T}_{K,\min}}{20K}\prod_{j=k+1}^{K-1}\paren{3 + \frac{2\lambda_j^*}{\mathcal{T}_j}}^{-1} \leq \frac{1}{6}
\]
since $\mathcal{T}_{K,\min}\leq \lambda_1^* = 1$ and $K\geq 1$. Therefore, we also have that $\norm{\bm{\delta}_k}_2\leq \frac{1}{6}$ for all $k\in[K'-1]$. This allows us to invoke Lemma~\ref{lem:matrix_diff_propagate} to get that
\begin{equation}
    \label{eq:thm1.2}
    \norm{\bm{\Sigma}_{k+1} - \bm{\Sigma}_{k+1}^*}_F\leq 3\norm{\bm{\Sigma}_k-\bm{\Sigma}_k^*}_F + 2\lambda_k^*\norm{\bfu_k - \bfu_k^*}_2 + 5\lambda_k^*\norm{\bm{\delta}_k}_2
\end{equation}
Plugging (\ref{eq:thm1.1}) into (\ref{eq:thm1.2}) gives
\[
    \norm{\bm{\Sigma}_{k+1} - \bm{\Sigma}_{k+1}^*}_F\leq \paren{3 + \frac{2\lambda_k^*}{\mathcal{T}_k}}\norm{\bm{\Sigma}_k - \bm{\Sigma}_k^*}_F + 5\lambda_k^*\norm{\bm{\delta}_k}_2
\]
Let the sequence $\left\{Q_k\right\}_{k=1}^{K'}$ be defined as
\[
    Q_{k+1} = a_kQ_k + b_k;\quad Q_0 = 0;\quad a_k = 3 + \frac{2\lambda_k^*}{\mathcal{T}_k};\quad b_k = 5\lambda_k^*\norm{\bm{\delta}_k}_2
\]
Then we must have that $\norm{\bm{\Sigma}_k - \bm{\Sigma}_k^*}_F \leq Q_k$ for all $k\in[K']$. Invoking Lemma~\ref{lem:prod_sum_seq} thus gives
\begin{equation}
    \label{eq:thm1.3}
    \norm{\bm{\Sigma}_k - \bm{\Sigma}_k^*}_F \leq 5\sum_{k'=1}^{k-1}\lambda_{k'}^*\norm{\bm{\delta}_{k'}}_2\prod_{j=k'+1}^{k-1}\paren{3 + \frac{2\lambda_j^*}{\mathcal{T}_j}}
\end{equation}
Combining with (\ref{eq:thm1.1}) and the fact that $\norm{\bfv_k - \bfu_k^*}_2 \leq \norm{\bm{\delta}_k}_2 + \norm{\bfu_k - \bfu_k^*}_2$ gives
\begin{equation}
    \label{eq:thm1.4}
    \norm{\bfv_k - \bfu_k^*}_2\leq \norm{\bm{\delta}_k}_2 + \frac{2}{\mathcal{T}_k}\norm{\bm{\Sigma}_k - \bm{\Sigma}_k^*}_F \leq \norm{\bm{\delta}_k}_2+\frac{10}{\mathcal{T}_k}\sum_{k'=1}^{k-1}\lambda_{k'}^*\norm{\bm{\delta}_{k'}}_2\prod_{j=k'+1}^{k-1}\paren{3 + \frac{2\lambda_j^*}{\mathcal{T}_j}}
\end{equation}
Notice that
\[
    \frac{10}{\mathcal{T}_k} \leq \frac{5}{\lambda_k^*}\cdot\paren{3+\frac{2\lambda_k^*}{\mathcal{T}_k}}
\]
Therefore, (\ref{eq:thm1.4}) becomes
\[
    \norm{\bfv_k - \bfu_k^*}_2 \leq \norm{\bm{\delta}_k}_2 +5\sum_{k'=1}^{k-1}\frac{\lambda_{k'}^*}{\lambda_k^*}\norm{\bm{\delta}_{k'}}_2\prod_{j=k'+1}^{k}\paren{3 + \frac{2\lambda_j^*}{\mathcal{T}_j}} \leq 5\sum_{k'=1}^{k}\frac{\lambda_{k'}^*}{\lambda_k^*}\norm{\bm{\delta}_{k'}}_2\prod_{j=k'+1}^{k}\paren{3 + \frac{2\lambda_j^*}{\mathcal{T}_j}}
\]
for all $k\in[K']$.
It thus remains to show that $\hat{K} > K$. In this way, we will have $K' = K$, and the theorem is then proved. For the sake of contradiction, that $\hat{K}\leq K$. Then we shall have that there exists $k\in[K]$ such that $\norm{\bm{\Sigma}_k - \bm{\Sigma}_k^*}_2 > \frac{1}{4}\mathcal{T}_{K,\min}$. To reach a contradiction, it thus remains to show that for all $k\in[K]$
\[
    \norm{\bm{\Sigma}_k - \bm{\Sigma}_k^*}_2 \leq \frac{1}{8}\mathcal{T}_{K,\min}
\]
By (\ref{eq:thm1.3}), it suffices to guarantee that
\begin{equation}
    \label{eq:thm1.5}
    5\sum_{k'=1}^{k-1}\lambda_{k'}^*\norm{\bm{\delta}_{k'}}_2\prod_{j=k'+1}^{k-1}\paren{3 + \frac{2\lambda_j^*}{\mathcal{T}_j}} \leq \frac{1}{4}\mathcal{T}_{K,\min}
\end{equation}
Indeed, by the assumption of Theorem~\ref{thm:main_theorem_1}, we require
\[
    \norm{\bm{\delta}_k}_2\leq \frac{\mathcal{T}_{K,\min}}{20K}\prod_{j=k+1}^{K-1}\paren{3 + \frac{2\lambda_j^*}{\mathcal{T}_j}}^{-1}
\]
Plugging this requirement into (\ref{eq:thm1.3}) would guarantee that $\norm{\bm{\Sigma}_k - \bm{\Sigma}_k^*}_2 \leq \frac{1}{4}\mathcal{T}_{K,\min}$ and thus achieves the contradiction. This shows that $\hat{K} > K$ and finishes the proof.

\section{Missing Proofs from Section~\ref{sec:main_result_pi}}
\label{sec:proof_main_result_pi}

\subsection{Proof of Lemma~\ref{lem:init_property}}
\label{sec:proof_init_property}
Since each $\bfx_{0,k}$ is generated independently, it suffices to show that for a general $\bfx_0$ generated for unit sphere and some unit vector $\bfu$, we have $\big|\bfx_0^\top\bfu|\geq $ with high probability, and then apply a union bound for all $k\in[K]$. To start, for a fixed $\bfu$, we let $\bfU$ be an orthogonal matrix with $\bfu$ being its first row. Since $\bfx_0$ is generated uniformly at random from a unit sphere, we must have that $\bfb = \bfU\bfx$ is also generated uniformly at random from a unit sphere. Let $b_j$ denote the $j$th entry of $\bfb$, then $b_1 = \bfx_0^\top\bfu$. We first aim at showing
\[
    \mathbb{P}\paren{\sum_{j=1}^d\paren{\frac{b_j}{b_1}}\leq 2d^3}\geq 1 - \frac{2}{3d}
\]
Let $\mathcal{B}_d(1)$ denote the unit ball in $\R^d$. Define the indicator function $\mathbb{I}\paren{\bfb}$ as
\[
    \mathbb{I}(\bfb) = \begin{cases}
        1 & \text{ if } \sum_{j=1}^d\paren{\frac{b_j}{b_1}}\leq 2d^3\\
        0 & \text{ otherwise}
    \end{cases}
\]
Then we have
\[
    \mathbb{P}\paren{\sum_{j=1}^d\paren{\frac{b_j}{b_1}}\leq 2d^3} = \int_{\norm{\bfx}_2 = 1}\mathbb{I}(\bfb)d\mu\paren{\bfb}
\]
where $\mu$ is the probability measure over the unit sphere. Since $\mathbb{I}\paren{\bfb}$ satisfies $\mathbb{I}\paren{\alpha\bfb} = \mathbb{I}\paren{\bfb}$ for any $\alpha\neq 0$ and $\mathbb{I}\paren{\bfb}$ does not depend on the signs of of $b_j$'s, then we can use Remark 7.2 from \citep{kuczynski1992estimating} to get that
\begin{align*}
    \int_{\norm{\bfx}_2 = 1}\mathbb{I}(\bfb)d\mu\paren{\bfb} & = \frac{1}{V_d}\int_{\mathcal{B}_d(1)}\mathbb{I}(\bfb)d\bfb \\
    & = 2\int_0^1\paren{\int_{\sum_{j=2}^db_j^2\leq \min\{1 - b_1^2,2b_1d^3\}}db_2\dots db_d}db_1\\
    & = \frac{2V_{d-1}}{V_d}\int_0^1\min\{1 - b_1^2,2b_1d^3\}^{\frac{d-1}{2}}db_1
\end{align*}
where the last equality follows from the observation that the inner integral is exactly the volume of the ball with radius $\min\{1 - b_1^2,2b_1d^3\}$. We can lower-bound the last integral as
\begin{align*}
    \int_0^1\min\{1 - b_1^2,2b_1d^3\}^{\frac{d-1}{2}}db_1 & \geq \int_{\sqrt{\frac{1}{1 + 2d^3}}}^1\paren{1 - b_1^2}^{\frac{d-1}{2}}db_1\\
    & = \int_0^1\paren{1 - b_1^2}^{\frac{d-1}{2}}db_1 - \int_0^{\sqrt{\frac{1}{1 + 2d^3}}}\paren{1 - b_1^2}^{\frac{d-1}{2}}db_1\\
    & \geq \frac{V_{d}}{2V_{d-1}} - \sqrt{\frac{1}{1 + 2d^3}}
\end{align*}
This gives
\[
    \mathbb{P}\paren{\sum_{j=1}^d\paren{\frac{b_j}{b_1}}\leq 2d^3}\geq 1 - \frac{2V_{d-1}}{V_d}\sqrt{\frac{1}{1 + 2d^3}}
\]
By \citep{kuczynski1992estimating}, Eq (13), we have $\frac{V_{d-1}}{V_d} \leq 0.412\sqrt{d}$ when $d\geq 8$. Thus, we have
\[
    \mathbb{P}\paren{\sum_{j=1}^d\paren{\frac{b_j}{b_1}}\leq 2d^3}\geq 1 - 0.824\sqrt{\frac{d}{1 + 2d^3}} \geq 1 - \frac{2}{3d}
\]
Notice that $\sum_{j=1}^db_j^2 = 1$. Then we have that 
\[
    \sum_{j=2}^d\paren{\frac{b_j}{b_1}}^2 = \frac{1}{b_1^2} - 1 \Rightarrow \big|b_1\big| = \paren{\sum_{j=2}^d\paren{\frac{b_j}{b_1}}^2 + 1}^{-\frac{1}{2}}
\]
Thus, we have
\[
    \mathbb{P}\paren{\big|b_1\big| \geq \frac{1}{\sqrt{1+ 2d^3}}}\geq 1 - \frac{2}{3d}
\]
Apply a union bound over all $k$ gives that with probability at least $1 - \frac{2K}{3d}$, we have
\[
    \big|\bfx_{0,k}^\top\bfu_k\big|\geq \frac{1}{\sqrt{1 + 2d^3}}
\]
\subsection{Proof of Lemma~\ref{lem:pw_mat_perturb}}
\label{sec:proof_pw_mat_perturb}
For convenience, we define the sequence $\left\{\hat{\bfx}_k\right\}_{k=0}^\infty$ as 
\[
    \hat{\bfx}_t = \bfM\hat{\bfx}_{t-1};\quad\hat{\bfx}_0 = \bfx_0
\]
Then $\bfx_t$ from (\ref{eq:power_iteration}) can be written as
\[
    \bfx_t = \frac{\hat{\bfx}_t}{\norm{\hat{\bfx}_t}_2} = \frac{\hat{\bfx}_t}{\norm{\bfM^t\bfx_0}_2}
\]
Focusing on $\hat{\bfx}_t$, since $\bfM = \bfM^* + \bfH$, we have
\[
    \hat{\bfx}_t = \bfM\hat{\bfx}_{t-1} = \bfM^*\hat{\bfx}_{t-1} +\bfH\hat{\bfx}_{t-1}
\]
Since $\sigma_j^*, \bfa_j^*$ are the $j$th eigenvalue and eigenvector of $\bfM^*$, we have that
\begin{align*}
    \hat{\bfx}_t^\top\bfa_j^* = \hat{\bfx}_{t-1}^\top\bfM^*\bfa_j^* + \hat{\bfx}_{t-1}^\top\bfH\bfa_j^* = \sigma_j^*\hat{\bfx}_{t-1}^\top\bfa_j^* + \hat{\bfx}_{t-1}^\top\bfH\bfa_j^*
\end{align*}
Unrolling the iterates gives
\[
    \hat{\bfx}_t^\top\bfa_j^* = \sigma_i^{*t}\bfx_0^\top\bfa_j^* + \sum_{t'=0}^{t-1}\sigma_j^{*t-t'}\hat{\bfx}_{t'}^\top\bfH\bfa_j^* = \sigma_j^{*t}\paren{\bfx_0^\top\bfa_j^* + \sum_{t'=0}^{t-1}\frac{1}{\sigma_j^{*t'}}\hat{\bfx}_{t'}^\top\bfH\bfa_j^*}
\]
Recalling that $\hat{\bfx}_t = \bfM^t\bfx_0$, we can see that
\[
    \hat{\bfx}_t^\top\bfa_j^* =\sigma_j^{*t}\paren{\bfx_0^\top\bfa_j^* + \sum_{t'=0}^{t-1}\paren{\frac{\bfM^{t'}}{\sigma_j^{*t'}}\hat{\bfx}_{0}}^\top\bfH\bfa_j^*} = \sigma_j^{*t}\bfx_0^\top\paren{\bfI + \sum_{t'=0}^{t-1}\frac{\bfM^{t'}}{\sigma_j^{*t'}}\cdot\bfH}\bfa_j^*
\]
Plugging in $\bfx_t = \frac{\hat{\bfx}_t}{\norm{\bfM^t\bfx_0}_2}$ gives the first conclusion. Next, without loss of generality, we assume that $\norm{\bfx_0}_2=1$, since $\left|\bfx_t^\top\bfa_j^*\right|$ does not depend on $\norm{\bfx_0}_2$. Since $\norm{\bfx_0}_2 = \norm{\bfa_j^*}_2 = 1$, it holds that
\begin{align*}
    \left|\hat{\bfx}_t^\top\bfa_j^*\right| & \leq \sigma_j^{*t}\paren{1 + \norm{\sum_{t'=0}^{t-1}\frac{\bfM^{t'}}{\sigma_j^{*t'}}\cdot\bfH\bfa_j^*}_2}\\
    & \leq \sigma_j^{*t}\paren{1 + \sum_{t'=0}^{t-1}\frac{\sigma_1^{t'}}{\sigma_j^{*t'}}\norm{\bfH\bfa_j^*}_2}\\
    & \leq \sigma_j^{*t}\paren{1 + \frac{\paren{\frac{\sigma_1}{\sigma_j^*}}^t - 1}{\frac{\sigma_1}{\sigma_j^*} - 1}\norm{\bfH\bfa_j^*}_2}\\
    & \leq \sigma_j^{*k} + \frac{\sigma_1^t - \sigma_i^{*t}}{\frac{\sigma_1}{\sigma_j^*} - 1}\norm{\bfH\bfa_j^*}_2\\
    & \leq \sigma_j^{*t} + \frac{\sigma_j^{*}\sigma_1^t}{\sigma_1-\sigma_j^*}\norm{\bfH\bfa_j^*}_2
\end{align*}
Since there exists some $\left|\bfx_0^\top\bfa_1\right|\geq c_0^{-1} > 0$, we can bound $\norm{\bfM^t\bfx_0}_2$ as
\[
    \norm{\bfM^t\bfx_0}_2 = \paren{\sum_{j=1}^r\sigma_j^{2t}\paren{\bfx_0^\top\bfa_j}^2}^{\frac{1}{2}} \geq \sigma_1^{t}\left|\bfx_0^\top\bfa_j\right| \geq \sigma_1^tc_0^{-1}
\]
Therefore, we have
\[
    \left|\bfx_t^\top\bfa_j^*\right|\leq \norm{\bfM^t\bfx_0}_2^{-1}\left|\hat{\bfx}_t^\top\bfa_j^*\right| \leq c_0\paren{\paren{\frac{\sigma_j^{*}}{\sigma_1}}^t + \frac{\sigma_j^{*}}{\sigma_1-\sigma_j^*}\norm{\bfH\bfa_j^*}_2}
\]
This shows the second conclusion.

\subsection{Proof of Lemma~\ref{lem:matrix_diff_align}}
\label{sec:proof_matrix_diff_align}
Recall the definition of the set of empirical deflated matrices
\[
    \bm{\Sigma}_{k+1} = \bm{\Sigma}_k - \bfv_k\bfv_k^\top\bm{\Sigma}_k\bfv_k\bfv_k^\top
\]
and the set of ground-truth deflated matrices
\[
    \bm{\Sigma}^*_{k+1} = \bm{\Sigma}^*_k - \bfu_k^*\bfu_k^{*\top}\bm{\Sigma}^*_k\bfu_k^*\bfu_k^{*\top}
\]
By Lemma~\ref{lem:true_deflate_eigen_decomp}, we have
\[
    \bm{\Sigma}^*_k = \sum_{j=k}^{n}\lambda_j^*\bfu_j^*\bfu_j^{*\top}
\]
Therefore, by the orthogonality between $\bfu_i^*$ and $\bfu_j^*$ when $i\neq j$, we have
\begin{equation}
    \label{eq:theo_2.1}
    \bm{\Sigma}^*_k\bfu_j^* = \sum_{i=k}^{n}\lambda_i^*\bfu_i^*\paren{\bfu_i^{*\top}\bfu_j^*} = \begin{cases}
    \lambda_j^*\bfu_j^* & \text{ if } j \geq k\\
    \bm{0} & \text{ if } j < k
    \end{cases}
\end{equation}
Next, we will focus on $\bm{\Sigma}_k\bfu_j^*$. Notice that
\[
    \bm{\Sigma}_{k+1}\bfu_j^* = \bm{\Sigma}_k\bfu_j^* - \bfv_k\bfv_k^\top\bm{\Sigma}_k\bfv_k\bfv_k^\top\bfu_j^*
\]
Therefore
\begin{equation}
    \label{eq:theo_2.2}
    \bm{\Sigma}_k\bfu_j^* = \bm{\Sigma}\bfu_j^* - \sum_{k'=1}^{k-1}\bfv_{k'}^\top\bfu_j^*\cdot\bfv_{k'}\bfv_{k'}^\top\bm{\Sigma}_{k'}\bfv_{k'} = \lambda_j^*\bfu_j^* -\sum_{k'=1}^{k-1}\bfv_{k'}^\top\bfu_j^*\cdot\bfv_{k'}^\top\bm{\Sigma}_{k'}\bfv_{k'}\cdot\bfv_{k'}
\end{equation}
Therefore, combining (\ref{eq:theo_2.1}) and (\ref{eq:theo_2.2}) gives
\[
    \paren{\bm{\Sigma}^*_k-\bm{\Sigma}_k}\bfu_j^* = \begin{cases}
    \sum_{k'=1}^{k-1}\bfv_{k'}^\top\bfu_j^*\cdot\bfv_{k'}^\top\bm{\Sigma}_{k'}\bfv_{k'}\cdot \bfv_{k'} & \text{ if } j \geq k\\
    \sum_{k'=1}^{k-1}\bfv_{k'}^\top\bfu_j^*\cdot\bfv_{k'}^\top\bm{\Sigma}_{k'}\bfv_{k'}\cdot \bfv_{k'} - \lambda_j^*\bfu_j & \text{ if } j < k
    \end{cases}
\]
This shows the first conclusion.
Next, we will put more emphasis on the case where $j\geq k$. Under this scenario, we can first show that
\[
    \bfv_{k'}^\top\bm{\Sigma}_{k'}\bfv_{k'} \leq \lambda_{k'}\norm{\bfv_{k'}}_2^2 = \lambda_{k'}
\]
Therefore,
\[
    \norm{\paren{\bm{\Sigma}^*_k-\bm{\Sigma}_k}\bfu_j^*}_2 
 \leq  \sum_{k'=1}^{k-1}\lambda_{k'}\left|\bfv_{k'}^\top\bfu_j^*\right|\cdot\norm{\bfv_k'}_2 = \sum_{k'=1}^{k-1}\lambda_{k'}\left|\bfv_{k'}^\top\bfu_j^*\right|
\]

\subsection{Proof of Lemma~\ref{lem:eigen_inner_equal}}
\label{sec:proof_eigen_inner_equal}

Let us consider $\bfa_i^\top\paren{\bfM^*+\bfH}\bfa_j^*$. On one hand, since $\bfa_i$ is an eigenvector of $\bfM^* + \bfH$, we have
\[
    \bfa_i^\top\paren{\bfM^*+\bfH}\bfa_j^* = \sigma_i\bfa_i^\top\bfa_j^*
\]
On the other hand, since $\bfa_i^*$ is the $i$th eigenvector of $\bfM^*$, we have
\[
    \bfa_i^\top\paren{\bfM^*+\bfH}\bfa_j^* = \sigma_j^*\bfa_i^\top\bfa_j^* + \bfa_i^\top\bfH\bfa_j^*
\]
Therefore,
\[
    \sigma_i\bfa_i^\top\bfa_j^* = \sigma_j^*\bfa_i^\top\bfa_j^* + \bfa_i^\top\bfH\bfa_j^*
\]
which implies that
\[
    \paren{\sigma_i - \sigma_j^*}\bfa_i^\top\bfa_j^* = \bfa_i^\top\bfH\bfa_j^*
\]
Thus, when $\sigma_i\neq \sigma_j^*$, we have
\[
    \bfa_i^\top\bfa_j^* = \frac{\bfa_i^\top\bfH\bfa_j^*}{\sigma_i - \sigma_j^*}
\]

\subsection{Proof of Lemma~\ref{lem:inner_ub_to_inner_lb}}
The proof of Lemma~\ref{lem:inner_ub_to_inner_lb} is based on the Neumann expansion \cite{eldridge2017unperturbed, chen2020asymmetry}, which we state below
\begin{lemma}[Theorem 7 of \cite{eldridge2017unperturbed}]
    \label{lem:neumann_expansion}
    Let $\bfM^*\in\R^{d\times d}$ be a rank-$r$ matrix, and let $\bfM = \bfM^* + \bfH$ for some $\bfH\in\R^{d\times d}$. Let $\sigma_i^*, \bfa_i^*$ be the $i$-th eigenvalue and eigenvector of $\bfM^*$, and $\sigma_i, \bfa_i$ be the $i$-th eigenvalue and eigenvector of $\bfM$. Suppose that $\norm{\bfH}_2\leq \left|\sigma_i\right|$. Then, 
    \[
        \bfa_i = \sum_{j=1}^r\frac{\sigma_j^*}{\sigma_i}\bfa_i^\top\bfa_j^*\paren{\sum_{s=0}^\infty\frac{1}{\sigma_i^s}\bfH^s\bfa_j^*}.
    \]
\end{lemma}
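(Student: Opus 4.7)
\textbf{Proof proposal for Lemma~\ref{lem:neumann_expansion}.}

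The plan is to derive the stated Neumann expansion in a few short algebraic steps by inverting $\sigma_i\bfI-\bfH$ via its power series and then expanding $\bfM^*\bfa_i$ in the eigenbasis of $\bfM^*$. Concretely, I would start from the eigenvalue identity $\bfM\bfa_i=\sigma_i\bfa_i$, substitute $\bfM=\bfM^*+\bfH$, and rearrange to obtain
\begin{equation*}
(\sigma_i\bfI-\bfH)\,\bfa_i \;=\; \bfM^*\bfa_i.
\end{equation*}
Writing $\sigma_i\bfI-\bfH=\sigma_i(\bfI-\bfH/\sigma_i)$ and applying the Neumann series for $(\bfI-\bfH/\sigma_i)^{-1}$ — whose convergence is justified by the hypothesis $\norm{\bfH}_2\leq|\sigma_i|$ (taken with strict inequality; the equality case follows by continuity) — yields
\begin{equation*}
(\sigma_i\bfI-\bfH)^{-1} \;=\; \sum_{s=0}^\infty \frac{1}{\sigma_i^{s+1}}\,\bfH^{s},
\end{equation*}
so that $\bfa_i=\sum_{s=0}^\infty\sigma_i^{-(s+1)}\bfH^{s}\bfM^*\bfa_i$.

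Next I would handle $\bfM^*\bfa_i$. Since $\bfM^*$ has rank $r$ with orthonormal eigenvectors $\{\bfa_j^*\}_{j=1}^r$ spanning its row/column space and eigenvalues $\{\sigma_j^*\}_{j=1}^r$, and since any component of $\bfa_i$ in the null space of $\bfM^*$ is annihilated, we have
\begin{equation*}
\bfM^*\bfa_i \;=\; \sum_{j=1}^{r} \sigma_j^{*}\,(\bfa_i^\top\bfa_j^{*})\,\bfa_j^{*}.
\end{equation*}
Substituting this expansion into the series representation of $\bfa_i$, interchanging the two summations (justified by absolute convergence under the norm bound on $\bfH$), and collecting the constant prefactor $\sigma_j^{*}/\sigma_i$ from $\sigma_i^{-(s+1)}\sigma_j^{*}=(\sigma_j^{*}/\sigma_i)\sigma_i^{-s}$, gives exactly
\begin{equation*}
\bfa_i \;=\; \sum_{j=1}^{r}\frac{\sigma_j^{*}}{\sigma_i}\,(\bfa_i^\top\bfa_j^{*})\paren{\sum_{s=0}^\infty \frac{1}{\sigma_i^{s}}\bfH^{s}\bfa_j^{*}},
\end{equation*}
which is the desired identity.

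The only mild obstacle is justifying the absolute convergence of the double series and the corresponding rearrangement of summands; this is standard and follows from $\norm{\bfH}_2/|\sigma_i|<1$ via a geometric majorant $\sum_s(\norm{\bfH}_2/|\sigma_i|)^{s}\norm{\bfM^*\bfa_i}_2<\infty$. With that technicality settled, the identity is a direct algebraic consequence of the Neumann inversion and the spectral decomposition of $\bfM^*$, so no further machinery is needed.
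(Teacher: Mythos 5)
Your derivation is correct in substance, but note that the paper does not prove this lemma at all: it is imported verbatim as Theorem~7 of \citet{eldridge2017unperturbed}, so there is no in-paper argument to compare against. What you wrote is essentially the standard (and, up to presentation, the original) proof: from $\bfM\bfa_i=\sigma_i\bfa_i$ you get $\paren{\sigma_i\bfI-\bfH}\bfa_i=\bfM^*\bfa_i$, invert via the Neumann series, expand $\bfM^*\bfa_i$ in the eigenbasis of $\bfM^*$, and swap the sums; the algebra and the geometric majorant for the interchange are all fine. Two caveats are worth flagging. First, your treatment of the boundary case is not right as stated: if $\norm{\bfH}_2=\abs{\sigma_i}$ the series $\sum_{s\ge 0}\sigma_i^{-s}\bfH^s\bfa_j^*$ need not converge (take $\bfH$ with an eigenvalue of modulus $\abs{\sigma_i}$ whose eigenvector overlaps $\bfa_j^*$), so "by continuity" does not rescue the equality case; the hypothesis really should be the strict inequality $\norm{\bfH}_2<\abs{\sigma_i}$, which is what the cited theorem uses and what holds in every application in this paper (there $\norm{\bfH}_2\le\tfrac18\sigma_1^*$ while $\sigma_1\ge\sigma_1^*-\norm{\bfH}_2$). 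Second, the step $\bfM^*\bfa_i=\sum_{j=1}^r\sigma_j^*\paren{\bfa_i^\top\bfa_j^*}\bfa_j^*$ silently uses that $\bfM^*$ is symmetric (orthonormal eigenvectors, null space orthogonal to the range); the lemma as transcribed says only "rank-$r$ matrix," so you should state this assumption explicitly — it is harmless here since the lemma is only ever invoked with $\bfM^*=\bm{\Sigma}_k^*$ and $\bfH=\bm{\Sigma}_k-\bm{\Sigma}_k^*$, both symmetric.
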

To start, using the Neumann expansion (Lemma~\ref{lem:neumann_expansion}), we prove an auxiliary lemma, as stated below.
\label{sec:inner_ub_to_inner_lb}
\begin{lemma}
    \label{lem:neumann_subspace_approx}
    Under the same setting of Lemma~\ref{lem:neumann_expansion}, we further assume that $\sigma_i^* \geq 0$ for all $i\in[r]$, and that $\norm{\bfH}_2\leq \frac{1}{8}\sigma_1^*$. Then we have
    \[
        \norm{\bfa_1 - \sum_{j=1}^r\frac{\sigma_j^*}{\sigma_1}\bfa_1^\top\bfa_j^*\cdot\bfa_j^*}_2\leq \frac{4}{3}\cdot\frac{\norm{\bfH}_2}{\sigma_1}
    \]
\end{lemma}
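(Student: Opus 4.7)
The plan is to start from the Neumann expansion in Lemma~\ref{lem:neumann_expansion} and isolate the $s=0$ term, which is exactly the approximation we want to control. Writing $\bfa_1 = \sum_{j=1}^r \tfrac{\sigma_j^*}{\sigma_1}\bfa_1^\top\bfa_j^* \sum_{s=0}^\infty \sigma_1^{-s}\bfH^s\bfa_j^*$, the $s=0$ contribution is precisely $\sum_{j=1}^r\tfrac{\sigma_j^*}{\sigma_1}\bfa_1^\top\bfa_j^*\,\bfa_j^*$, so the quantity we need to bound is
\[
\bfa_1 - \sum_{j=1}^r\frac{\sigma_j^*}{\sigma_1}\bfa_1^\top\bfa_j^*\,\bfa_j^* \;=\; \sum_{s=1}^\infty \frac{\bfH^s}{\sigma_1^s}\paren{\sum_{j=1}^r\frac{\sigma_j^*}{\sigma_1}\bfa_1^\top\bfa_j^*\,\bfa_j^*},
\]
after pulling $\bfH^s/\sigma_1^s$ out of the inner sum (which is valid since those factors do not depend on $j$).

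The key algebraic observation is then that $\sum_{j=1}^r \sigma_j^*(\bfa_1^\top\bfa_j^*)\bfa_j^* = \bfM^*\bfa_1$ because $\bfM^* = \sum_{j=1}^r\sigma_j^*\bfa_j^*\bfa_j^{*\top}$. Using the eigen-equation $\bfM\bfa_1 = \sigma_1\bfa_1$ and $\bfM = \bfM^*+\bfH$, we can replace $\bfM^*\bfa_1 = \sigma_1\bfa_1 - \bfH\bfa_1$, which lets us write the bracketed vector as $\bfa_1 - \bfH\bfa_1/\sigma_1$, a vector of norm at most $1 + \norm{\bfH}_2/\sigma_1$.

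The remaining step is a geometric-series bound. Setting $\epsilon = \norm{\bfH}_2/\sigma_1$, triangle inequality and submultiplicativity give
\[
\norm{\bfa_1 - \sum_{j=1}^r\frac{\sigma_j^*}{\sigma_1}\bfa_1^\top\bfa_j^*\,\bfa_j^*}_2 \;\leq\; \paren{\sum_{s=1}^\infty \epsilon^s}\paren{1+\epsilon} \;=\; \frac{\epsilon(1+\epsilon)}{1-\epsilon}.
\]
To finish I need $\epsilon \leq 1/7$, which follows from Weyl's inequality: $\sigma_1 \geq \sigma_1^* - \norm{\bfH}_2 \geq \tfrac{7}{8}\sigma_1^*$, hence $\epsilon \leq \tfrac{8}{7}\cdot\tfrac{1}{8} = \tfrac{1}{7}$. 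Plugging in $\epsilon \leq 1/7$, the rational function $(1+\epsilon)/(1-\epsilon)$ is increasing and evaluates to exactly $4/3$ at $\epsilon=1/7$, yielding the desired bound $\tfrac{4}{3}\epsilon = \tfrac{4}{3}\norm{\bfH}_2/\sigma_1$.

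There is no real conceptual obstacle here; the only subtlety is recognizing that the $s=0$ term of the Neumann series is precisely the approximation the lemma claims, so that the whole tail can be re-expressed via $\bfM^*\bfa_1$ and then collapsed by the eigen-equation of $\bfM$. The condition $\norm{\bfH}_2 \leq \tfrac{1}{8}\sigma_1^*$ is exactly what is needed both to invoke Lemma~\ref{lem:neumann_expansion} (which requires $\norm{\bfH}_2 \leq |\sigma_1|$, and $\sigma_1 \geq \tfrac{7}{8}\sigma_1^* > \norm{\bfH}_2$) and to push the constant down to $4/3$ in the final inequality.
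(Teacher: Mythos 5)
Your proof is correct, and the overall structure mirrors the paper's: both invoke Lemma~\ref{lem:neumann_expansion}, isolate the $s=0$ term as the claimed approximation, pull $\bfH^s/\sigma_1^s$ out of the $j$-sum, and then close with a geometric-series estimate in $\epsilon=\norm{\bfH}_2/\sigma_1$.

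The one place you genuinely diverge is in how you bound the factored-out vector $\sum_{j=1}^r \sigma_j^*(\bfa_1^\top\bfa_j^*)\bfa_j^*$. The paper writes it as a matrix-vector product $\bigl[\bfa_1^*\cdots\bfa_r^*\bigr]$ times a coordinate column, then bounds the three factors separately (spectral norm of the Neumann tail, orthonormality of the $\bfa_j^*$, and $|\sigma_j^*|\le\sigma_1^*$, the last needing $\sigma_i^*\ge 0$) to obtain $\norm{\bfM^*\bfa_1}_2\le\sigma_1^*$. You instead recognize the vector as $\bfM^*\bfa_1$ and immediately apply the eigen-equation $\bfM\bfa_1=\sigma_1\bfa_1$, giving $\bfM^*\bfa_1=\sigma_1\bfa_1-\bfH\bfa_1$ with norm at most $\sigma_1+\norm{\bfH}_2$. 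Your intermediate bound is marginally looser (by Weyl, $\sigma_1^*\le\sigma_1+\norm{\bfH}_2$), but both collapse to the same $\frac{4}{3}\epsilon$ at $\epsilon\le 1/7$, and your version has the small advantage that this particular step no longer relies on $\sigma_i^*\ge 0$ (that hypothesis is still used, via $\sigma_1^*>0$, to justify applying Lemma~\ref{lem:neumann_expansion} and the Weyl estimate, so it cannot be dropped from the statement). The eigen-equation shortcut is a nice simplification and makes the argument more transparent than the paper's factor-by-factor matrix bound.
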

\begin{proof}
    Applying Lemma~\ref{lem:neumann_expansion} with $i=1$, we have
    \[
        \bfa_1 = \sum_{j=1}^r\frac{\sigma_j^*}{\sigma_1}\bfa_1^\top\bfa_j^*\paren{\bfa_j^* + \sum_{s=1}^\infty\frac{1}{\sigma_1^s}\bfH^s\bfa_j^*} = \sum_{j=1}^r\frac{\sigma_j^*}{\sigma_1}\bfa_1^\top\bfa_j^*\cdot\bfa_j^* + \sum_{j=1}^r\frac{\sigma_j^*}{\sigma_1}\bfa_1^\top\bfa_j^*\paren{\sum_{s=1}^\infty\frac{1}{\sigma_i^s}\bfH^s\bfa_j^*}
    \]
    Therefore, our quantity of interest can be written as
    \begin{equation}
    \label{eq:theo_4.1}
    \begin{aligned}
        \norm{\bfa_1 - \sum_{j=1}^r\frac{\sigma_j^*}{\sigma_1}\bfa_1^\top\bfa_j^*\cdot\bfa_j^*}_2 & = \norm{\sum_{j=1}^r\frac{\sigma_j^*}{\sigma_1}\bfa_1^\top\bfa_j^*\paren{\sum_{s=1}^\infty\frac{1}{\sigma_1^s}\bfH^s\bfa_j^*}}_2\\
        & = \norm{\sum_{s=1}^{\infty}\frac{\bfH^{s}}{\sigma_1^{s+1}}\sum_{j=1}^r\sigma_j^*\bfa_1^\top\bfa_j^*\cdot\bfa_j^*}_2\\
        & = \norm{\sum_{s=1}^{\infty}\frac{\bfH^{s}}{\sigma_1^{s+1}}\begin{bmatrix}\bfa_1^* & \dots & \bfa_r^*\end{bmatrix}\begin{bmatrix}\sigma_1^*\bfa_1^\top\bfa_1^*\\ \vdots\\ \sigma_r^*\bfa_1^\top\bfa_r^*\end{bmatrix}}_2\\
        & \leq \norm{\sum_{s=1}^{\infty}\frac{\bfH^{s}}{\sigma_1^{s+1}}}_2\cdot\norm{\begin{bmatrix}\bfa_1^* & \dots & \bfa_r^*\end{bmatrix}}_2\cdot\norm{\begin{bmatrix}\sigma_1^*\bfa_1^\top\bfa_1^*\\ \vdots\\ \sigma_r^*\bfa_1^\top\bfa_r^*\end{bmatrix}}_2
    \end{aligned}
    \end{equation}
    Now, we have a product of three terms to analyze. Recall that $\sigma_1^* > 0$. To start, by Weyl's inequality, we have $\sigma_1 \geq \sigma_1^* - \norm{\bfH}_2$. Since $\norm{\bfH}_2\leq\frac{1}{8}\sigma_1^*$, we must have that $\norm{\bfH}_2 \leq \sigma_1$. Applying the triangle inequality and the standard result of the geometric series gives
    \begin{equation}
        \label{eq:theo_4.2}
        \norm{\sum_{s=1}^{\infty}\frac{\bfH^{s}}{\sigma_1^{s+1}}}_2 \leq \sum_{s=1}^\infty\frac{\norm{\bfH}_2^s}{\sigma_1^{s+1}} = \frac{\norm{\bfH}_2}{\sigma_1\paren{\sigma_1 - \norm{\bfH}_2}}
    \end{equation}
    Next, since $\bfa_1^*,\dots,\bfa_r^*$ are orthonormal vectors, we must have that
    \begin{equation}
        \label{eq:theo_4.3}
        \norm{\begin{bmatrix}\bfa_1^* & \dots & \bfa_r^*\end{bmatrix}}_2 \leq 1
    \end{equation}
    For the same reason, and since $\sigma_i^*\geq 0$ for all $i\in[r]$, we also have that
    \begin{equation}
        \label{eq:theo_4.4}
        \norm{\begin{bmatrix}\sigma_1^*\bfa_1^\top\bfa_1^*\\ \vdots\\ \sigma_r^*\bfa_1^\top\bfa_r^*\end{bmatrix}}_2 \leq \sigma_1^*\norm{\begin{bmatrix}\bfa_1^\top\bfa_1^*\\ \vdots\\ \bfa_1^\top\bfa_r^*\end{bmatrix}}_2 = \sigma_1^*\norm{\bfa_1^\top\begin{bmatrix}\bfa_1^*,\dots,\bfa_r^*\end{bmatrix}}_2 \leq \sigma_1^*
    \end{equation}
    To put things together, we plug (\ref{eq:theo_4.2}), (\ref{eq:theo_4.3}), and (\ref{eq:theo_4.4}) into (\ref{eq:theo_4.1}) to get that
    \begin{equation}
        \label{eq:theo_4.5}
        \norm{\bfa_1 - \sum_{j=1}^r\frac{\sigma_j^*}{\sigma_1}\bfa_1^\top\bfa_j^*\cdot\bfa_j^*}_2\leq \frac{\norm{\bfH}_2}{\sigma_1\paren{\sigma_1 - \norm{\bfH}_2}}\cdot 1\cdot\sigma_1^* = \frac{\sigma_1^*\norm{\bfH}_2}{\sigma_1\paren{\sigma_1 - \norm{\bfH}_2}}
    \end{equation}
    Apply again $\norm{\bfH}_2\leq \frac{1}{8}\sigma_1^*$, we have
    \[
        \sigma_1-\norm{\bfH}_2 \geq \sigma_1^* - 2\norm{\bfH}_2 \geq \frac{3}{4}\sigma_1^{*}
    \]
    Therefore, (\ref{eq:theo_4.5}) becomes
    \begin{equation}
        \label{eq:theo_4.6}
        \norm{\bfa_1 - \sum_{j=1}^r\frac{\sigma_j^*}{\sigma_1}\bfa_1^\top\bfa_j^*\cdot\bfa_j^*}_2\leq \frac{4}{3}\cdot\frac{\norm{\bfH}_2}{\sigma_1}
    \end{equation}
\end{proof}
Now, we will use Lemma~\ref{lem:neumann_subspace_approx} to prove the desired statements.
Notice that the Euclidean projection of $\bfa_1$ onto the subspace spanned by $\left\{\bfa_j^*\right\}_{j=1}^r$ (namely $\mathcal{S}^*$) is given by $\sum_{j=1}^r\bfa_1^\top\bfa_j^*\cdot\bfa_j^*$. Therefore, we have
\begin{equation}
    \label{eq:lem7.1}
    \begin{aligned}
        \paren{1 - \sum_{j=1}^r\paren{\bfa_1^\top\bfa_j^*}^2}^{\frac{1}{2}} & = \norm{\bfa_1 - \sum_{j=1}^r\bfa_1^\top\bfa_j^*\cdot\bfa_j^*}_2\\
        & = \min_{\bfz\in\mathcal{S}^*}\norm{\bfa_1 - \bfz}_2\\
        & \leq \norm{\bfa_1 - \sum_{j=1}^r\frac{\sigma_j^*}{\sigma_1}\bfa_1^\top\bfa_j^*\cdot\bfa_j^*}_2
    \end{aligned}
\end{equation}
where the inequality follows from the fact that $\sum_{j=1}^r\frac{\sigma_j^*}{\sigma_1}\bfa_1^\top\bfa_j^*\cdot\bfa_j^*\in\mathcal{S}^*$. With the assumption of the lemma, we can invoke Lemma~\ref{lem:neumann_subspace_approx} to get that 
\[
    \norm{\bfa_1 - \sum_{j=1}^r\frac{\sigma_j^*}{\sigma_1}\bfa_1^\top\bfa_j^*\cdot\bfa_j^*}_2\leq \frac{4}{3}\cdot\frac{\norm{\bfH}_2}{\sigma_1} \leq \frac{32}{21}\cdot\frac{\norm{\bfH}_2}{\sigma_1^*}
\]
Combining with (\ref{eq:lem7.1}) gives
\[
    \sum_{j=1}^r\paren{\bfa_1^\top\bfa_j^*}^2 \geq 1 - \paren{\frac{32}{21}\cdot\frac{\norm{\bfH}_2}{\sigma_1^*}}^2 \geq 1 - 2.4\cdot  \frac{\norm{\bfH}_2^2}{\sigma_1^{*2}}
\]
Moving $\sum_{j=2}^r\paren{\bfa_1^\top\bfa_j^*}^2$ to the right-hand side of the inequality gives
\[
    \paren{\bfa_1^\top\bfa_1^*}^2  \geq 1 - 2.4\cdot  \frac{\norm{\bfH}_2^2}{\sigma_1^{*2}} - \sum_{j=2}^r\paren{\bfa_1^\top\bfa_j^*}^2
\]
The proof then conclude by letting $\bfa_1 = \bfu_k,\sigma_1^* = \lambda_k^*, \bfa_j^* = \bfu_{k+j-1}^*$ for all $j\in[d-k+1]$, and $\bfH = \bm{\Sigma}_k - \bm{\Sigma}_k^*$.

\subsection{Proof of Lemma~\ref{lem:pw_top_eig_diff}}
\label{sec:proof_pw_top_eig_diff}
To start, by the definition of $c_0$, we invoke Lemma~\ref{lem:pw_mat_perturb} with $\bfM = \bm{\Sigma}_k, \bfM^*=\bm{\Sigma}_k^*$ to obtain that
\begin{equation}
    \label{eq:lem14.1}
    \left|\bfv_k^\top\bfu_j^*\right|\leq c_0\paren{\paren{\frac{\lambda_j^*}{\lambda_k}}^t + \frac{\lambda_j^*}{\lambda_k - \lambda_i^*}\norm{\paren{\bm{\Sigma}_k - \bm{\Sigma}_k^*}\bfu_j^*}_2}
\end{equation}
Also, invoking Lemma~\ref{lem:matrix_diff_align}, we have that when $j\geq k$,  
\begin{equation}
    \label{eq:lem14.2}
    \norm{\paren{\bm{\Sigma}_k - \bm{\Sigma}_k^*}\bfu_j^*}_2 \leq \sum_{k'=1}^{k-1}\lambda_{k'}\left|\bfv_{k'}^\top\bfu_j^*\right|
\end{equation}
Plugging the upper bound of $\norm{\paren{\bm{\Sigma}_k - \bm{\Sigma}_k^*}\bfu_j^*}_2$ from (\ref{eq:lem14.2}) into (\ref{eq:lem14.1}), we have that when $j\geq k$
\[
    \left|\bfv_k^\top\bfu_j^*\right| \leq c_0\paren{\paren{\frac{\lambda_j^*}{\lambda_k}}^t + \frac{\lambda_j^*}{\lambda_k - \lambda_j^*}\sum_{k'=1}^{k-1}\lambda_{k'}\left|\bfv_{k'}^\top\bfu_j^*\right|}
\]
Equivalently, we can multiply both sides of the inequality with $\lambda_k$ to obtain that
\begin{equation}
    \label{eq:lem14.3}
    \lambda_k\left|\bfv_k^\top\bfu_j^*\right| \leq \lambda_kc_0\paren{\paren{\frac{\lambda_j^*}{\lambda_k}}^t + \frac{\lambda_j^*}{\lambda_k - \lambda_j^*}\sum_{k'=1}^{k-1}\lambda_{k'}\left|\bfv_{k'}^\top\bfu_j^*\right|}
\end{equation}
Notice that, the form of the inequality in (\ref{eq:lem14.3}) allows us to invoke Lemma~\ref{lem:lem:sum_acc_seq_growth_exact} to analyze the growth of $\lambda_k\left|\bfv_k^\top\bfu_j^*\right|$. In particular, setting $a_k = c_0\lambda_k\paren{\frac{\lambda_j^*}{\lambda_k}}^t$ for $k > 1$ and $a_1 = \lambda_1\left|\bfv_1^\top\bfu_j^*\right|$, and $b_k = \frac{c_0\lambda_k\lambda_j^*}{\lambda_k - \lambda_j^*}$, we can have that $\lambda_k\left|\bfv_k^\top\bfu_j^*\right| \leq Q_k$ for all $k$ for $Q_k$ defined in Lemma~\ref{lem:lem:sum_acc_seq_growth_exact}. Therefore, Lemma~\ref{lem:lem:sum_acc_seq_growth_exact} implies that
\begin{equation}
    \label{eq:lem14.4}
    \sum_{k'=1}^{k-1}\lambda_{k'}\left|\bfv_{k'}^\top\bfu_j^*\right| \leq c_0\sum_{k'=1}^{k-1}\lambda_{k'}\paren{\frac{\lambda_j^*}{\lambda_k}}^t\prod_{\ell=k'+1}^{k-1}\paren{1 + \frac{c_0\lambda_{\ell}\lambda_j^*}{\lambda_{\ell} - \lambda_j^*}}
\end{equation}
Now, we focus on the term $\paren{1 + \frac{c_0\lambda_{\ell}\lambda_j^*}{\lambda_{\ell} - \lambda_j^*}}$ in (\ref{eq:lem14.4}). In particular, we should notice that $\paren{1 + \frac{c_0\lambda_{\ell}\lambda_j^*}{\lambda_{\ell} - \lambda_j^*}}$ is only used in (\ref{eq:lem14.4}) for $\ell < k$. We can observe that
\[
    \frac{c_0\lambda_{\ell}\lambda_j^*}{\lambda_{\ell} - \lambda_j^*} = c_0\paren{\frac{1}{\lambda_j^*} - \frac{1}{\lambda_{\ell}}}^{-1}
\]
This implies that $\paren{1 + \frac{c_0\lambda_{\ell}\lambda_j^*}{\lambda_{\ell} - \lambda_j^*}}$ increases monotonically as $\lambda_{\ell}$ decrease. Recall that $\left|\lambda_{\ell}^* - \lambda_{\ell}\right|\leq \norm{\bm{\Sigma}_k - \bm{\Sigma}_k^*}_F \leq \frac{1}{8}\mathcal{T}_{k,\min}$. This implies that for all $\ell < k$, we must have that $\lambda_{\ell}\geq \lambda_{\ell+1}^*\geq \lambda_{k+1}^* \geq \lambda_k^*$. Thus, we must have that
\begin{equation}
    \label{eq:lem14.5}
    1 + \frac{c_0\lambda_{\ell}\lambda_j^*}{\lambda_{\ell} - \lambda_j^*}  \leq 1 + \frac{c_0\lambda_k^*\lambda_j^*}{\lambda_k^*-\lambda_j^*} \leq 1 + \frac{c_0\lambda_k^*\lambda_{k+1}^*}{\lambda_k^* - \lambda_{k+1}^*}
\end{equation}
Recall that in the statement of Lemma~\ref{lem:pw_top_eig_diff} we defined $\mathcal{G}_k =  1 + \frac{c_0\lambda_k^*\lambda_{k+1}^*}{\lambda_k^* - \lambda_{k+1}^*}$. Then (\ref{eq:lem14.5}) implies that (\ref{eq:lem14.4}) can be simplified Suppose that $\lambda_k \geq \lambda_j^*$, then we have
\begin{equation}
    \label{eq:lem14.6}
    \sum_{k'=1}^{k-1}\lambda_{k'}\left|\bfv_{k'}^\top\bfu_j^*\right| \leq c_0\sum_{k'=1}^{k-1}\lambda_{k'}\mathcal{G}_k^{k-k'-1}\paren{\frac{\lambda_j^*}{\lambda_k}}^{t}\leq c_0\sum_{k'=1}^{k-1}\mathcal{G}_k^{k-k'-1}\paren{\frac{\lambda_j^*}{\lambda_k}}^{t}
\end{equation}
where the last inequality follows from the fact that
\[
    \lambda_{k'}\leq \lambda_{k'}^* +\norm{\bm{\Sigma}_{k'} - \bm{\Sigma}_{k'}^*}_2 \leq \lambda_2^* + \frac{1}{8}\mathcal{T}_{k',\min}\leq  \lambda_1^* = 1
\]
for all $k \geq 2$, and $\lambda_1 = \lambda_1^* = 1$ by definition. Plugging (\ref{eq:lem14.6}) into the bound on $\norm{\paren{\bm{\Sigma}_k - \bm{\Sigma}_k^*}\bfu_j^*}_2$ in (\ref{eq:lem14.2}) gives
\begin{equation}
    \label{eq:lem14.7}
    \norm{\paren{\bm{\Sigma}_k - \bm{\Sigma}_k^*}\bfu_j^*}_2\leq c_0\sum_{k'=1}^{k-1}\mathcal{G}_k^{k-k'-1}\paren{\frac{\lambda_j^*}{\lambda_k}}^{t}
\end{equation}
Now, we can invoke Lemma~\ref{lem:eigen_inner_equal} with $\bfM = \bm{\Sigma}_k,\bfM^*=\bm{\Sigma}_k^*$, and correspondingly $\bfa_1 = \bfu_k, \bfa_j = \bfu_j^*$ to get that
\begin{equation}
    \label{eq:lem14.8}
    \left|\bfu_k^\top\bfu_j^*\right| = \left|\frac{\bfu_k^\top\paren{\bm{\Sigma}_k - \bm{\Sigma}_k^*}\bfu_j^*}{\lambda_k - \lambda_j^*}\right|\leq \frac{\norm{\paren{\bm{\Sigma}_k - \bm{\Sigma}_k^*}\bfu_j^*}_2}{\left|\lambda_k - \lambda_j^*\right|}
\end{equation}
This allows us to plug (\ref{eq:lem14.7}) into (\ref{eq:lem14.8}) to obtain that
\[
    \left|\bfu_k^\top\bfu_j^*\right| \leq \frac{c_0}{\left|\lambda_k - \lambda_j^*\right|}\sum_{k'=1}^{k-1}\mathcal{G}_k^{k-k'-1}\paren{\frac{\lambda_j^*}{\lambda_k}}^{t}
\]
Since $\norm{\bm{\Sigma}_{k'} - \bm{\Sigma}_{k'}^*}_F \leq \frac{1}{8}\mathcal{T}_{k,\min}$ for all $k'\in[k-1]$, then we have that $\lambda_{k-1}\geq \lambda_{k-1}^*-\frac{1}{8}\mathcal{T}_{k-1} \geq \lambda_{k}^*+\frac{1}{8}\mathcal{T}_{k-1} \geq \lambda_k$. Apply Lemma~\ref{lem:sum_geo_pow}, we have that as long as $t \geq \frac{\log \mathcal{G}_k}{\log \lambda_{k'} - \lambda_{k'+1}}$ for all $k'\in[k-1]$, 
\begin{align*}
    \left|\bfu_k^\top\bfu_j^*\right| & \leq \frac{c_0}{\left|\lambda_k - \lambda_j^*\right|}\sum_{k'=1}^{k-1}\mathcal{G}_k^{k-k'-1}\paren{\frac{\lambda_j^*}{\lambda_k}}^{t}\\
    & = \frac{c_0\lambda_j^{*t}\mathcal{G}_k^{k-1}}{\left|\lambda_k - \lambda_j^*\right|}\sum_{k'=1}^{k-1}\frac{\paren{\lambda_{k'}^{-1}}^t}{\mathcal{G}_k^{k'}}\\
    & \leq \frac{c_0\lambda_j^{*t}\mathcal{G}^{k-1}}{\left|\lambda_k - \lambda_j^*\right|}\cdot\paren{1 - \mathcal{G}_k\cdot\max_{k'\in[k-1]}\paren{\frac{\lambda_{k'+1}}{\lambda_{k'}}}^{t}}^{-1}\frac{\lambda_{k-1}^{-t}}{\mathcal{G}_k^{k-1}}\\
    & = \frac{c_0}{\left|\lambda_k - \lambda_j^*\right|}\paren{1 - \mathcal{G}_k\cdot\max_{k'\in[k-1]}\paren{\frac{\lambda_{k'+1}}{\lambda_{k'}}}^{t}}^{-1}\paren{\frac{\lambda_j^*}{\lambda_{k-1}}}^{t}
\end{align*}
In fact, as long as $t \geq \frac{\log 2\mathcal{G}_k}{\log \lambda_{k'} - \lambda_{k'+1}}$, we must have that for all $k'\in[k-1]$,
\[
    \mathcal{G}_k\cdot\max_{k'\in[k-1]}\paren{\frac{\lambda_{k'+1}}{\lambda_{k'}}}^{t} \leq \frac{1}{2}
\]
which implies that for all $j > k$
\begin{equation}
    \label{eq:lem14.9}
    \left|\bfu_k^\top\bfu_j^*\right| \leq\frac{2c_0}{\left|\lambda_k - \lambda_j^*\right|}\paren{\frac{\lambda_j^*}{\lambda_{k-1}}}^{t} \leq \frac{16c_0}{7\mathcal{T}_k}\paren{\frac{\lambda_j^*}{\lambda_{k}^*}}^{t}
\end{equation}
where the last inequality follows from 
\[
    \left|\lambda_k - \lambda_j^*\right| \geq \left|\lambda_k^* - \frac{1}{8}\mathcal{T}_k - \lambda_j^*\right| \geq \frac{7}{8}\mathcal{T}_k 
\]
since $\lambda_k \geq \lambda_k^* - \norm{\bm{\Sigma}_k - \bm{\Sigma}_k^*}_2 \geq \lambda_k^* - \frac{1}{8}\mathcal{T}_k$, and $\lambda_k^* \geq \lambda_j^* + \mathcal{T}_k$ for $j > k$.
Applying Lemma~\ref{lem:inner_ub_to_inner_lb}, we have that
\[
    \paren{\bfu_k^\top\bfu_k^*}^2 \geq 1 -\frac{2.4}{\lambda_k^{*2}}\norm{\bm{\Sigma}_k-\bm{\Sigma}_k^*}_2^2 - \sum_{j=k+1}^d\paren{\bfu_k^\top\bfu_j^*}^2
\]
Therefore, $\norm{\bfu_k - \bfu_k^*}_2^2$ can be bounded as
\begin{equation}
    \label{eq:lem14.10}
    \norm{\bfu_k - \bfu_k^*}_2^2 \leq 2 \paren{1 - \paren{\bfu_k^\top\bfu_k^*}^2} \leq \frac{4.8}{\lambda_k^*}\norm{\bm{\Sigma}_k-\bm{\Sigma}_k^*}_2^2 + \sum_{j=k+1}^d\paren{\bfu_k^\top\bfu_j^*}^2
\end{equation}
We finally plug (\ref{eq:lem14.9}) into (\ref{eq:lem14.10}) to get
\[
    \norm{\bfu_k - \bfu_k^*}_2^2\leq \frac{4.8}{\lambda_k^*}\norm{\bm{\Sigma}_k-\bm{\Sigma}_k^*}_2^2 + \frac{11c_0^2}{2\mathcal{T}_k^2}\sum_{j=k+1}^d\paren{\frac{\lambda_j^*}{\lambda_{k}^*}}^{2t}
\]
as desired. Lastly, we should notice that since $\norm{\bm{\Sigma}_k - \bm{\Sigma}_k^*}_F \leq \frac{1}{8}\mathcal{T}_{k,\min} \leq \lambda_{k'}^* - \lambda_{k'+1}^*$ for all $k'\in[k]$, it must hold that $\lambda_{k'} \geq \lambda_{k'}^* - \norm{\bm{\Sigma}_k - \bm{\Sigma}_k^*}_F \geq \frac{1}{8}\lambda_{k'+1}^* + \frac{7}{8}\lambda_{k'}^*$ and $\lambda_{k'+1} \leq \lambda_{k'+1}^* + \norm{\bm{\Sigma}_k - \bm{\Sigma}_k^*}_F\leq  \frac{1}{8}\lambda_{k'}^* + \frac{7}{8}\lambda_{k'+1}^*$ for all $k'\in[k-1]$. Thus, the condition that $t \geq \frac{\log 2\mathcal{G}_k}{\log \lambda_{k'} - \lambda_{k'+1}}$ can be satisfied as long as
\[
    t \geq \log2\mathcal{G}_k\paren{\log \frac{\frac{1}{8}\lambda_{k'+1}^* + \frac{7}{8}\lambda_{k'}^*}{\frac{1}{8}\lambda_{k'}^* + \frac{7}{8}\lambda_{k'+1}^*}}^{-1} = \log2\mathcal{G}_k\paren{\log \frac{\lambda_{k'+1}^* + 7\lambda_{k'}^*}{\lambda_{k'}^* + 7\lambda_{k'+1}^*}}^{-1}
\]
which is provided as an assumption in the lemma. This then finishes the proof.

\subsection{Proof of Theorem~\ref{thm:power-iteration}}
\label{sec:proof_power_iteration}
Let $\hat{K}$ be the smallest integer such that $\norm{\bm{\Sigma}_{\hat{K}} - \bm{\Sigma}_{\hat{K}}^*}_2 > \frac{1}{8}\mathcal{T}_{K,\min}$, and let $K'=\min\left\{\hat{K},K+1\right\}$. Therefore, for all $k < K'$, we shall have that $\norm{\bm{\Sigma}_k-\bm{\Sigma}_k^*}_2\leq \frac{1}{8}\mathcal{T}_{K,\min}$. Moreover, since $t \geq \log2\mathcal{G}_k\paren{\log \frac{\lambda_{k'+1}^* + 7\lambda_{k'}^*}{\lambda_{k'}^* + 7\lambda_{k'+1}^*}}^{-1}$, we can invoke Lemma~\ref{lem:pw_top_eig_diff} to have that 
\begin{equation}
    \label{eq:thm9.1}
    \norm{\bfu_k - \bfu_k^*}_2^2\leq \frac{4.8}{\lambda_k^*}\norm{\bm{\Sigma}_k-\bm{\Sigma}_k^*}_2^2 + \frac{11c_0^2}{2\mathcal{T}_k^2}\sum_{j=k+1}^d\paren{\frac{\lambda_j^*}{\lambda_{k}^*}}^{2t}
\end{equation}
Applying Lemma~\ref{lem:sum_geo_pow} again with $g = 1$ and $p_{k'} = \lambda_{d-k' +1}^*$ yields that, as long as $t \geq \frac{1}{\log \lambda_{j}^* - \lambda_{j+1}^*}$ for all $j > k$, it holds that
\[
    \sum_{j = k+1}^d\lambda_j^{*2t} \leq \paren{1 - \max_{j > k}\paren{\frac{\lambda_j^*}{\lambda_{j+1}^*}}^{2t}} \lambda_{k+1}^{*2t} \leq 2\lambda_{k+1}^{*2t}
\]
This transforms (\ref{eq:thm9.1}) into
\begin{equation}
    \label{eq:thm9.2}
    \norm{\bfu_k - \bfu_k^*}_2^2\leq\frac{4.8}{\lambda_k^*}\norm{\bm{\Sigma}_k-\bm{\Sigma}_k^*}_2^2 + \frac{11c_0^2}{\mathcal{T}_k^2}\paren{\frac{\lambda_{k+1}^*}{\lambda_k^*}}^{2t}
\end{equation}
Now, we invoke Lemma~\ref{lem:matrix_diff_propagate} to obtain that, as long as $\norm{\bm{\delta}_k}\leq \frac{1}{6}$, we shall have that
\begin{equation}
    \label{eq:thm9.3}
    \norm{\bm{\Sigma}_{k+1} - \bm{\Sigma}_{k+1}^*}_F \leq 3\norm{\bm{\Sigma}_k - \bm{\Sigma}_k^*}_F + 2\lambda_k^*\norm{\bfu_k - \bfu_k^*}_2 + 5\lambda_k^*\norm{\bm{\delta}_k}_2
\end{equation}
We first plug (\ref{eq:thm9.2}) into (\ref{eq:thm9.3}) by applying $\sqrt{a+b}\leq \sqrt{a} + \sqrt{b}$ to have that
\begin{equation}
    \label{eq:thm9.4}
    \norm{\bm{\Sigma}_{k+1} - \bm{\Sigma}_{k+1}^*}_F \leq 8\norm{\bm{\Sigma}_k - \bm{\Sigma}_k^*}_F + \lambda_k^*\paren{5\norm{\bm{\delta}_k}_2 + \frac{7c_0}{\mathcal{T}_{k}}\paren{\frac{\lambda_{k+1}^*}{\lambda_k^*}}^t}
\end{equation}
Now, we can apply Lemma~\ref{lem:prod_sum_seq} to unroll (\ref{eq:thm9.4}) to get that
\begin{equation}
    \label{eq:thm9.5}
    \norm{\bm{\Sigma}_k - \bm{\Sigma}_k^*}_F \leq \sum_{k'=1}^{k-1}8^{k-k'}\lambda_{k'}^*\paren{5\norm{\bm{\delta}_{k'}}_2 + \frac{7c_0}{\mathcal{T}_{k'}}\paren{\frac{\lambda_{k'+1}^*}{\lambda_{k'}^*}}^t}
\end{equation}
Next, we can plug (\ref{eq:thm9.5}) into (\ref{eq:thm9.2}) to get that
\begin{equation}
    \label{eq:thm9.6}
    \norm{\bfu_k - \bfu_k^*}_2 \leq 3\sum_{k'=1}^{k-1}8^{k-k'}\frac{\lambda_{k'}^*}{\lambda_k^*}\paren{5\norm{\bm{\delta}_{k'}}_2 + \frac{7c_0}{\mathcal{T}_{k'}}\paren{\frac{\lambda_{k'+1}^*}{\lambda_{k'}^*}}^t} + \frac{4c_0}{\mathcal{T}_k}\paren{\frac{\lambda_{k+1}^*}{\lambda_k^*}}^t
\end{equation}
Finally, recall the definition of $\bfv_k = \bfu_k + \bm{\delta}_k$. Therefore, we have
\begin{align*}
    \norm{\bfv_k - \bfu_k^*}_2 & \leq 3\sum_{k'=1}^{k-1}8^{k-k'}\frac{\lambda_{k'}^*}{\lambda_k^*}\paren{5\norm{\bm{\delta}_{k'}}_2 + \frac{7c_0}{\mathcal{T}_{k'}}\paren{\frac{\lambda_{k'+1}^*}{\lambda_{k'}^*}}^t} + \frac{4c_0}{\mathcal{T}_k}\paren{\frac{\lambda_{k+1}^*}{\lambda_k^*}}^t + \norm{\bm{\delta}_k}_2\\
    & \leq3\sum_{k'=1}^{k}8^{k-k'}\frac{\lambda_{k'}^*}{\lambda_k^*}\paren{5\norm{\bm{\delta}_{k'}}_2 + \frac{7c_0}{\mathcal{T}_{k'}}\paren{\frac{\lambda_{k'+1}^*}{\lambda_{k'}^*}}^t}
\end{align*}
for all $k\in[K']$.
It thus remains to show that $\hat{K} > K$. In this way, we will have $K' = K$ and the theorem is then proved. Assume, for the sake of contradiction, that $\hat{K}\leq K$. Then we shall have that there exists $k\in[K]$ such that $\norm{\bm{\Sigma}_k - \bm{\Sigma}_k^*}_2 > \frac{1}{8}\mathcal{T}_{K,\min}$. To reach a contradiction, it thus remains to show that for all $k\in[K]$
\[
    \norm{\bm{\Sigma}_k - \bm{\Sigma}_k^*}_2 \leq \frac{1}{8}\mathcal{T}_{K,\min}
\]
By (\ref{eq:thm9.5}), it suffices to guarantee that
\begin{equation}
    \label{eq:thm9.7}
    \sum_{k'=1}^{K-1}8^{K-k'}\lambda_{k'}^*\paren{5\norm{\bm{\delta}_{k'}}_2 + \frac{7c_0}{\mathcal{T}_{k'}}\paren{\frac{\lambda_{k'+1}^*}{\lambda_{k'}^*}}^t} \leq \frac{1}{8}\mathcal{T}_{K,\min}
\end{equation}
Moreover, recall that $\bfv_k$ is the output of the $k$th power iteration procedure. Therefore, we can invoke Lemma~\ref{lem:pw_err_eig_align} to have that
\[
    \norm{\bm{\delta}_k}_2 \leq 2\paren{\frac{\lambda_{2}\paren{\bm{\Sigma}_k}}{\lambda_k}}^t
\]
Let $r_k = \max\left\{\frac{\lambda_{2}\paren{\bm{\Sigma}_k}}{\lambda_k}, \frac{\lambda_{k+1}^*}{\lambda_k^*}\right\}$. Therefore, 
\[
    5\norm{\bm{\delta}_{k'}}_2 + \frac{7c_0}{\mathcal{T}_{k'}}\paren{\frac{\lambda_{k'+1}^*}{\lambda_{k'}^*}}^t \leq \frac{17c_0}{\mathcal{T}_{k'}}r_{k'}^t
\]
Thus, to guarantee (\ref{eq:thm9.7}), it suffices to guarantee that
\[
    \sum_{k'=1}^{K-1}8^{K-k'}\frac{\lambda_{k'}^*}{\mathcal{T}_{k'}}r_{k'}^t \leq \frac{\mathcal{T}_{K,\min}}{140c_0}
\]
which, since $\lambda_k\leq \lambda_1 = \lambda_1^* = 1$, is satisfied as long as 
\[
    t \geq \frac{3(K-k) + \log \frac{140c_0K}{\mathcal{T}_{K,\min}^2}}{\log r_{k}^{-1}}
\]
Since $\norm{\bm{\Sigma}_k - \bm{\Sigma}_k^*}_F\leq \frac{1}{8}\mathcal{T}_{k,\min}$, we have that $r_k \leq \frac{7\lambda_{k+1}^* + \lambda_k}{7\lambda_k^* + \lambda_{k+1}^*}$.  Therefore, the requirement on $t$ becomes
\[
    t \geq \paren{\log\frac{7\lambda_{k}^* + \lambda_{k+1}^*}{7\lambda_{k+1}^* + \lambda_{k}^*}}^{-1}\paren{3(K-k) + \log \frac{140c_0K}{\mathcal{T}_{K,\min}^2}}
\]
for all $k\in[K]$, which is provided as an assumption of the Theorem. This finishes the proof.

\section{Auxiliary Lemmas}
\label{sec:auxiliary_lem}
\begin{lemma}
    \label{lem:true_deflate_eigen_decomp}
    Let $\left\{\bm{\Sigma}_k^*\right\}_{k=1}^d$ be defined in (\ref{eq:true-deflation-mats}). For all $k\in[d]$, we have
    \[
        \bm{\Sigma}^*_k = \sum_{j=k}^d\lambda_j^*\bfu_j^*\bfu_j^{*\top};\quad \bm{\Sigma}_k^*\bfu_{\ell}^* = \begin{cases}
            \lambda_{\ell}^*\bfu_{\ell}^* & \text{if } {\ell}\geq k\\
            \bm{0} & \text{if } {\ell} < k
        \end{cases}
    \]
\end{lemma}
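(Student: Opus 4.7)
The plan is a straightforward induction on $k$, leveraging the orthonormality of the eigenbasis $\{\bfu_j^*\}_{j=1}^d$ of $\bm{\Sigma}$ guaranteed by Assumption~\ref{assu:main-assumption}. The second identity will follow essentially for free from the first by applying $\bm{\Sigma}_k^*$ in its spectral form to an arbitrary $\bfu_\ell^*$, so the main content is to establish the first identity.

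\textbf{Base case ($k=1$).} By definition, $\bm{\Sigma}_1^* = \bm{\Sigma}$, and since Assumption~\ref{assu:main-assumption} gives an orthonormal eigendecomposition $\bm{\Sigma} = \sum_{j=1}^d \lambda_j^* \bfu_j^* \bfu_j^{*\top}$, the first identity is immediate.

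\textbf{Inductive step.} Assume the first identity holds at index $k$, i.e., $\bm{\Sigma}_k^* = \sum_{j=k}^d \lambda_j^* \bfu_j^* \bfu_j^{*\top}$. Orthonormality of $\{\bfu_j^*\}$ yields $\bm{\Sigma}_k^* \bfu_k^* = \lambda_k^* \bfu_k^*$ and hence $\bfu_k^{*\top} \bm{\Sigma}_k^* \bfu_k^* = \lambda_k^*$. Substituting into the recurrence (\ref{eq:true-deflation-mats}) gives
\[
\bm{\Sigma}_{k+1}^* \;=\; \bm{\Sigma}_k^* - \lambda_k^* \bfu_k^* \bfu_k^{*\top} \;=\; \sum_{j=k+1}^d \lambda_j^* \bfu_j^* \bfu_j^{*\top},
\]
which establishes the first identity at index $k+1$ and closes the induction.

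For the second identity, having proved the first identity for every $k$, I apply $\bm{\Sigma}_k^*$ to an arbitrary $\bfu_\ell^*$ using its spectral form: $\bm{\Sigma}_k^* \bfu_\ell^* = \sum_{j=k}^d \lambda_j^* \bfu_j^* (\bfu_j^{*\top} \bfu_\ell^*)$. When $\ell \geq k$, orthonormality collapses the sum to the single term $\lambda_\ell^* \bfu_\ell^*$; when $\ell < k$, the index $\ell$ does not appear in the sum and every inner product vanishes, giving $\bm{0}$. There is no genuine obstacle in this argument: the entire lemma is a mechanical consequence of the recursive definition in (\ref{eq:true-deflation-mats}) combined with orthonormality of the eigenbasis, and the only care required is tracking the summation index during the inductive subtraction.
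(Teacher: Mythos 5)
Your proof is correct and follows essentially the same route as the paper's: induction on $k$ for the spectral form of $\bm{\Sigma}_k^*$, using orthonormality of $\{\bfu_j^*\}$ to collapse $\bfu_k^*\bfu_k^{*\top}\bm{\Sigma}_k^*\bfu_k^*\bfu_k^{*\top}$ to $\lambda_k^*\bfu_k^*\bfu_k^{*\top}$ in the inductive step, and then reading off the action on $\bfu_\ell^*$ from the resulting sum.
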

\begin{proof}
    We shall prove the statement by induction. For the base case, when $k = 1$, we have $\bm{\Sigma}^*_k = \bm{\Sigma}$, and, by the form of the eigen-decomposition, we have
    \[
        \bm{\Sigma}^*_k = \sum_{j=1}^d\lambda_j^*\bfu_j^*\bfu_j^{*\top};\quad \bfu_k^*\bfu_k^{*\top}\bm{\Sigma}^*_k\bfu_k^*\bfu_k^{*\top} = \lambda_k^*\bfu_k^*\bfu_k^{*\top}
    \]
    Now, suppose that we have
    \[
        \bm{\Sigma}^*_{k} = \sum_{j=k}^d\lambda_j^*\bfu_j^*\bfu_j^{*\top}
    \]
    We can see that 
    \[
        \bfu_k^*\bfu_k^{*\top}\bm{\Sigma}^*_{k}\bfu_k^*\bfu_k^{*\top} = \sum_{j=k}^d\lambda_j^*\paren{\bfu_k^{*\top}\bfu_j^*}^2 \bfu_k^*\bfu_k^{*\top} = \lambda_k^*\bfu_k^*\bfu_k^{*\top}
    \]
    where the last equality follows from the orthogonality between $\bfu_k^*$ and $\bfu_j^*$ when $j\neq k$. Therefore
    \[
        \bm{\Sigma}^*_{k+1} = \sum_{j=k}^d\lambda_j^*\bfu_j^*\bfu_j^{*\top} - \lambda_k^*\bfu_k^*\bfu_k^{*\top} = \sum_{j=k+1}^d\lambda_j^*\bfu_j^*\bfu_j^{*\top}
    \]
    which shows the inductive step and thus proves the first statement. To see the second statement, we notice that $\bfu_j^{*\top}\bfu_{\ell}^* = 0$ for all $j\neq \ell$ and $\bfu_j^{*\top}\bfu_{\ell}^* = 0$ if $j = \ell$. Therefore
    \[
        \bm{\Sigma}_k^*\bfu_{\ell}^* = \sum_{j=k}^d\bfu_j^{*\top}\bfu_{\ell}^*\cdot\lambda_j^*\bfu_j^* = \begin{cases}
            \lambda_{\ell}^*\bfu_{\ell}^* & \text{if } {\ell}\geq k\\
            \bm{0} & \text{if } {\ell} < k
        \end{cases}
    \]
\end{proof}

\begin{lemma}
    \label{lem:pw_convergence}
    Consider the power iteration procedure in (\ref{eq:power_iteration}). Let $\left\{\sigma_j\right\}_{j=1}^r$ be the eigenvalues of $\bfM$ in (\ref{eq:power_iteration}). Then we have
    \[
        \min_{s\in\{\pm 1\}}\norm{s\cdot\bfx_t - \bfa_1}_2^2 \leq 2\paren{\frac{\sigma_2}{\sigma_1}}^t
    \]
\end{lemma}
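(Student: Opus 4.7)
The plan is to leverage the symmetry of $\bfM$ to diagonalize the iterates in the eigenbasis, then exploit the spectral gap $\sigma_2/\sigma_1$ directly. Since $\bfM$ is symmetric, its eigenvectors $\{\bfa_j\}_{j=1}^r$ (together with a basis for the null space) form an orthonormal basis of $\R^d$. Write the (WLOG unit-norm) initialization as $\bfx_0 = \sum_{j=1}^r \alpha_j \bfa_j + \bfn_0$, where $\bfn_0$ lies in $\ker(\bfM)$ and therefore vanishes after one application of $\bfM$. Iterating gives $\bfM^t \bfx_0 = \sum_{j=1}^r \alpha_j \sigma_j^t \bfa_j$, so that $\bfx_t = \bfM^t\bfx_0 / \|\bfM^t\bfx_0\|_2$ satisfies
\[
\bfx_t^\top \bfa_1 \;=\; \frac{\alpha_1 \sigma_1^t}{\sqrt{\sum_{j=1}^r \alpha_j^2 \sigma_j^{2t}}}.
\]

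Next I would convert an inner-product bound into the Euclidean bound asked for by the lemma. Since both $\bfx_t$ and $\bfa_1$ are unit vectors, $\|s\bfx_t - \bfa_1\|_2^2 = 2 - 2 s\,\bfx_t^\top \bfa_1$, and choosing $s = \operatorname{sign}(\bfx_t^\top \bfa_1)$ yields $\min_{s\in\{\pm1\}} \|s\bfx_t - \bfa_1\|_2^2 = 2(1 - |\bfx_t^\top\bfa_1|)$. Using the elementary inequality $1 - y \le 1 - y^2$ for $y \in [0,1]$ with $y = |\bfx_t^\top\bfa_1|$, this is at most $2(1 - (\bfx_t^\top\bfa_1)^2)$. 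Substituting the expression above,
\[
1 - (\bfx_t^\top\bfa_1)^2 \;=\; \frac{\sum_{j\ge 2} \alpha_j^2 \sigma_j^{2t}}{\sum_{j\ge 1} \alpha_j^2 \sigma_j^{2t}} \;\le\; \frac{\sigma_2^{2t}\sum_{j\ge 2}\alpha_j^2}{\alpha_1^2 \sigma_1^{2t}} \;\le\; \frac{1}{\alpha_1^2}\left(\frac{\sigma_2}{\sigma_1}\right)^{2t},
\]
using $\sigma_j \le \sigma_2$ for $j\ge 2$ in the numerator and keeping only the $j=1$ term in the denominator.

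It remains to convert this $2t$-order bound into the $t$-order bound the lemma states. This is where the implicit (but standard, see the context of Theorem~\ref{thm:power-iteration}) nondegeneracy condition $|\alpha_1| = |\bfx_0^\top \bfa_1| \ge c_0^{-1}$ enters: once $t$ is large enough that $c_0^2 (\sigma_2/\sigma_1)^t \le 1$, the factor $\alpha_1^{-2}(\sigma_2/\sigma_1)^{2t}$ simplifies to $(\sigma_2/\sigma_1)^t$, and we obtain
\[
\min_{s\in\{\pm 1\}} \|s\bfx_t - \bfa_1\|_2^2 \;\le\; 2\left(\frac{\sigma_2}{\sigma_1}\right)^t.
\]

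The main obstacle, as the sketch above makes clear, is the loss of a factor of $(\sigma_2/\sigma_1)^t$ in passing from the naturally arising quadratic rate $(\sigma_2/\sigma_1)^{2t}$ to the stated linear rate. In a vacuum this step is false, so the proof must either appeal to an implicit initialization hypothesis (which is exactly the condition $|\bfx_{0,k}^\top \bfu_k| \ge c_0^{-1}$ invoked throughout Section~\ref{sec:main_result_pi}) or restrict attention to $t$ at least as large as the threshold indicated in \eqref{eq:thm_pw_t_req_0}. Either route is compatible with how the lemma is subsequently used inside the proof of Theorem~\ref{thm:power-iteration}, so I would simply make the initialization assumption explicit in the statement of the lemma before invoking the inequalities above.
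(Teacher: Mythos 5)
Your proof follows the same overall strategy as the paper's (decompose $\bfx_0$ in the eigenbasis of $\bfM$, compute $\bfx_t^\top\bfa_1$ explicitly, and bound $1 - (\bfx_t^\top\bfa_1)^2$), but you have in fact caught a real bug. The paper's proof asserts that $\norm{\bfM^t\bfx_0}_2 \leq \sigma_1^t$ implies $\left|\bfx_t^\top\bfa_j\right| \leq \left(\sigma_j/\sigma_1\right)^t\left|\bfx_0^\top\bfa_j\right|$, but that inequality requires a \emph{lower} bound $\norm{\bfM^t\bfx_0}_2 \geq \sigma_1^t$ on the normalizing denominator, which is false in general (it only holds when $\alpha_1 = \pm 1$, i.e.\ $\bfx_0 = \pm\bfa_1$). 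The correct lower bound is $\norm{\bfM^t\bfx_0}_2 \geq \sigma_1^t\left|\bfx_0^\top\bfa_1\right|$, which is exactly what produces the $\alpha_1^{-2}$ factor in your derivation. Without the initialization hypothesis $\left|\bfx_0^\top\bfa_1\right| \geq c_0^{-1}$, the lemma is simply false --- take $\bfx_0 = \bfa_2$, so that $\bfx_t = \bfa_2$ for all $t$ and the left-hand side is identically $2$ --- so your insistence on making that assumption explicit is exactly right.

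Two further small observations on the paper's side, both consistent with your reading. First, even granting the (erroneous) inequality, the paper's chain produces $\min_s\norm{s\bfx_t - \bfa_1}_2^2 \leq 2(\sigma_2/\sigma_1)^{2t}$, and the closing line ``Taking a square root for both sides gives the desired result'' does not yield the stated conclusion; it yields $\min_s\norm{s\bfx_t - \bfa_1}_2 \leq \sqrt{2}(\sigma_2/\sigma_1)^t$, whereas the lemma asserts a bound on the \emph{squared} norm with exponent $t$. Second, your device of trading the $c_0^2$ prefactor against one of the two $t$-powers (valid once $c_0^2(\sigma_2/\sigma_1)^t \leq 1$) is a legitimate fix but weakens the rate and adds a threshold on $t$. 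A cleaner repair that matches how the lemma is actually used (to control $\norm{\bm{\delta}_k}_2$ in Theorem~\ref{thm:power-iteration}) would be to state the unconditional bound $\min_s\norm{s\bfx_t - \bfa_1}_2^2 \leq 2c_0^2(\sigma_2/\sigma_1)^{2t}$ under the initialization assumption, which follows directly from your display without the additional hypothesis on $t$.
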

\begin{proof}
    For convenience, we define the sequence $\left\{\hat{\bfx}_k\right\}_{k=0}^\infty$ as 
    \[
        \hat{\bfx}_t = \bfM\hat{\bfx}_{t-1};\quad\hat{\bfx}_0 = \bfx_0
    \]
    Then $\bfx_t$ from (\ref{eq:power_iteration}) can be written as
    \[
        \bfx_t = \frac{\hat{\bfx}_t}{\norm{\hat{\bfx}_t}_2} = \frac{\hat{\bfx}_t}{\norm{\bfM^t\bfx_0}_2}
    \]
    Let $\{\bfa_j\}_{j=1}^d$ be an extended orthogonal basis of $\{\bfa_j\}_{j=1}^r$. Then we can write
    \[
        \bfx_0 = \sum_{j=1}^d\bfx_0^\top\bfa_j\cdot \bfa_j
    \]
    Let $\sigma_j = 0$ for $j\geq r+1$. Then we have that $\bfM\bfa_j = \sigma_j\bfa_j$. Therefore
    \[
        \bfM^t\bfx_0 = \sum_{j=1}^d\bfx_0^\top\bfa_j\cdot\bfM^t\bfa_j = \sum_{j=1}^d\bfx_0^\top\bfa_j\cdot \sigma_j^{t}\bfa_j = \sum_{j=1}^r\bfx_0^\top\bfa_j\cdot\sigma_j^{t}\bfa_j
    \]
    Since $\norm{\bfM^t\bfx_0}_2\leq\sigma_1^t$, we must have that
    \[
        \left|\bfx_t^\top\bfa_j\right| = \frac{\left|\hat{\bfx}_t\top\bfa_j\right|}{\norm{\bfM^t\bfx_0}_2} \leq \paren{\frac{\sigma_j}{\sigma_1}}^t\left|\bfx_0^\top\bfa_j\right|
    \]
    Since $\left\{\bfa_j\right\}_{j=1}^d$ is an orthogonal basis of $\R^d$, we must have that $\norm{\bfx_t}_2^2 = \sum_{j=1}^d\paren{\bfx_t^\top\bfa_j}^2$. Thus
    \[
        \paren{\bfx_t^\top\bfa_1}^2 = 1 -\sum_{j=2}^d\paren{\bfx_t^\top\bfa_j}^2 \geq 1 - \sum_{j=2}^d\paren{\frac{\sigma_j}{\sigma_1}}^{2t}\paren{\bfx_0^\top\bfa_j}^2 \geq 1 - \paren{\frac{\sigma_2}{\sigma_1}}^{2t}
    \]
    where the last inequality follows from $\sigma_j\leq \sigma_2$ for $j\geq 2$ and $\sum_{j=1}^d\paren{\bfx_0^\top\bfa_j}^2 = \norm{\bfx_0}_2^2 = 1$. Therefore, we can conclude that
    \[  
        \min_{s\in\{\pm 1\}}\norm{s\cdot\bfx_t - \bfa_1}_2^2\leq 2\paren{1 - \paren{\bfx_t^\top\bfa_1}^2}\leq 2\paren{\frac{\sigma_2}{\sigma_1}}^{2t}
    \]
    Taking a square root for both sides gives the desired result.
\end{proof}

\begin{lemma}
    \label{lem:lem:sum_acc_seq_growth_exact}
    Consider a sequence of quantities $\left\{Q_k\right\}_{k=1}^\infty$ satisfying
    \[
        Q_k = a_k + b_k\sum_{k'=1}^{k-1}Q_{k'}
    \]
    with some $a_k, b_k\geq 0$ for all $k\in\Z_+$. Setting $Q_1 = a_1$, then we shall have that
    \[
        Q_k = a_k + b_k\sum_{k'=1}^{k-1}a_{k'}\prod_{\ell=k'+1}^{k-1}\paren{1 + b_{\ell}}
    \]
\end{lemma}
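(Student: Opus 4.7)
The plan is to introduce the partial sum $S_k := \sum_{k'=1}^{k-1} Q_{k'}$ and rewrite the given recurrence in terms of $S_k$. By definition, $Q_k = a_k + b_k S_k$, so $S_{k+1} = S_k + Q_k = (1+b_k)S_k + a_k$, which is a first-order linear recurrence in $S_k$ with initial condition $S_2 = Q_1 = a_1$. This reduction is the heart of the argument: converting a recurrence where $Q_k$ depends on all prior $Q_{k'}$ into one where $S_k$ depends only on $S_{k-1}$.

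Next, I would unroll the linear recurrence $S_{k+1} = (1+b_k)S_k + a_k$ using the standard formula for telescoping linear recurrences (or equivalently by induction on $k$). This yields
\[
S_{k} = \sum_{k'=1}^{k-1} a_{k'} \prod_{\ell=k'+1}^{k-1}\paren{1 + b_\ell},
\]
with the convention that the empty product (when $k' = k-1$) equals $1$. A short inductive check confirms the base case $S_2 = a_1$ and the step $S_{k+1} = (1+b_k)S_k + a_k$ are consistent.

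Finally, I would substitute this closed form for $S_k$ back into $Q_k = a_k + b_k S_k$ to obtain
\[
Q_k = a_k + b_k \sum_{k'=1}^{k-1} a_{k'} \prod_{\ell=k'+1}^{k-1}\paren{1 + b_\ell},
\]
which is exactly the claimed expression. No step here is a serious obstacle; the only subtlety is bookkeeping the indices on the product (and verifying the empty-product convention at $k'=k-1$). Since $a_k, b_k \geq 0$, no sign issues or absolute values need to be tracked. The whole argument amounts to recognizing that the ``sum over all past values'' structure collapses to a standard geometric-type recurrence once we track the partial sums.
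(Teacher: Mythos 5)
Your proof is correct and uses essentially the same idea as the paper's: both arguments establish a closed form for the partial sum $\sum_{k'=1}^{k-1} Q_{k'}$ by induction and then substitute it back into the defining recurrence. Your version is slightly cleaner in that you name the partial sum $S_k$ and prove its formula directly via the first-order recurrence $S_{k+1} = (1+b_k)S_k + a_k$, which sidesteps the paper's intermediate step of writing $\sum_{k'=1}^{k-1} Q_{k'} = \tfrac{1}{b_k}(Q_k - a_k)$ (a step that implicitly assumes $b_k \neq 0$, even though only $b_k \geq 0$ is hypothesized).
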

\begin{proof}
    We shall prove by induction start from $Q_1 = a_1$. Suppose that $Q_1,\dots Q_k$ satisfies the characterization. To start, we can see that
    \begin{align*}
        \sum_{k'=1}^{k-1}Q_{k'} = \frac{1}{b_k}\paren{Q_k - a_k} = \sum_{k'=1}^{k-1}a_{k'}\prod_{\ell=k'+1}^{k-1}\paren{1 + b_{\ell}}
    \end{align*}
    Then, by the inductive hypothesis, we have
    \begin{align*}
        \sum_{k'=1}^kQ_{k'} & = Q_k + \sum_{k'=1}^{k-1}Q_{k'}\\
        & = a_k + \paren{b_k+1}\sum_{k'=1}^{k-1}a_{k'}\prod_{\ell=k'+1}^{k-1}\paren{1 + b_{\ell}}\\
        & = a_k + \sum_{k'=1}^{k-1}a_{k'}\prod_{\ell=k'+1}^{k}\paren{1 + b_{\ell}}\\
        & = \sum_{k'=1}^ka_{k'}\prod_{\ell=k'+1}^{k}\paren{1 + b_{\ell}}
    \end{align*}
    This implies that
    \[
        Q_{k+1} = a_{k+1} + b_{k+1}\sum_{k'=1}^{k}Q_{k'} = a_{k+1} + b_{k+1}\sum_{k'=1}^ka_{k'}\prod_{\ell=k'+1}^{k}\paren{1 + b_{\ell}}
    \]
    which finishes the proof.
\end{proof}

\begin{lemma}
    \label{lem:prod_sum_seq}
    Consider a sequence of quantities $\left\{Q_k\right\}_{k=1}^\infty$ satisfying
    \[
        Q_{k+1} = a_kQ_k + b_k
    \]
    with some $a_k, b_k \geq 0$ for all $k\in\Z_+$. Set $b_0 = Q_1$. Then we have that
    \[
        Q_k = \sum_{k'=0}^{k-1}b_{k'}\prod_{j=k'+1}^{k-1}a_j
    \]
\end{lemma}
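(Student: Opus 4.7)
The plan is to prove this closed-form expression by straightforward induction on $k$, since the recurrence $Q_{k+1} = a_k Q_k + b_k$ is linear and first-order. The only subtlety is the indexing convention: the formula uses an empty product $\prod_{j=k'+1}^{k-1} a_j$ when $k' = k-1$, which by convention equals $1$, and the boundary value is encoded through $b_0 = Q_1$ rather than through an initial condition on $Q_0$.

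\textbf{Base case.} For $k=1$, the right-hand side of the claimed formula is
\[
    \sum_{k'=0}^{0} b_{k'} \prod_{j=1}^{0} a_j = b_0 \cdot 1 = Q_1,
\]
where we used the empty-product convention and the definition $b_0 = Q_1$. So the formula holds at $k=1$.

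\textbf{Inductive step.} Assume $Q_k = \sum_{k'=0}^{k-1} b_{k'} \prod_{j=k'+1}^{k-1} a_j$. Applying the recurrence,
\[
    Q_{k+1} \;=\; a_k Q_k + b_k \;=\; a_k \sum_{k'=0}^{k-1} b_{k'} \prod_{j=k'+1}^{k-1} a_j \;+\; b_k.
\]
Absorbing $a_k$ into the product (so $\prod_{j=k'+1}^{k-1} a_j \cdot a_k = \prod_{j=k'+1}^{k} a_j$) and writing the trailing term $b_k$ as $b_k \prod_{j=k+1}^{k} a_j$ (another empty product), we get
\[
    Q_{k+1} \;=\; \sum_{k'=0}^{k-1} b_{k'} \prod_{j=k'+1}^{k} a_j \;+\; b_k \prod_{j=k+1}^{k} a_j \;=\; \sum_{k'=0}^{k} b_{k'} \prod_{j=k'+1}^{k} a_j,
\]
which is exactly the claimed formula at index $k+1$.

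There is no real obstacle here; this is a routine unrolling of a first-order linear recurrence, and the lemma is used as a bookkeeping tool (cited, for instance, in the proofs of Theorem~\ref{thm:main_theorem_1} and Theorem~\ref{thm:power-iteration} to convert recursive bounds into closed-form sums). The only thing to be careful about is consistency of conventions: the shift of indices by $1$ (the recurrence defines $Q_{k+1}$ from $Q_k$, while the sum is indexed from $k'=0$) is handled cleanly by setting $b_0 = Q_1$, so no separate handling of the first summand is necessary.
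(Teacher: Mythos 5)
Your proof is correct and essentially identical to the paper's: both proceed by induction on $k$, with the base case $k=1$ reducing to $b_0 = Q_1$ via the empty-product convention, and the inductive step absorbing $a_k$ into the products and appending the $b_k$ term. The student's write-up is slightly more explicit about the empty-product bookkeeping, but there is no substantive difference.
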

\begin{proof}
    We shall prove by induction. For the base case, let $k= 1$. In this case, we have that
    \[
        Q_1 = \sum_{k'=0}^0b_{k'}\prod_{j=k'+1}^{0}a_j = b_0 = Q_1
    \]
    For the inductive case, assume that the property holds for $k$. Then we have that
    \[
        Q_{k+1} = a_kQ_k+ b_k = a_k\cdot \sum_{k'=0}^{k-1}b_{k'}\prod_{j=k'+1}^{k-1}a_j + b_k = \sum_{k'=0}^{k}b_{k'}\prod_{j=k'+1}^{k}a_j
    \]
    This proves the inductive step and finishes the proof.
\end{proof}

\begin{lemma}
    \label{lem:sum_geo_pow}
    Let $\left\{p_k\right\}_{k=1}^\infty$ be a positive increasing sequence, and $g, t > 0$. Let the series $S_k$ be defined as
    \[
        S_k = \sum_{k'=1}^{k-1}\frac{p_{k'}^t}{g^{k'}}
    \]
    As long as $t > \frac{\log g}{\log p_{k'} - \log p_{k'+1}}$ for all $k'\in[k-1]$, we shall have that
    \[
        S_k \leq \paren{1 - \max_{k'\in[k-1]}g\paren{\frac{p_{k'}}{p_{k'+1}}}^t}^{-1}\frac{p_{k-1}^t}{g^{k-1}}
    \]
\end{lemma}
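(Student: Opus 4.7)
The plan is to reduce the series to a geometric progression whose common ratio is $r \defeq \max_{k'\in[k-1]} g\paren{p_{k'}/p_{k'+1}}^t$, by comparing consecutive summands. The hypothesis on $t$ is exactly what guarantees that $r<1$: since $\{p_k\}$ is strictly increasing we have $\log(p_{k'+1}/p_{k'})>0$, and the assumed lower bound on $t$ is equivalent to $t\cdot\log(p_{k'+1}/p_{k'}) > \log g$, which rearranges to $g\paren{p_{k'}/p_{k'+1}}^t < 1$ for every $k'\in[k-1]$. This preliminary check is the only conceptual step and should be done first.

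Next, I would establish a pointwise bound on each term of $S_k$ in terms of the largest (i.e., last) term. The ratio of consecutive terms of the series is
\[
\frac{p_{k'}^t/g^{k'}}{p_{k'+1}^t/g^{k'+1}} \;=\; g\paren{\frac{p_{k'}}{p_{k'+1}}}^t \;\leq\; r.
\]
Iterating this inequality from $k'$ up to $k-1$ (equivalently, telescoping the product $\prod_{j=k'}^{k-2} g\paren{p_j/p_{j+1}}^t$) gives
\[
\frac{p_{k'}^t}{g^{k'}} \;\leq\; r^{\,k-1-k'}\cdot \frac{p_{k-1}^t}{g^{k-1}} \qquad \text{for all } k'\in[k-1].
\]

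Substituting this bound into the definition of $S_k$ and reindexing the sum via $j = k-1-k'$ gives
\[
S_k \;\leq\; \frac{p_{k-1}^t}{g^{k-1}}\sum_{j=0}^{k-2} r^{j} \;\leq\; \frac{1}{1-r}\cdot\frac{p_{k-1}^t}{g^{k-1}},
\]
where the last inequality uses $r<1$ to sum the geometric series past $k-2$. This is precisely the stated bound. There is no real obstacle in this argument: once one notices that the hypothesis on $t$ forces the term-to-term ratio to be uniformly below $1$, the rest is a direct telescoping-and-geometric-series calculation, and the statement follows without any further analytic input.
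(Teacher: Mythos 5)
Your proof is correct and takes essentially the same route as the paper: define $\gamma = r$, verify $r<1$ from the hypothesis on $t$, bound the ratio of consecutive summands by $r$, telescope to get $\tfrac{p_{k'}^t}{g^{k'}} \le r^{k-1-k'}\tfrac{p_{k-1}^t}{g^{k-1}}$, and sum the geometric series. One small remark: the lemma as printed has the denominator $\log p_{k'} - \log p_{k'+1}$, which is negative for an increasing sequence and makes the stated condition vacuous; you (like the paper's own proof) implicitly read it as $\log p_{k'+1} - \log p_{k'}$, which is the condition actually equivalent to $r<1$.
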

\begin{proof}
    Set $\gamma = \max_{k'\in[k-1]}g\paren{\frac{p_{k'}}{p_{k'+1}}}^t$. Since $t > \frac{\log g}{\log p_{k'+1} - \log p_{k'}}$ for all $k'\in[k-1]$, we must have that $\gamma < 1$. Moreover, by definition of $\gamma$, it must holds that for all $k'\in[k-1]$
    \[
        \frac{p_{k'}^t}{g^{k'}}\leq \gamma\cdot \frac{p_{k'+1}^t}{g^{k'+1}}
    \]
    Thus, we have that for all $k'\in[k-1]$
    \[
        \frac{p_{k'}^t}{g^{k'}}\leq \gamma^{k-k'-1}\cdot \frac{p_{k-1}^t}{g^{k-1}}
    \]
    Therefore $S_k$ can be upper-bounded as
    \[
        S_k\leq \sum_{k'=1}^{k-1}\gamma^{k-k'-1}\cdot \frac{p_{k-1}^t}{g^{k-1}} = \frac{p_{k-1}^t}{g^{k-1}}\sum_{k'=0}^{k-2}\gamma^{k'} \leq \frac{p_{k-1}^t}{g^{k-1}(1-\gamma)}
    \]
\end{proof}

\begin{lemma}   
    \label{lem:pw_err_eig_align}
    Let $\bfx_t$ be the result of running power iteration starting from $\bfx_0$ for $t$ iterations, as defined in (\ref{eq:power_iteration}). Let $\sigma_j,\bfa_j$ be the $j$-th eigenvalue and eigenvector of $\bfM$ in (\ref{eq:power_iteration}). Assume that there exists $c_0 > 0$ such that $\left|\bfx_0^\top\bfa_1\right|\geq c_0^{-1}$. Then for all $j=2,\dots,r$, we have:
    \[
        \left|\bfx_t^\top\bfa_j\right| = \left|\paren{\bfx_t - \bfa_1}^\top\bfa_j\right| \leq c_0\paren{\frac{\sigma_j}{\sigma_1}}^t.
    \]
\end{lemma}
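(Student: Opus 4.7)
\textbf{Proof plan for Lemma~\ref{lem:pw_err_eig_align}.} The approach mirrors the argument used for Lemma~\ref{lem:pw_mat_perturb} but is substantially simpler because the matrix being iterated is $\bfM$ itself, so we can work directly in the eigenbasis of $\bfM$ without introducing any perturbation term.

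First, I would dispose of the equality $\abs{\bfx_t^\top\bfa_j} = \abs{(\bfx_t - \bfa_1)^\top \bfa_j}$. This is immediate from the orthogonality of distinct eigenvectors of the symmetric matrix $\bfM$: for every $j \geq 2$, $\bfa_1^\top \bfa_j = 0$, so $(\bfx_t - \bfa_1)^\top \bfa_j = \bfx_t^\top \bfa_j$.

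Next, for the bound itself, I would assume without loss of generality that $\norm{\bfx_0}_2 = 1$, since $\bfx_t$ (as defined by the normalized recursion in \eqref{eq:power_iteration}) is invariant under positive rescalings of $\bfx_0$. As in the proof of Lemma~\ref{lem:pw_convergence}, introduce the unnormalized iterate $\hat{\bfx}_t = \bfM^t\bfx_0$, so that $\bfx_t = \hat{\bfx}_t / \norm{\hat{\bfx}_t}_2$. Writing $\bfx_0$ in an orthonormal eigenbasis of $\bfM$ (extending $\{\bfa_j\}_{j=1}^r$ with vectors from the kernel of $\bfM$ if needed), for any $j \in \{1,\dots,r\}$ one obtains
\[
    \hat{\bfx}_t^\top \bfa_j = \sigma_j^t \paren{\bfx_0^\top \bfa_j}.
\]

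The remaining two ingredients are straightforward bounds. For the denominator, I would lower bound
\[
    \norm{\hat{\bfx}_t}_2 = \paren{\sum_{i=1}^r \sigma_i^{2t}\paren{\bfx_0^\top\bfa_i}^2}^{1/2} \geq \sigma_1^t\,\abs{\bfx_0^\top\bfa_1} \geq \sigma_1^t\, c_0^{-1},
\]
using the hypothesis $\abs{\bfx_0^\top\bfa_1}\geq c_0^{-1}$. For the numerator, $\abs{\bfx_0^\top\bfa_j} \leq \norm{\bfx_0}_2 = 1$ by Cauchy--Schwarz. Combining these yields
\[
    \abs{\bfx_t^\top\bfa_j} = \frac{\abs{\hat{\bfx}_t^\top\bfa_j}}{\norm{\hat{\bfx}_t}_2} \leq \frac{\sigma_j^t}{\sigma_1^t \,c_0^{-1}} = c_0\paren{\frac{\sigma_j}{\sigma_1}}^t,
\]
which is the claimed inequality.

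There is no real obstacle here; the only subtle point is the use of the eigenvector orthogonality to rewrite the bound in terms of $\bfx_t - \bfa_1$, and this is immediate for symmetric $\bfM$. The lemma is essentially a stripped-down version of Lemma~\ref{lem:pw_mat_perturb} with $\bfH = 0$, so I expect the write-up to be a handful of lines.
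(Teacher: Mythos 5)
Your proposal is correct and follows essentially the same route as the paper: rewrite $\bfx_t^\top\bfa_j$ in the eigenbasis as $\sigma_j^t(\bfx_0^\top\bfa_j)/\norm{\bfM^t\bfx_0}_2$, lower-bound the denominator by the $\bfa_1$-component $\sigma_1^t\abs{\bfx_0^\top\bfa_1}\geq\sigma_1^t c_0^{-1}$, and use orthogonality of $\bfa_1,\bfa_j$ for the stated equality. The only cosmetic difference is that you make the $\norm{\bfx_0}_2=1$ normalization and the Cauchy--Schwarz bound $\abs{\bfx_0^\top\bfa_j}\leq 1$ explicit, whereas the paper bounds the ratio $\abs{\bfx_0^\top\bfa_j/\bfx_0^\top\bfa_1}$ directly; you also correctly write $\abs{\bfx_0^\top\bfa_1}$ in the denominator lower bound, where the paper's displayed inequality has a typo ($\bfa_j$ instead of $\bfa_1$).
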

\begin{proof}
    We notice that for $j\geq 2$, it holds that $\paren{\bfx_t - \bfa_1}^\top\bfa_j = \bfx_t^\top\bfa_j$.
    Since $\bfM$ is symmetric, we have that
    \[
        \bfx_t^\top\bfa_j = \alpha\bfx_0^\top\bfM^t\bfa_j = \alpha\sigma_j^{t}\bfx_0^\top\bfa_j = \frac{\sigma_j^{t}}{\norm{\bfM^t\bfx_0}_2}\bfx_0^\top\bfa_j
    \]
    This brings the focus to $\norm{\bfM^t\bfx_0}_2$. Let $r$ be the rank of $\bfM$. Let $\{\bfa_j\}_{j=1}^d$ be an extended orthogonal basis of $\{\bfa_j\}_{j=1}^r$. Then we can write
    \[
        \bfx_0 = \sum_{j=1}^d\bfx_0^\top\bfa_j\cdot \bfa_j
    \]
    Let $\sigma_j = 0$ for $j\geq r+1$. Then we have that $\bfM\bfa_j = \sigma_j\bfa_j$. Therefore
    \[
        \bfM^t\bfx_0 = \sum_{j=1}^d\bfx_0^\top\bfa_j\cdot\bfM^t\bfa_j = \sum_{j=1}^d\bfx_0^\top\bfa_j\cdot \sigma_j^{t}\bfa_j = \sum_{j=1}^r\bfx_0^\top\bfa_j\cdot\sigma_j^{*t}\bfa_j
    \]
    where the last equality follows from $\sigma_j^{t} = 0$ when $t > 0$. Therefore, according to the Pythagorean Theorem,
    \[
        \norm{\bfM^k\bfx_0}_2 = \paren{\sum_{j=1}^r\sigma_j^{2t}\paren{\bfx_0^\top\bfa_j}^2}^{\frac{1}{2}} \geq \sigma_1^{t}\left|\bfx_0^\top\bfa_j\right|
    \]
    Recall that $\left|\bfx_0^\top\bfa_j\right|\geq c_0^{-1} > 0$. Thus, we have
    \[
        \left|\paren{\bfx_t - \bfa_1}^\top\bfa_j\right| = \frac{\sigma_j^{t}}{\norm{\bfM^t\bfx_0}_2}\left|\bfx_0^\top\bfa_j\right| \leq \paren{\frac{\sigma_j}{\sigma_1}}^t\cdot\left|\frac{\bfx_0^\top\bfa_j}{\bfx_0^\top\bfa_1}\right| \leq \left|\bfx_0^\top\bfa_1\right|^{-1}\paren{\frac{\sigma_j}{\sigma_1}}^t \leq c_0\paren{\frac{\sigma_j}{\sigma_1}}^t
    \]
\end{proof}

\section{Experiment Details of Figure~\ref{fig:dfl_spec_clst}}
\label{sec:exp_details}
We conduct spectral clustering \citep{belkin2003eigenmap} on a subset of the MNIST dataset \citep{deng2012mnist}, namely $\mathcal{S} = \left\{\bfx_i\right\}_{i=1}^n$. In particular, our experiment starts with building a weighted graph $\mathcal{G} = \paren{V, E}$ where each node in the graph $\mathcal{G}$ represents a sample image, and each edge weight is chosen according to
\[
    e_{ij} = \begin{cases}
        \exp\paren{-\frac{1}{2}\norm{\bfx_i-\bfx_j}_2^2} & \text{ if } \bfx_i\in\texttt{rNN}\paren{\bfx_j} \text{ or } \bfx_j\in\texttt{rNN}\paren{\bfx_i}\\
        0 & \text{ otherwise}
    \end{cases}
\]
Here $\texttt{rNN}\paren{\bfx_i}$ represents the set of $r$ Nearest Neighbors of $\bfx_i$. To be more specific, two nodes in the graph are considered adjacent if the two corresponding samples are top-$r$ nearest neighbors, and the weight of the edge between the two nodes is computed using the RBF kernel with unit variance. Next, we compute the degree-normalized graph Laplacian as our similarity matrix
\[
    \bfL = \bfI - \bfD^{-\frac{1}{2}}\bfA\bfD^{-\frac{1}{2}}
\]
where $\bfD\in\R^{n\times n}$ is the diagonal degree matrix, and $\bfA\in\R^{n\times n}$ is the adjacency matrix. We extract the top-$k$ eigenvectors of the similarity matrix $\bfL$ with the deflation algorithm in Algorithm~\ref{alg:main-alg} with $t$ power iteration steps in each call to the $\texttt{PCA}$ sub-routine, which results in the approximate eigenvectors $\bfu_1,\dots,\bfu_k$. We form a matrix $\bfU =[\bfu_1, \dots ,\bfu_k]\in\R^{n\times k}$, and assign the $i$th sample with a new feature vector $\bfv_i$ as the $i$th row of $\bfU$. Lastly, we perform $k'$-means clustering with the newly extracted features $\left\{\bfv_i\right\}_{i=1}^n$, forming clusters $C = \left\{C_j\right\}_{j=1}^{k'}$, where $C_j \subseteq [n]$.
The clustering result is evaluated using the mutual information metric \citep{Hubert1985ComparingP}.
\begin{equation}
    \label{eq:mutual_info}
    \text{MI}\paren{C, C^*} = \sum_{j,j'=1}^{k'}\mathbb{P}\paren{C_j\cap C^*_{j'}}\log\frac{\mathbb{P}\paren{C_j\cap C_{j'}^*}}{\mathbb{P}\paren{C_j}\mathbb{P}\paren{C_{j'}^*}}
\end{equation}
where $\mathbb{P}\paren{C_j} = \frac{|C_j|}{n}$ and ,similarly, $\mathbb{P}\paren{C_j\cap C_{j'}^*} = \frac{\left|C_j\cap C_j^*\right|}{n}$. Here $C^* = \left\{C_j^*\right\}_{j=1}^{k'}$ is the ground-truth clustering with $C_j^* = \{i\in[n]: \text{the label of sample $i$ is $j$}\}$.

In our experiment, we use a subset of MNIST with $n=1000$ randomly drawn samples. When constructing the graph, we use $r=10$, that is, we connect each node to ten of its nearest neighbors. We cluster the samples into $k'=10$ clusters. In Figure~\ref{fig:dfl_spec_clst}, the x-axis represents the number of power iteration steps $t$ in each call to the sub-routine $\texttt{PCA}$, and the y-axis represents the mutual information score in (\ref{eq:mutual_info}). Each line in the figure represents different $k$, which is the number of eigenvectors extracted from $\bfL$.

\end{document}